\documentclass{article} 
\usepackage{iclr2025_conference,times}


\usepackage{amsmath,amsfonts,bm}









\def\eqref#1{equation~\ref{#1}}









\def\1{\bm{1}}










\DeclareMathAlphabet{\mathsfit}{\encodingdefault}{\sfdefault}{m}{sl}
\SetMathAlphabet{\mathsfit}{bold}{\encodingdefault}{\sfdefault}{bx}{n}











\newcommand{\E}{\mathbb{E}}

\newcommand{\R}{\mathbb{R}}

\newcommand{\KL}{D_{\mathrm{KL}}}



\usepackage{amsthm, amsfonts,amssymb,amsmath}
\usepackage{hyperref}
\usepackage{url}
\usepackage{wrapfig}
\usepackage{algorithm}
\usepackage{algpseudocode}
\usepackage{comment}
\usepackage{pifont}
\usepackage{mathrsfs}  
\usepackage{subcaption}
\usepackage{siunitx}
\usepackage{graphicx}
\usepackage{caption}
\usepackage{mathtools}
\mathtoolsset{showonlyrefs}
\usepackage [autostyle, english = american]{csquotes}
\MakeOuterQuote{"}

\theoremstyle{definition}
\newtheorem{assumption}{Assumption}
\newtheorem{definition}{Definition}
\theoremstyle{plain}
\newtheorem{theorem}{Theorem}
\newtheorem*{theorem*}{Theorem}
\title{Provable Accuracy Bounds for \\Hybrid Dynamical Optimization and Sampling}

\author{Matthew X. Burns, Qingyuan Hou \& Michael C. Huang \\
Department of Electrical and Computer Engineering\\
University of Rochester\\
Rochester, NY 14527, USA \\
\texttt{\{mburns13,qhou3\}@ur.rochester.edu} \\
\texttt{michael.huang@rochester.edu} \\
}

%

\newcommand{\diverg}{\nabla\cdot}
\newcommand{\brackets}[1]{\left[#1\right]}
\newcommand{\inner}[2]{\left\langle#1,#2\right\rangle}
\newtheorem{lemma}{Lemma}
\newtheorem{corollary}{Corollary}
\renewcommand{\KL}[2]{\operatorname{D_{\mathrm{KL}}}({#1}\Vert{#2})}
\newcommand{\kl}{\operatorname{D}_{\mathrm{KL}} }

\newcommand{\norm}[1]{\lVert #1 \rVert }
\iclrfinalcopy 
\begin{document}

\maketitle
\begin{abstract} Analog dynamical accelerators (DXs) are a growing sub-field in computer architecture research, offering order-of-magnitude gains in power efficiency and latency over traditional digital methods in several machine learning, optimization, and sampling tasks. However, limited-capacity accelerators require hybrid analog/digital algorithms to solve real-world problems, commonly using large-neighborhood local search (LNLS) frameworks. Unlike fully digital algorithms, hybrid LNLS has no non-asymptotic convergence guarantees and no principled hyperparameter selection schemes, particularly limiting cross-device training and inference.

In this work, we provide non-asymptotic convergence guarantees for hybrid LNLS by reducing to block Langevin Diffusion (BLD) algorithms.
Adapting tools from classical sampling theory, we prove exponential KL-divergence convergence for randomized and cyclic block selection strategies using ideal DXs. With finite device variation, we provide explicit bounds on the 2-Wasserstein bias in terms of step duration, noise strength, and function parameters. Our BLD model provides a key link between established theory and novel computing platforms, and our theoretical results provide a closed-form expression linking device variation, algorithm hyperparameters, and performance.
\end{abstract}
\section{Introduction}\label{sec:intro}
Computing research has long borrowed from the physical sciences. Sampling and optimization algorithms such as simulated annealing~\citep{kirkpatrick_optimization_1984}, parallel tempering~\citep{jearl_parallel_2005}, and  Langevin Monte Carlo (LMC)~\citep{chewi_analysis_2021} were directly inspired by physical processes observed in nature. In like spirit, a growing computer architecture sub-field has proposed leveraging physical dynamics to accelerate computationally expensive workloads using ``dynamical accelerators'' (DXs). Originally, research focused on combinatorial optimization problems~\citep{inagaki_coherent_2016,ushijima-mwesigwa_graph_2017,wang_oim_2019,afoakwa_brim_2021,  mohseni_ising_2022} and matrix-vector multiplication~\cite{xiao_accuracy_2022}. However, the field has expanded to sampling for energy-based model training and inference~\citep{vengalam_supporting_2023} and generative inference in graph neural networks~\citep{wuextending,song_ds-gl_2024}. 

The interest in analog acceleration coincides with novel proposals for ``local update'' algorithms, where layer activations $h$ are solutions to a minimization problem $h^*_\ell=\textrm{argmin}_{h} f(h)$~\citep{scellier_equilibrium_2017,stern_supervised_2021,millidge_predictive_2022,scellier_energy-based_2023}. While costly in digital systems, stochastic analog optimizers can effectively solve $\textrm{argmin}_{h} f(h)$ in minimal time and energy~\citep{wuextending}, making them suitable candidates for local-update learning implementations.

However, real-world problems are typically too large for dynamical accelerators to optimize in their entirety, requiring routines to partition and iteratively sample/optimize subspaces~\citep{booth_partitioning_nodate,sharma_increasing_2022,song_ds-gl_2024}, most commonly using hybrid ``large-neighborhood local search'' (LNLS) frameworks~\citep[][]{ahuja_survey_2002,booth_partitioning_nodate}. In hybrid LNLS, the DX is used to perform alternating sampling/minimization over within-capacity subproblems. However, hybrid LNLS has undergone little theoretical examination. No non-asymptotic convergence bounds yet exist, limiting the appeal of hybrid LNLS compared with more well-understood digital algorithms. Moreover, the effect of algorithm hyperparameters on convergence and their interplay with device non-idealities is unclear. Models trained on one DX may require hyperparameter adjustment, if not outright device-specific retraining, prior to inference on another~\citep{he_noise_2019,long_design_2019}.  Without non-asymptotic analysis linking device variation and accelerator convergence, accelerator adaptation reduces to trial-and-error.

In this work, we provide the first explicit probabilistic convergence guarantees for hybrid LNLS algorithms in activation sampling and optimization: a crucial first step in optimizing and analyzing hybrid DX frameworks. 
We start by reducing hybrid LNLS to block sampling with continuous-time, Langevin diffusion-based sub-samplers, to which we can apply tools from classical sampling analysis. Two block selection rules for ``block Langevin diffusion'' (BLD) are examined, randomized and cyclic, using ideal (Secs.~\ref{sec:rbld} and ~\ref{sec:cbld}) and finite-variation (Sec~\ref{sec:variation}) analog components. Under a log-Sobolev inequality (LSI), we prove that ideal accelerators converge to the target distribution exponentially fast. However, we show that finite device variation incurs a bias in $W_2$ distance, proportional to step duration and dependent on variation magnitude. We illustrate our findings with numerical experiments on a toy Gaussian sampling problem, demonstrating the effect of device variation and hyperparameter choice on convergence. 

Our contributions can be summarized as follows: 
\begin{enumerate}
    \item We provide the first bounds on randomized block diffusions using explicit constants (Theorem~\ref{thm:rbld_kl}), strengthening the results of~\cite{ding_random_2020}
    \item We provide completely novel bounds for cyclic block diffusions (Theorem~\ref{theorem:CBLD_convergence}) by proving a general conditional sampling lemma for Kullback-Liebler divergence (Lemma~\ref{lemma:cyclic_kl_general})
    \item Using a Talagrand transportation inequality, we combine our ideal results with analysis following ~\cite{raginsky_non-convex_2017} to provide non-asymptotic guarantees  for DXs with analog non-idealities (Theorem ~\ref{theorem:w2_conv}), applicable to both sampling and optimization tasks.
\end{enumerate}

\section{Background}
\subsection{Related Works}
\cite{ding_langevin_2021} and \cite{ding_random_2021} proposed and analyzed ``randomized coordinate Langevin Monte Carlo'' (RCLMC) methods for sampling tasks using over and underdamped Langevin dynamics. Their methodology used Wasserstein coupling arguments akin to ~\cite{dalalyan_theoretical_2016}, in contrast to our interpolation arguments following~\cite{vempala_rapid_2019}. Accordingly, the authors assumed a strongly-log concave target distribution: a much stronger assumption than an LSI. Moreover, \cite{ding_random_2021} provided insufficient analysis for the continuous-time case, focusing primarily on the discrete RCLMC algorithm. DX algorithm analysis required continuous-time bounds with explicit constants, necessitating our contributions.

Two algorithms related to BLD garnering recent interest are ``coordinate ascent variational inference'' (CAVI), which performs variational inference over factorized ``mean-field'' distributions~\citep{bhattacharya_convergence_2023,arnese_convergence_2024}, and the split Gibbs sampler (SGS), which alternates sampling over problem variables with augmented priors~\citep{vono_split-and-augmented_2019,vono_efficient_2022}. CAVI is similar to BLD, and indeed the information theoretic analysis by~\cite{lee_gibbs_2022} has a similar structure to our proof of Lemma~\ref{lemma:cyclic_kl_general}. SGS has been likened to ADMM in optimization~\cite{vono_efficient_2022}, indicating there may be an equivalence to BLD akin to classical block optimization~\cite{tibshirani_dykstra_2017}.

A related class of works have analyzed the accuracy of analog matrix-vector multiplication (MVM) accelerators in neural network inference~\citep{klachko_improving_2019,xiao_accuracy_2022}. MVM accelerators are a restricted class of DXs minimizing $\min_{y\in\R^{d}}||y-Wx||^2$: equivalent to performing MVM in the analog domain. Our analysis generalizes MVM analysis and is applicable in more complex analog settings such as generative sampling~\citep{vengalam_supporting_2023,melanson_thermodynamic_2023,wuextending}. 

Optimization-based convergence analyses of specific DX architectures were carried out by~\cite{erementchouk_computational_2022,pramanik_convergence_2023}. Asymptotic convergence in expectation to the global minimizer was proved by~\cite{pramanik_convergence_2023} in the zero-temperature limit with decreasing stepsize, echoing our results in Sec.~\ref{sec:variation}. However, neither work accounted for the effect of device variation or problem partitioning, and both focused on specific DX modalities (nonlinear electronic/optical oscillators) rather than a general model of DX behavior. Information-theoretic analysis conducted by~\cite{dambre_information_2012,hu_tackling_2023} have bounded the asymptotic computational capabilities of DX systems, but not their probabilistic convergence.
\subsection{Dynamical Accelerators}
The first wave of dynamics-accelerated optimizers primarily targeted the Ising Spin Glass (ISG) Hamiltonian from statistical physics, earning the appellation ``Ising Machines''. The ISG Hamiltonian describes quadratic interactions between binary ``spins'', which can be used to solve intractable combinatorial problems~\citep{lucas_ising_2014}. Ising machines have been implemented using quantum spins~\citep{ushijima-mwesigwa_graph_2017}, electronic~\citep{wang_oim_2019,albertsson_ising_2023} and optical~\citep{inagaki_coherent_2016,honjo_100000-spin_2021} oscillator phases, resistively-coupled capacitors~\citep{afoakwa_brim_2021}, and many more besides~\citep{mohseni_ising_2022}. These initial prototypes successfully optimized binary target functions. However, recent architectures have broader applications domains: with support for non-quadratic cost functions~\citep{sharma_combining_2023,bashar_designing_2023,bybee_efficient_2023} and continuous values~\citep{brown_accelerating_2024,wuextending,song_ds-gl_2024}. Since these designs have moved beyond the ISG Hamiltonian, we term this broader class simply as ``dynamical accelerators'' (DXs).

While the physical implementation differs between DXs, several proposals 
can be described by a Langevin stochastic differential equation (SDE)
\begin{equation}\label{eqn:sde_generic}
    dx_t=-\nabla h(x_t, t)dt +\sqrt{2\beta(t) ^{-1}}dW_t
\end{equation}
where $x_t\triangleq x(t)$ is the system state, $dW_t$ is a Brownian noise term, $h(x, t)$ is the deterministic system potential, and $\beta(t)=1/T(t)$ is the (potentially time dependent) inverse pseudo-temperature of the system. 

$x(t)$ represents the continuous, physical degrees of freedom of the optimizer/sampler, such as capacitor voltage~\citep{afoakwa_brim_2021} or oscillator phase~\citep{inagaki_coherent_2016,wang_oim_2019}. Several DX prototypes have been shown to follow forms of Equation~\eqref{eqn:sde_generic}, either intentionally to escape local minima~\citep{wang_oim_2019,sharma_combining_2023,aifer_thermodynamic_2023} or unintentionally to model dynamic environment noise~\citep{wang_coherent_2013}. The potential $h(x,t)$ includes the target function $f(x)$ along with optional time-dependent terms, such as a sub-harmonic injection locking potential for binarization~\citep{wang_oim_2019}. 

DXs are also prone to static ``device variation'' owing to analog non-idealities. Unlike the Brownian term $dW_t$, static non-idealities are not self-averaging, and result in a biased estimate $g_\delta(x)$ of the gradient $\nabla f(x)$. In a quadratic function $f(x)=x^TWx$, for instance, the gradient estimate can be described as $g_\delta(x)= (W+W^T)x+\delta x$, where $\delta_{ij}\sim \mathcal{N}(0,\Delta^2)$ are fixed non-idealities in device components. Previous studies have examined the impact of static variation on binary optimization~\citep{albash_analog_2019} and matrix-vector multiplication~\citep{xiao_accuracy_2022}, but have not extended to non-asymptotic convergence analysis for more general functions over $\R^d$.

\subsection{Langevin Diffusion}
If we restrict our analysis to the time-homogeneous case where $h(x,t)=f(x)$, $\beta(t)=\beta$, the dynamics are Markovian with a constant stationary distribution
\begin{equation}
    \pi_\beta(x)\propto e^{-\beta f(x)}.
\end{equation}
The Langevin SDE
\begin{equation}
    dx_t=-\nabla f(x_t)dt +\sqrt{2\beta ^{-1}}dW_t
\end{equation}
produces a continuous sample path $x(t)$ with each $x(\tau), \tau\geq 0$ acting as a random variable. The law of $x_t$, $\mu_t$ (denoted $\mu_t=\mathcal{L}(x_t)$), is described by the \emph{Fokker-Planck} equation (FPE)
\begin{equation}\label{eqn:fpe}
    \partial_t\mu_t=\beta^{-1} \nabla^2\cdot\mu_t + \nabla\cdot{[\mu_t \nabla f(x_t)]}.
\end{equation}

The Langevin SDE describes the physical evolution of $x_t$, while the FPE describes the change in the sample distribution $\mu_t$ in measure space. If $\pi_\beta$ satisfies a log-Sobolev inequality (LSI, see Sec.~\ref{sec:results}), then $\mu_t$ converges to $\pi_\beta$ exponentially fast~\citep[Theorem 1][]{vempala_rapid_2019}.
\setcounter{theorem}{-1}
\begin{theorem}[LD Convergence (Theorem 1 of \cite{vempala_rapid_2019})]\label{thm:ld_conv}
Suppose $\pi_\beta(x) \propto e^{-\beta f(x)}$ satisfies an LSI with constant $1/\gamma$. Then the distribution $\mu_t$ of the Langevin diffusion at time $t$ satisfies 
\begin{equation}\label{eqn:ld_kl_conv}
    \KL{\mu_t}{\pi} \leq e^{-2\gamma\beta^{-1} t}\KL{\mu_0}{\pi}
\end{equation}
\end{theorem}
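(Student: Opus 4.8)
The plan is to run the standard entropy--dissipation argument for the Langevin diffusion, keeping the inverse temperature $\beta$ explicit throughout. Fix $\pi := \pi_\beta$, write $H(t) := \KL{\mu_t}{\pi}$ and $\rho_t := \mu_t/\pi$, and assume the regularity that makes the formal manipulations legitimate: a smooth, strictly positive density $\mu_t$ with fast-decaying tails, so that $d/dt$ passes through the integral and integration by parts produces no boundary flux. Under the LSI hypothesis together with mild smoothness of $f$, this is available for all $t>0$ (and the statement is vacuous unless $H(0)<\infty$, which is then inherited by $H(t)$).

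First I would compute the dissipation of relative entropy along \eqref{eqn:fpe}. Using $\nabla\log\pi = -\beta\nabla f$, the Fokker--Planck equation can be written in divergence form as $\partial_t\mu_t = \beta^{-1}\,\nabla\cdot\!\big(\mu_t\,\nabla\log(\mu_t/\pi)\big)$. Since $\int \partial_t\mu_t\,dx = 0$, this yields
\begin{equation}
\frac{d}{dt}H(t) \;=\; \int (\partial_t\mu_t)\,\log\frac{\mu_t}{\pi}\,dx \;=\; -\,\beta^{-1}\!\int \Big\|\nabla\log\frac{\mu_t}{\pi}\Big\|^{2}\,d\mu_t \;=:\; -\,\beta^{-1}\,J(\mu_t\|\pi),
\end{equation}
where $J(\cdot\|\cdot)$ denotes the relative Fisher information. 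The one genuinely delicate point is the second equality: justifying the integration by parts (no flux at infinity), the differentiation under the integral sign, and finiteness of $H(t)$ and $J(\mu_t\|\pi)$ for $t>0$. This is where I expect the real work to lie; the cleanest route is to establish the identity first for a regularized flow --- adding an extra $\epsilon$-diffusion, or mollifying $\mu_0$ --- and then let the regularization vanish, or alternatively to invoke the Bakry--\'Emery semigroup calculus.

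Next I would invoke the LSI. In its entropy--Fisher-information form, the hypothesis that $\pi$ satisfies an LSI (with the stated constant) is precisely the bound $J(\mu_t\|\pi) \ge 2\gamma\,\KL{\mu_t}{\pi}$: applying the defining inequality $\mathrm{Ent}_\pi(g^2) \le \tfrac{2}{\gamma}\,\E_\pi\|\nabla g\|^2$ with $g = \sqrt{\rho_t}$ and using $\|\nabla\sqrt{\rho_t}\|^2 = \tfrac14\rho_t\|\nabla\log\rho_t\|^2$ gives $\KL{\mu_t}{\pi} \le \tfrac{1}{2\gamma}\, J(\mu_t\|\pi)$. Combining with the previous display produces the differential inequality $\frac{d}{dt}H(t) \le -2\gamma\beta^{-1} H(t)$, and Gr\"onwall's lemma then gives $H(t) \le e^{-2\gamma\beta^{-1} t}H(0)$, which is \eqref{eqn:ld_kl_conv}. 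Since the statement is quoted verbatim from \cite{vempala_rapid_2019}, one may alternatively just cite their Theorem 1 for the analytic details in Step 1.
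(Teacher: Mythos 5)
Your proposal is correct and follows essentially the same route as the paper's own review in Appendix B: rewrite the Fokker--Planck equation in divergence form, derive the de Bruijn identity $\partial_t \KL{\mu_t}{\pi_\beta} = -\beta^{-1}\operatorname{FI}(\mu_t\Vert\pi_\beta)$ by integration by parts, apply the LSI in its entropy--Fisher-information form, and integrate the resulting differential inequality. The additional remarks on regularity and the option of citing \cite{vempala_rapid_2019} directly are consistent with how the paper treats this imported result.
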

where $\KL{\mu_t}{\pi}\triangleq \int\mu_t(x)\log\frac{\mu_t(x)}{\pi(x)}dx\triangleq\int\mu_t\log\frac{d\mu_t}{d\pi}$ is the Kullback-Leibler (KL) divergence between two probability measures. 

Recalling the Otto-Villani theorem~\citep[Theorem 1][]{otto_generalization_2000}, an LSI inequality further implies a Talagrand transportation inequality
\begin{equation}
    W_2(\mu_t,\pi) \leq \left(\frac{2}{\gamma}\right)^{1/2} \sqrt{\KL{\mu_t}{\pi}}
\end{equation}
where $W_2(\mu_t,\pi)=\inf_{\nu\in \mathcal{C}(\mu_t, \pi)}\left(\int ||x-y||^2_2\nu (x,y)dxdy\right)^{1/2}$ is the 2-Wasserstein distance between $\mu_t$ and $\pi$ and $\nu\in \mathcal{C}(\mu_t, \pi)$ is a \emph{coupling} over $\mu_t,\:\pi$. Convergence in $\kl$ under an LSI therefore implies convergence in $W_2$, allowing us to state bounds in both. Crucially for our purposes, the 2-Wasserstein distance is a metric over probability distributions, allowing use of the triangle inequality~\citep{raginsky_non-convex_2017}.

As $\beta\to \infty$, $\pi_\beta(x)$ concentrates around the minimizer(s) of $f$. This observation permits us to unite optimization and sampling using annealing schemes~\citep{kirkpatrick_optimization_1984,chiang_diffusion_1987,chak_generalized_2023} which gradually increase $\beta$ to escape early local minima and (hopefully) find the global minimum, indicating a direction for future work extending our results. 
Previous works have also used bounds on convergence to $\pi_\beta$ at constant $\beta$ to bound optimizer hitting times~\cite{zhang_hitting_2017} and expected excess risk~\citep{raginsky_non-convex_2017,xu_global_2020,farghly_time-independent_2021, zhang_nonasymptotic_2023} in non-convex optimization.
\section{Main Results}\label{sec:results}
\newcommand{\myalgorithm}{%
\begin{algorithm}[H]
\SetAlgoLined
\KwData{this text}
\KwResult{how to write algorithm with \LaTeX2e }
initialization\;
\While{not at end of this document}{
read current\;
\eIf{understand}{
go to next section\;
current section becomes this one\;
}{
go back to the beginning of current section\;
}
}
\end{algorithm}}
\subsection{LNLS as Block Sampling}
        

DXs have a finite capacity, necessitating hybrid analog/digital algorithms to solve problems exceeding that capacity. A popular candidate for hybrid optimization/sampling is the Large-Neighborhood Local Search (LNLS) framework~\citep{ahuja_survey_2002,booth_partitioning_nodate,sharma_increasing_2022,raymond_hybrid_2023}, where a local solver (the DX) is used to optimize/sample blocks of variables $\{B_1,B_2,...,B_b\}$ conditioned on the rest of the problem state, illustrated in Fig.~\ref{fig:block_diffusion_illustration}. 

\begin{figure}[ht]
    \centering
    \includegraphics[width=0.7\linewidth]{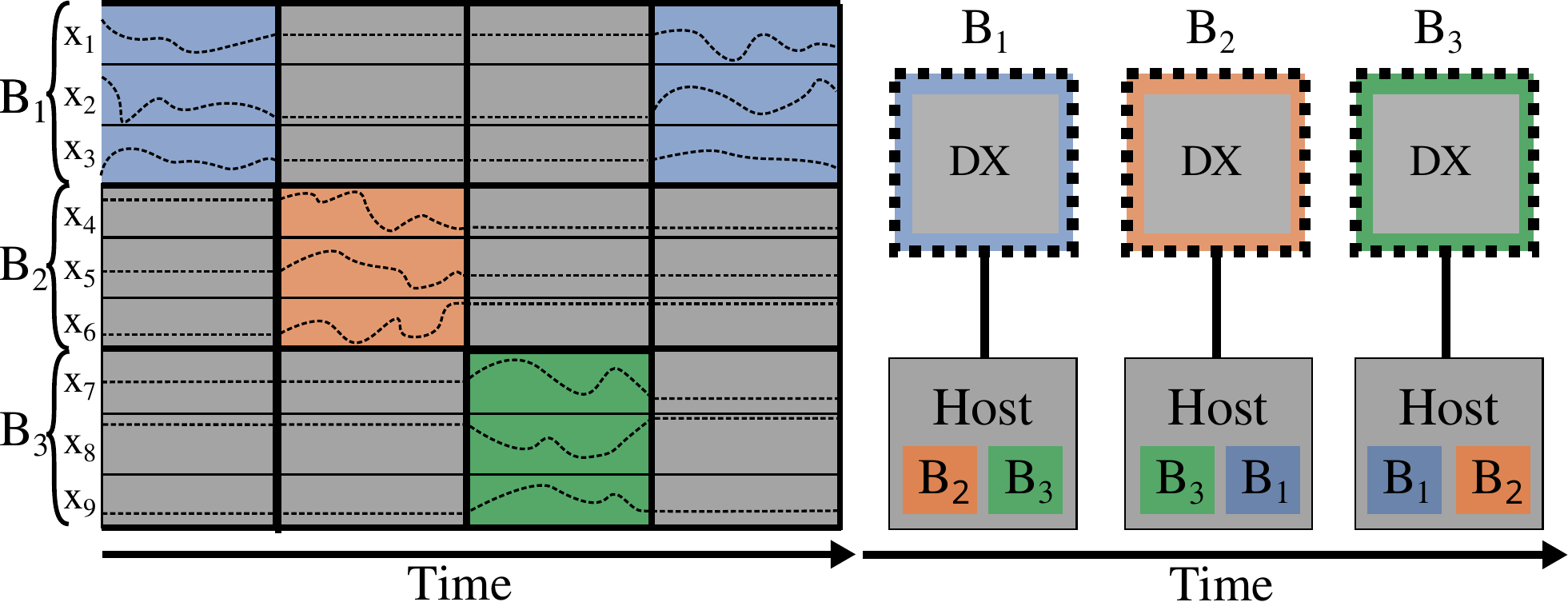}
    \caption{Illustration of the LNLS algorithm on a 3-block, 9-variable problem. \textbf{[Left]} An illustration of the variable sample paths during algorithm execution. When a block is not being actively evolved, the constituent variables remain fixed (gray). \textbf{[Right]} Logical partition of variables in an LNLS framework, where one block is being actively evolved by the DX with the others resident in digital memory. The digital host performs the control operations needed to read the block state, write back to memory, and begin the next block evolution.}
    \label{fig:block_diffusion_illustration}
\end{figure}

We can formalize LNLS by borrowing notation from classical coordinate descent~\citep{nesterov_efficiency_2012,beck_convergence_2013}. We decompose the space $\R^d$ into disjoint blocks $\bigtimes_{i=1}^b B_i$, where each subspace $B_i$ has dimension $d_i$. LNLS frameworks perform alternating sampling from each conditional distribution $\mu_{B_i|B_1\ldots B_b}$, where each block is chosen at random or in a deterministic order.

For simplified notation, we decompose $I^d=\sum_{i=1}^b U_i$ where each $U_i\in\R^{d\times d}$ has ones along diagonal indices corresponding to dimensions in $B_i$ and zeros elsewhere. Then $\sum_{i=1}^bU_i\nabla f(x)=\nabla f(x)$ and we can express the SDE for a single block $B_i$ diffusion as
\begin{equation}\label{eqn:sde_block}
    dx=-U_i\nabla f(x)dt+U_i\sqrt{2\beta^{-1}}dW_{t}.
\end{equation}
Equation~\eqref{eqn:sde_block} leaves the conditioned dimensions $\overline B_{i}\triangleq\{j\in\{1,...,d\}:j\not\in B_i\}$ invariant. Each block diffusion occurs in continuous time, but the blocks are swapped at discrete steps. Accordingly, we denote $x(t_k)$ as the iterate at time $t$ in block step $k$ and $\mu_{t_k}$ as its associated probability distribution.
\begin{algorithm}[t]
\caption{Block Langevin Diffusion (BLD)}
\label{alg:bld}
    \begin{algorithmic}[1]
        \Procedure{BLD}{$x^0\in\textrm{dom}(f)$, Decomposition $\{B_1,...,B_b\}$, Step Size Set $\lambda\in\mathbb{R}^b_+$}
        \For{$k\geq 0$}
            \State Choose block index $i$ (Random or Deterministic)
            \State Set $t_{k+1}=t_k + \lambda_i$
            \State Sample:
            \vspace{-10px}
            \begin{align}
                x(t_{k+1})&=x(t_k)-\int_{t_k}^{t_{k+1}}U_i\nabla f(x)dt +\int_{t_k}^{t_{k+1}}\:U_i\sqrt{2\beta^{-1}}dW_t
            \end{align}
            \vspace{-10px}
        \EndFor
        \EndProcedure
    \end{algorithmic}
\end{algorithm}

When each block evolves at constant $\beta$ according to Equation~\eqref{eqn:sde_block}, we can model LNLS as a block sampling algorithm, termed \emph{Block Langevin Diffusion} (BLD), shown in Algorithm~\ref{alg:bld}. BLD is a continuous-time generalization of ``randomized coordinate Langevin Monte Carlo'' (RCLMC) studied in \cite{ding_random_2021,ding_langevin_2021}. By reducing LNLS to BLD, we can tractably analyze algorithm performance using well-developed tools from stochastic process analysis. As expressed, Algorithm~\ref{alg:bld} leaves open the choice of block selection. Here we consider randomized and cyclic selection rules, denoted \emph{Randomized Block Langevin Diffusion} (RBLD) and \emph{Cyclic Block Langevin Diffusion} (CBLD) respectively.

Throughout our analysis, we make the following assumptions on $f$.
\begin{assumption}\label{assmp:diff}
    $f$ is continuously differentiable
\end{assumption}
\begin{assumption}\label{assmp:lsi}
    $\pi_\beta\propto\exp[-\beta f(x)]$ satisfies a log-Sobolev inequality (LSI) with $C_{\textrm{LSI}}=\frac{1}{\gamma}$. That is, for all distributions $\mu$ with finite second moment
    \begin{equation}
        \KL{\mu}{\pi_\beta}\triangleq \int\mu(x)\log\frac{\mu(x)}{\pi_\beta(x)}dx\leq\frac{1}{2\gamma}\overbrace{\int\mu(x)\Vert \nabla \log\frac{\mu(x)}{\pi_\beta(x)}\Vert ^2dx}^{\operatorname{FI}(\mu\Vert \pi_\beta)}
    \end{equation}
\end{assumption}
where $\operatorname{FI}(\mu\Vert \pi_{\beta})$ is the (relative) \textit{Fisher information}. 
An LSI can hold even in non-log-concave distributions, making it a more general assumption than the strong log-concavity presumed by~\cite{ding_langevin_2021}. Examples include globally strongly log-concave measures with bounded regions of non-log concavity~\citep{raginsky_non-convex_2017,ma_sampling_2019}, high-temperature spin systems~\citep{bauerschmidt_very_2019} and heavy-tailed distributions which may not be \emph{strongly} log-concave (such as measures with potentials $f$ satisfying a Kurdyka-Łojasiewicz inequality~\cite{bolte2010characterizations}).

\subsection{Randomized Block Langevin Diffusion}\label{sec:rbld}
In the randomized case, we select the next variable block according to a probability mass function $\phi\in\R^b$ with all $\phi_i>0$. ~\cite{ding_random_2021} analyzed RCLMC using Wasserstein coupling arguments, however our analysis builds on the traditional proof of Equation~\eqref{eqn:ld_kl_conv} which relies on the \emph{de Bruijin identity}
\begin{equation}
    \partial_t\KL{\mu_t}{\pi_\beta}=-\beta^{-1}\operatorname{FI}(\mu_t\Vert \nu)
\end{equation}
which, when combined with the LSI, proves exponential convergence since $-\operatorname{FI}(\mu_t\Vert \nu)\leq -2\gamma\KL{\mu_t}{\pi_\beta}$. In the same vein, we use probabilistic arguments in Appendix~\ref{ssec:fpe_rcld} to prove a de Bruijin \emph{in}equality
\begin{equation}
    \partial_t\KL{\mu_t}{\pi_\beta}\leq-\phi_{\min}\beta^{-1}\operatorname{FI}(\mu_t\Vert \nu)
\end{equation}
where $\phi_{\min}$ is the minimum block probability in $\phi$.

By integrating and expanding the inequality, we easily obtain convergence in $\KL{\mu(t_k)}{\pi_\beta}$, expressed in Theorem~\ref{thm:rbld_kl}. We also prove convergence for a discrete-time variant (RBLMC) in Appendix~\ref{sec:rbld_proof}.

\begin{theorem}[RBLD $\KL{\mu_{t_k}}{\pi_\beta}$ Convergence]\label{thm:rbld_kl}
    Let $\phi=(\phi_1,...,\phi_b)$ be the block selection probability mass function, $\phi_i>0$, and let $\lambda=(\lambda_1,...,\lambda_b)\in\R^b$ be the sampling times for each block, $\lambda_i>0$. For any $\pi_\beta\propto\exp[-\beta f(x)]$ satisfying Assumptions~\ref{assmp:diff} and~\ref{assmp:lsi}, and any $\beta>0$, the sample distribution after $k$ steps of RBLD ($\mu_{t_k}$) satisfies
    \begin{equation}\label{eqn:rbld_bound}
        \KL{\mu_{t_k}}{\pi_\beta}\leq e^{-2\gamma\beta^{-1}\phi_{\min}\lambda_{\min}k}\KL{\mu_0}{\pi_\beta}.
    \end{equation}
\end{theorem}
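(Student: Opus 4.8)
The plan is to collapse the claim to a scalar differential inequality for the relative entropy, integrate it with Gr\"onwall's lemma, and then translate elapsed continuous time into a block-step count. As input I take the de Bruijn-type inequality for RBLD quoted above, proven via the Fokker--Planck calculation in Appendix~\ref{ssec:fpe_rcld}: along the RBLD flow, for almost every $t \ge 0$,
\begin{equation}
\partial_t \KL{\mu_t}{\pi_\beta} \le -\phi_{\min}\beta^{-1}\operatorname{FI}(\mu_t \Vert \pi_\beta).
\end{equation}
Assumption~\ref{assmp:lsi} supplies $\operatorname{FI}(\mu_t \Vert \pi_\beta) \ge 2\gamma\,\KL{\mu_t}{\pi_\beta}$ whenever $\mu_t$ has finite second moment, so the two combine into the closed scalar inequality $\partial_t \KL{\mu_t}{\pi_\beta} \le -2\gamma\beta^{-1}\phi_{\min}\,\KL{\mu_t}{\pi_\beta}$. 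Gr\"onwall's lemma then gives, for every $t \ge 0$,
\begin{equation}
\KL{\mu_t}{\pi_\beta} \le e^{-2\gamma\beta^{-1}\phi_{\min} t}\,\KL{\mu_0}{\pi_\beta},
\end{equation}
the statement being vacuous if $\KL{\mu_0}{\pi_\beta}=\infty$, so I may assume it finite.

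It remains to place the $k$-th iterate on this continuous trajectory. After $k$ block steps the elapsed time is $t_k = \sum_{j=1}^k \lambda_{i_j}$, where $i_j$ is the block chosen at step $j$; since $\lambda_{i_j} \ge \lambda_{\min}$ for every $j$, we get $t_k \ge k\lambda_{\min}$ for every realization of the (random) block choices. Because $s \mapsto e^{-2\gamma\beta^{-1}\phi_{\min} s}$ is decreasing, specializing the Gr\"onwall bound to $t=t_k$ and substituting $t_k \ge k\lambda_{\min}$ yields exactly~\eqref{eqn:rbld_bound}. Two routine technical points have to be discharged along the way: that $\KL{\mu_0}{\pi_\beta}<\infty$ (otherwise there is nothing to prove) and that $\mu_t$ keeps a finite second moment for all $t$, so that the LSI is legitimately invoked at each time — the latter follows in the standard way from the block Langevin SDE~\eqref{eqn:sde_block} under Assumption~\ref{assmp:diff}.

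The substantive work is in the ingredient I took for granted, namely the de Bruijn \emph{inequality} with the factor $\phi_{\min}$, and this is where I expect the main obstacle. Within any single step only the chosen block $B_i$ moves, so the natural instantaneous entropy dissipation is a \emph{partial} Fisher information $\operatorname{FI}_i(\mu_t \Vert \pi_\beta)$ built from the $B_i$-coordinates of $\nabla \log(\mu_t/\pi_\beta)$; the argument must average this dissipation over the selection law $\phi$ and exploit that the coordinate projectors satisfy $\sum_i U_i = I$, hence $\sum_i \operatorname{FI}_i(\mu \Vert \pi_\beta) = \operatorname{FI}(\mu \Vert \pi_\beta)$, so that the $\phi$-weighted average of the nonnegative terms $\operatorname{FI}_i$ is at least $\phi_{\min}\operatorname{FI}$. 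Making this rigorous requires care with the measure-space bookkeeping — the law $\mu_t$ marginalizes over which block is active, and when the $\lambda_i$ differ the step boundaries become block-dependent — together with convexity of $\KL$ and of $\operatorname{FI}$ in their first argument. Once that inequality is in hand, the remainder of the proof is elementary, as above.
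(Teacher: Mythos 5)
Your proposal is correct and follows essentially the same route as the paper: the key step is the de Bruijn \emph{in}equality $\partial_t\KL{\mu_t}{\pi_\beta}\leq-\phi_{\min}\beta^{-1}\operatorname{FI}(\mu_t\Vert\pi_\beta)$, obtained exactly as you sketch by deriving the ensemble Fokker--Planck equation with drift matrix $U_\phi=\sum_i\phi_i U_i$ and lower-bounding the resulting quadratic form by its minimum eigenvalue $\phi_{\min}$, after which the LSI and integration (the paper integrates per step and chains, you apply Gr\"onwall globally and use $t_k\geq k\lambda_{\min}$ --- equivalent) give the stated bound. The bookkeeping caveats you flag (block-dependent step boundaries, finiteness of $\KL{\mu_0}{\pi_\beta}$) are handled at the same level of informality in the paper itself.
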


\subsection{Cyclic Block Langevin Diffusion}\label{sec:cbld}
While our randomized results tighten existing theory, real-world instances of LNLS often use cyclic orderings~\cite{sharma_increasing_2022,song_ds-gl_2024,wuextending}, as they are more amenable to direct hardware and software optimization and are simpler to implement. 

However, unlike RBLD, we cannot easily prove a ``de Bruijin inequality'' for CBLD. Instead, we make extensive use of the \emph{chain lemma} for $\kl$
\begin{equation}
    \KL{\mu}{\nu}=\E_{\mu_D}[\KL{\mu_{A|E}}{\nu_{A|E}}]+\KL{\mu_{D}}{\nu_{E}}.
\end{equation}
where $A,E$ are disjoint subspaces of $\R^d$, $A\cup E=\R^d$, and $\mu$, $\nu$ are measures supported on $\R^d$ with $\mu_{A|E}$ denotes the measure over $A$ conditioned on $E=y$ for arbitrary $y$. Note that if we set $A=B_i$, $E=\overline B_i$, the CBLD diffusion will result in exponential contraction in $\E_{\mu_{\overline{B}_i}}[\KL{\mu_{B_i|\overline B_i}}{\nu_{B_i|\overline B_i}}]$ while leaving $\KL{\mu_{\overline B_i}}{\nu_{\overline B_i}}$ constant. CBLD then trivially results in non-increasing  $\KL{\mu(t_k)}{\pi_\beta}$, however expressing descent across iterations is more subtle due to the sub-additivity of KL-divergence. 

Taking inspiration from ~\cite{beck_convergence_2013}, we bound descent across $b$ steps, an entire ``cycle'' over the problem space, expressed in a general lemma for $\KL{\mu_{t_k}}{\pi_\beta}$ (proved in Appendix~\ref{sec:cyclic_appdx}).
\begin{lemma}[Cyclic KL Contraction]\label{lemma:cyclic_kl_general}
    Let the set $C=\{C_1,...,C_b\}\in\R_+^b$ satisfy $0< C_i< 1$, and let $D_i\in \R$ be arbitrary constants $D_i\geq0$ and let $\pi$ be an arbitrary distribution with finite second moment. Suppose $(\mu_0,\mu_1,...)$ is a sequence of measures satisfying for $k\geq1$ and $n=k\;\mathrm{mod}\;{b}$
    \begin{equation}
        \KL{\mu_{t_k}}{\pi_\beta}\leq C_n\KL{\mu_{t_{k-1}}}{\pi}+(1-C_n)\KL{\mu_{t_{k-1},\overline{B}_n}}{\pi_{\overline{B}_n}} + D_n.
    \end{equation}
    Then we can bound
    \begin{equation}
        \KL{\mu_{t_{kb}}}{\pi}\leq C_{\max}^{k}\KL{\mu_{t_{(k-1)b}}}{\pi}+\sum_{i=1}^bD_i
    \end{equation}
    where $C_{\max}=\max \{C_1,...,C_b\}$.
\end{lemma}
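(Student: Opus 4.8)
The plan is to iterate the one-step hypothesis across a full cycle of $b$ block updates and control the accumulated "conditioned-block" terms $\KL{\mu_{t_{k-1},\overline{B}_n}}{\pi_{\overline{B}_n}}$ by a telescoping/monotonicity argument. First I would fix a cycle, say steps $k=(m-1)b+1,\dots,mb$, and relabel so that block $n$ is touched at the $n$-th step of the cycle. The crucial structural fact I would establish first is \emph{monotonicity within a cycle}: each block step can only decrease $\KL{\mu_{t_j}}{\pi}$ (set $D_n=0$ in the hypothesis and note $\KL{\mu_{t_{j-1},\overline{B}_n}}{\pi_{\overline{B}_n}}\le \KL{\mu_{t_{j-1}}}{\pi}$ by the chain rule, so the bound reads $\KL{\mu_{t_j}}{\pi}\le \KL{\mu_{t_{j-1}}}{\pi}+D_n$); more importantly, I need that the conditioned term $\KL{\mu_{t_{j-1},\overline{B}_n}}{\pi_{\overline{B}_n}}$ is itself bounded by $\KL{\mu_{t_{(m-1)b}}}{\pi}$, the divergence at the \emph{start} of the cycle. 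This is the heart of the matter and where the $\E$-over-marginal structure of the chain lemma does the work: once block $B_i$ has been sampled earlier in the cycle, the marginal on the \emph{remaining} coordinates can only have improved or stayed level relative to the start-of-cycle iterate — I would make this precise using that a $B_i$-diffusion step leaves $\mu_{\overline{B}_i}$ unchanged and contracts the conditional, hence cannot increase any marginal KL of a coordinate subset disjoint from $B_i$, combined with sub-additivity of KL across the partition.

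Granting that, the argument is a clean induction over the $b$ steps of one cycle. Writing $a_j \triangleq \KL{\mu_{t_{(m-1)b+j}}}{\pi}$ and $R \triangleq \KL{\mu_{t_{(m-1)b}}}{\pi}=a_0$, the hypothesis at step $j$ gives $a_j \le C_j a_{j-1} + (1-C_j)R + D_j$, using the bound $\KL{\mu_{\cdot,\overline{B}_j}}{\pi_{\overline{B}_j}} \le R$ from the previous paragraph. I would then show by induction on $j$ that $a_j \le R + \sum_{\ell\le j} D_\ell$: indeed $a_j \le C_j(R+\sum_{\ell<j}D_\ell) + (1-C_j)R + D_j = R + C_j\sum_{\ell<j}D_\ell + D_j \le R + \sum_{\ell\le j}D_\ell$ since $C_j<1$. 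Taking $j=b$ yields $\KL{\mu_{t_{mb}}}{\pi} \le R + \sum_{i=1}^b D_i$. To get the claimed \emph{geometric} factor $C_{\max}^{k}$ rather than merely $1$, I would sharpen the last step of the induction: at the final step $j=b$ the conditioned term is $\KL{\mu_{t_{mb-1},\overline{B}_b}}{\pi_{\overline{B}_b}}$, and I can instead retain the factor by bounding $a_b \le C_{\max}\,a_{b-1} + (1-C_{\max})(\text{something} \le a_{b-1}) + D_b$ — more carefully, I would carry the recursion as $a_b \le C_{\max} a_0 + \sum D_i$ by noting that in the noise-free ($D\equiv 0$) case the per-cycle map is a contraction with factor at most $C_{\max}$, since chaining the $b$ inequalities with all $D_i=0$ and $C_i\le C_{\max}$, together with the within-cycle monotonicity $a_0\ge a_1\ge\cdots$, collapses to $a_b\le C_{\max} a_0$. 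Adding back the $D_i$'s perturbs this additively by $\sum_i D_i$ because each $C_i<1$ damps the earlier $D_\ell$'s. Finally I relabel $m-1 \mapsto k-1$, $m\mapsto k$ to match the statement's indexing $\KL{\mu_{t_{kb}}}{\pi}\le C_{\max}^k \KL{\mu_{t_{(k-1)b}}}{\pi} + \sum_i D_i$ (the exponent $k$ appearing if one iterates the per-cycle bound from the start, though as literally stated it is a one-cycle recursion with exponent matching the cycle count).

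The main obstacle I anticipate is the second claim in the first paragraph: rigorously showing that the conditioned divergence $\KL{\mu_{t_{j-1},\overline{B}_n}}{\pi_{\overline{B}_n}}$ at an intermediate point of the cycle is dominated by the start-of-cycle value $\KL{\mu_{t_{(m-1)b}}}{\pi}$. Naively, sampling block $B_i$ changes the \emph{joint} law and could in principle perturb the marginal on $\overline{B}_n$ for $n\ne i$; the resolution is that the BLD dynamics for block $B_i$ (Equation~\eqref{eqn:sde_block}) leaves \emph{every} coordinate outside $B_i$ pathwise invariant, so the marginal on $\overline{B}_n \supseteq$ (coordinates not in $B_i$, intersected appropriately) is literally unchanged, and by the data-processing / sub-additivity properties of KL the relevant marginal KL is non-increasing along the cycle. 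I would isolate this as a short sub-claim (proved via the chain lemma plus pathwise invariance) before running the induction, and I expect the bookkeeping of which marginals are frozen at which step — and handling the possibility that blocks have unequal dimension — to be the fiddliest part, though conceptually routine given Theorem~\ref{thm:ld_conv} applied conditionally.
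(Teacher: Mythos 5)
Your overall strategy --- iterate the one-step hypothesis across a cycle and control the marginal terms $\KL{\mu_{t_{j-1},\overline{B}_n}}{\pi_{\overline{B}_n}}$ --- is close in spirit to the paper's, and your handling of the additive $D_i$'s is fine, but there is a genuine gap at exactly the point you flag as needing sharpening: the contraction factor $C_{\max}$. If you bound the conditioned/marginal term at step $j$ only by the start-of-cycle value $R=\KL{\mu_{t_{(k-1)b}}}{\pi}$ (or by $a_{j-1}$), then in the noise-free case the recursion $a_j\le C_j a_{j-1}+(1-C_j)R$ collapses to $a_j\le C_jR+(1-C_j)R=R$ at every step, so chaining the $b$ inequalities yields only $a_b\le a_0+\sum_i D_i$ --- non-expansiveness with no factor of $C_{\max}$. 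The within-cycle monotonicity $a_0\ge a_1\ge\cdots$ does not rescue this; it is exactly what the crude bound already gives you. Your stated justification (``chaining the $b$ inequalities \ldots collapses to $a_b\le C_{\max}a_0$'') is therefore not correct as written: the $(1-C_j)$-weighted marginal term is precisely what prevents any product of the $C_j$'s from surviving.

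The missing idea is that the marginal term at the \emph{last} step of the cycle, $\KL{\mu_{t_{b-1},\overline{B}_b}}{\pi_{\overline{B}_b}}$, lives on the union of blocks that have \emph{all already been sampled} during the current cycle, and hence has itself contracted by a factor $C_{\max}$ (plus $\sum_{i<b}D_i$) relative to its start-of-cycle value. The paper extracts this by induction on the number of blocks $b$: the restriction of the process to $\overline{B}_b$ is a $(b-1)$-block cyclic process satisfying the same hypotheses, so the inductive hypothesis applies to it, and the chain lemma together with the data-processing inequality (Lemma~\ref{lemma:kl_cond}) is used to reassemble the full KL and cancel the leftover marginal terms. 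A related slip in your third paragraph: when block $B_i$ is diffused, the marginal on $\overline{B}_n$ for $n\ne i$ is \emph{not} pathwise invariant, since $B_i\subseteq\overline{B}_n$; only the marginal on $\overline{B}_i$ is frozen. What the argument actually needs is not invariance of these marginals but their quantified contraction, and that is exactly what the induction on $b$ supplies and what your telescoping argument omits.
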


~\cite{lee_gibbs_2022} \emph{lower} bounded the KL-divergence in Bayesian coordinate ascent variational inference by similarly comparing the change in $\kl$ across conditioned steps. However, their focus was on inference over mean-field parametric distributions rather than the broader class of LSI Gibbs measures, making Lemma~\ref{lemma:cyclic_kl_general} a stronger result.

The convergence of CBLD follows by choosing $D_i=0$, $C_{\max}=e^{-2\gamma\beta^{-1}\lambda_{\min}}$:
\begin{theorem}[CBLD $\KL{\mu(t_k)}{\pi_\beta}$ Convergence]\label{theorem:CBLD_convergence}
    Let $\sigma=(B_1,...,B_b)$ be a given block permutation and let $\lambda=(\lambda_1,...,\lambda_b)\in\R^d$ be the sampling times for each block, $\lambda_i>0$. For any $\pi_\beta\propto\exp[-\beta f(x)]$ satisfying Assumptions~\ref{assmp:diff} and~\ref{assmp:lsi}, and any $\beta>0$, the sample distribution after $kb$ steps of CBLD ($\mu_{t_{kb}}$) satisfies
    \begin{equation}\label{eqn:cbld_bound}
        \KL{\mu_{t_{kb}}}{\pi_\beta}\leq e^{-2\gamma\beta^{-1}\lambda_{\min}k}\KL{\mu_0}{\pi_\beta}.
    \end{equation}
    
\end{theorem}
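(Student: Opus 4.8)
The plan is to turn one CBLD step into exactly the one-step recursion that is the hypothesis of Lemma~\ref{lemma:cyclic_kl_general} (with contraction constants $C_n=e^{-2\gamma\beta^{-1}\lambda_n}$ and zero slack $D_n=0$), and then to read off \eqref{eqn:cbld_bound} from that lemma. First I would reduce a block step to a fibered Langevin diffusion: fix step $k$ with selected block $B_n$, and note that on $[t_{k-1},t_k]$ the block SDE~\eqref{eqn:sde_block} moves only the coordinates in $B_n$, so $\mu_{t,\overline B_n}$ is constant in $t$, while for $\mu_{t_{k-1},\overline B_n}$-a.e.\ frozen value $y$ the conditional law $\mu_{t,B_n|\overline B_n=y}$ is the law of a $d_n$-dimensional Langevin diffusion with potential $f(\cdot,y)$ and inverse temperature $\beta$, run for time $\lambda_n$ from $\mu_{t_{k-1},B_n|\overline B_n=y}$.

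Next, each conditional target $\pi_{\beta,B_n|\overline B_n=y}\propto e^{-\beta f(\cdot,y)}$ inherits a log-Sobolev inequality with constant $1/\gamma$ from $\pi_\beta$, so Theorem~\ref{thm:ld_conv} applies uniformly in $y$ and gives $\KL{\mu_{t_k,B_n|\overline B_n=y}}{\pi_{\beta,B_n|\overline B_n=y}}\le e^{-2\gamma\beta^{-1}\lambda_n}\KL{\mu_{t_{k-1},B_n|\overline B_n=y}}{\pi_{\beta,B_n|\overline B_n=y}}$ for a.e.\ $y$. Taking $\E_{\mu_{t_{k-1},\overline B_n}}[\cdot]$ of both sides — valid since the $\overline B_n$-marginal is common to $\mu_{t_{k-1}}$ and $\mu_{t_k}$ — adding $\KL{\mu_{t_{k-1},\overline B_n}}{\pi_{\beta,\overline B_n}}$ to each side, and applying the chain lemma $\KL{\mu}{\pi_\beta}=\E_{\mu_{\overline B_n}}[\KL{\mu_{B_n|\overline B_n}}{\pi_{\beta,B_n|\overline B_n}}]+\KL{\mu_{\overline B_n}}{\pi_{\beta,\overline B_n}}$ to both $\mu_{t_{k-1}}$ and $\mu_{t_k}$, yields
\[\KL{\mu_{t_k}}{\pi_\beta}\le e^{-2\gamma\beta^{-1}\lambda_n}\KL{\mu_{t_{k-1}}}{\pi_\beta}+\big(1-e^{-2\gamma\beta^{-1}\lambda_n}\big)\KL{\mu_{t_{k-1},\overline B_n}}{\pi_{\beta,\overline B_n}},\]
which is the hypothesis of Lemma~\ref{lemma:cyclic_kl_general} with $C_n=e^{-2\gamma\beta^{-1}\lambda_n}\in(0,1)$ and $D_n=0$. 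Taking $D_i=0$ and $C_{\max}=\max_i e^{-2\gamma\beta^{-1}\lambda_i}=e^{-2\gamma\beta^{-1}\lambda_{\min}}$ in that lemma then produces exactly \eqref{eqn:cbld_bound}.

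The main obstacle is the second step, and within it the transport of the LSI constant to the coordinate slices. Unlike RBLD there is no global de Bruijn-type inequality available for the full KL here, because a block step contracts only the conditional part $\E_{\mu_{\overline B_n}}[\KL{\mu_{B_n|\overline B_n}}{\pi_{\beta,B_n|\overline B_n}}]$ and leaves the marginal part $\KL{\mu_{\overline B_n}}{\pi_{\beta,\overline B_n}}$ untouched; so the argument rests entirely on (i) each conditional target satisfying LSI with constant $1/\gamma$ uniformly in the conditioning value, and (ii) the accounting in Lemma~\ref{lemma:cyclic_kl_general} by which the residual marginal terms telescope to zero over a complete cycle, where $\bigcap_n \overline B_n=\varnothing$. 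The remaining points — finiteness of the conditional relative entropies and Fisher informations, and the admissibility of differentiating $t\mapsto\KL{\mu_t}{\pi_\beta}$ along the block SDE — are routine and are handled as in the proof of Theorem~\ref{thm:ld_conv}.
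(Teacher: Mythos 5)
Your proposal follows essentially the same route as the paper: your fibered/conditional Fokker--Planck reduction is the paper's Lemma~\ref{lemma:cond_fpe}, your one-step recursion is exactly the paper's Lemma~\ref{lemma:single_step_bcld}, and the cycle-wise telescoping is Lemma~\ref{lemma:cyclic_kl_general} with $C_n=e^{-2\gamma\beta^{-1}\lambda_n}$ and $D_n=0$. The only caveat is that both you and the paper invoke, without proof, that each conditional target $\pi_{\beta,B_n|\overline B_n=y}$ satisfies an LSI with the same constant $1/\gamma$ uniformly in $y$ (the paper's Lemma~\ref{lemma:lsi_marginal} covers marginals, not conditionals), so your argument is neither weaker nor stronger than the paper's on this step.
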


When $D_i\neq 0$, Lemma~\ref{lemma:cyclic_kl_general} accounts for biased sampling algorithms, such as Langevin Monte Carlo (LMC). Accordingly, we combine Lemma~\ref{lemma:cyclic_kl_general} with existing LSI bounds for LMC from \cite{vempala_rapid_2019,chewi_analysis_2021} to prove convergence for a discrete time ``cyclic block Langevin Monte Carlo'' in Appendix~\ref{sec:cyclic_appdx}.

\paragraph*{Observations:}
For RBLD and CBLD, the convergence is limited by the shortest step duration $\lambda_{\min}$ and minimal block probability $\phi_{\min}$. For constant block sizes, the optimal choice for both CBLD and RBLD is therefore constant $\lambda_i=\lambda_j=\lambda$ and uniform $\phi_i=1/b$. This contrasts discrete-time block optimization, where distinct step sizes/probability distributions provide advantage on ill-conditioned problems~\citep{nesterov_efficiency_2012,beck_convergence_2013,ding_random_2021} due to the effect of varying Lipschitz constants in discretization error terms. In the case of constant $\lambda$ with uniform $\phi$, RBLD and CBLD have identical descent bounds, as we numerically demonstrate in Section~\ref{sec:numerical}. This considerably simplifies hyperparameter selection \emph{for ideal devices}, reducing from $O(b)$ parameters to 1 ($\lambda$). In the following section we continue to assume a constant step duration $\lambda$ for simplicity, though future analyses may reveal more optimized step size selections for finite-variation devices.

\subsection{Finite Variation}\label{sec:variation}
Theorems~\ref{thm:rbld_kl} and ~\ref{theorem:CBLD_convergence} provide optimistic lower bounds for DX sampling, however a real machine will have analog errors perturbing the target function~\citep{albash_analog_2019,melanson_thermodynamic_2023}. As a generalization of ~\cite{albash_analog_2019}, we model a DX with analog variation with a ``perturbed'' gradient oracle $g_\delta(x):\R^d\to\R^d$, where $\delta\in\bm{D}$ denotes a fixed perturbation from arbitrary domain $\bm{D}$.  Unlike stochastic optimization, which assumes that the perturbation changes with each gradient evaluation, DX perturbations are fixed for each device. To provide guarantees under device variation, we need to restrict the perturbations and functions permitted:

\begin{assumption}\label{assmp:grad_var}
    For fixed $\delta\in \bm{D}$, there exist constants $M, B\geq 0$ such that for all $ x\in \R^d$
    \[\Vert \nabla f(x)-g_\delta(x)\Vert ^2\leq M^2\Vert x\Vert ^2+B^2.\]
\end{assumption}
\begin{assumption}\label{assmp:lipschitz}
    $f$ is $L$-smooth and, for fixed $\delta\in \bm{D}$, $g_\delta$ is $G$-Lipschitz continuous. That is, for all $x,y\in\R^d$
    \[\Vert \nabla f(x)-\nabla f(y)\Vert  \leq L\Vert x-y\Vert, \]
    \[\Vert g_\delta(x)-g_\delta(y)\Vert  \leq G\Vert x-y\Vert. \]
\end{assumption}

\begin{assumption}\label{assmp:dissapative}
    $f$ and $g_\delta$ are $(m,c)$-dissipative and $(\mathfrak{m},\mathfrak{c})$-dissipative respectively, i.e., there exists positive constants $m> 0$, $c$, $\mathfrak{m}$, $\mathfrak{c}\geq 0$ such that for all $x\in\R^d$:
    \[\inner{\nabla f(x)}{x}\geq m\Vert x\Vert ^2-c,\]
    \[\inner{\nabla g_\delta(x)}{x}\geq \mathfrak{m}\Vert x\Vert ^2-\mathfrak{c}.\]
\end{assumption}

Assumption~\ref{assmp:grad_var} limits the Euclidean distance between $\nabla f$ and $g_\delta$, with the constants $M$ and $B$ appearing in later bounds. Assumption~\ref{assmp:dissapative} is a common assumption in analyses of stochastic gradient sampling algorithms~\citep{raginsky_non-convex_2017, li_sharp_2022,zhang_nonasymptotic_2023}. Specifically, it enables us to bound the ideal Langevin second moment $\E \Vert y(t_k)\Vert ^2$ in the proof of Theorem~\ref{theorem:w2_conv}. 
Assumption~\ref{assmp:lipschitz} is not directly used in our proofs, but is required for a Girsanov change of measure. Assumptions~\ref{assmp:grad_var} and~\ref{assmp:dissapative} both restrict the type of perturbation with Assumption~\ref{assmp:dissapative} also limiting the magnitude. If Assumptions~\ref{assmp:lipschitz} and~\ref{assmp:dissapative} hold for the target potential, it is reasonable to expect that they hold for the perturbed oracle as well, since DX variation typically manifests as additive or multiplicative perturbations in analog components implementing $\nabla f$~\cite{xiao_accuracy_2022,aifer_thermodynamic_2023}.

Take the example of a Gaussian potential $f(x)=\frac{1}{2}x^\top\Sigma^{-1}x$ with $g_\delta(x)=\Sigma^{-1}\circ(1+\delta)x$, where $\delta\in\R^{d\times d}$ is a ``perturbation matrix'' with $\delta_{ij}\sim \mathcal{N}(0,\Delta^2)$ and $\circ$ denotes a component-wise Hadamard product. Regardless of the standard deviation $\Delta$, we satisfy Assumptions~\ref{assmp:grad_var} and~\ref{assmp:lipschitz} with $M$ and $L$ both equal to the maximal magnitude eigenvalue of $\delta$ and $\Sigma^{-1}$ respectively with $B=0$. However, if $\Sigma^{-1}(1+\delta)$ has negative eigenvalues there is no $\mathfrak{m}>0$ satisfying Assumption~\ref{assmp:dissapative}, placing an upper limit on the perturbation strength.

\begin{assumption}\label{assmp:init_density}
    The density of the initial law $\mu_0$ satisfies
    \[\kappa_0\triangleq \log\int_{\R^d}e^{\Vert w\Vert ^2}d\mu_0<\infty.\]
\end{assumption}
In practice, dynamical accelerators typically operate over bounded domains, such as the unit hypercube~\citep{afoakwa_brim_2021} or unit circle~\citep{wang_oim_2019,inagaki_coherent_2016}, hence the iterate magnitude is bounded in any case. However, bounding over the entire space would provide insufficiently tight upper bounds and our methodology assumes that the measures are supported on $\R^d$. We leave consideration of domains with bounded support to future work, potentially applying methods from reflected Langevin diffusion theory~\citep{bubeck_sampling_2018} or projected differential analysis~\citep{cherukuri_asymptotic_2016}. 

We begin by stating the following bound on the distance between the measures of ideal and perturbed BLD, proved in Appendix~\ref{appdx:stochastic}:

\begin{lemma}[Finite Variation Block Langevin $W_2$ Distance]\label{lemma:w2_dist}
Let $x(t)$, $y(t)$ be non-ideal and ideal block Langevin processes respectively with associated distributions $\mu_t, \nu_t$.  For any $\pi_\beta\propto\exp[-\beta f(x)]$ satisfying~\ref{assmp:diff},~\ref{assmp:lsi}, and ~\ref{assmp:dissapative} with $\beta>\frac{2}{m}$, $g_\delta(x)$ satisfying Assumption~\ref{assmp:grad_var}, $\mu_0$ satisfying Assumption~\ref{assmp:init_density}, and $k\lambda>1$ after $kb$ iterations of BLD
    \begin{equation}\label{eqn:w2_dist}
        W_2(\mu_{t_{kb}},\nu_{t_{kb}})\leq \sqrt{C_0\biggl[(C_1 + \sqrt{C_1})+(C_2 + \sqrt{C_2})\sqrt{\lambda}\biggr]}k\lambda
    \end{equation}
    where $C_0$, $C_1$, and $C_2$ are given in Appendix~\ref{appdx:stochastic}.
\end{lemma}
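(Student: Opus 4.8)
\textbf{Proof proposal for Lemma~\ref{lemma:w2_dist}.}

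The plan is to bound $W_2(\mu_{t_{kb}},\nu_{t_{kb}})$ by constructing an explicit synchronous coupling of the ideal and perturbed block Langevin processes: run $x(t)$ and $y(t)$ on a common probability space, driving both with the \emph{same} Brownian increments $dW_t$ and selecting the \emph{same} block $B_i$ at every step $k$. Under this coupling, on each block step the difference $e(t) \triangleq x(t)-y(t)$ evolves by an ODE (the Brownian terms cancel on the active coordinates, and off the active block $e$ is frozen), so $\frac{d}{dt}\|e(t)\|^2 = -2\langle e(t),\, U_i(g_\delta(x)-\nabla f(y))\rangle$. First I would split $g_\delta(x)-\nabla f(y) = (g_\delta(x)-\nabla f(x)) + (\nabla f(x)-\nabla f(y))$, use Assumption~\ref{assmp:grad_var} on the first piece and $L$-smoothness (Assumption~\ref{assmp:lipschitz}) on the second, and apply Young's inequality to get a Gr\"onwall-type differential inequality $\frac{d}{dt}\|e(t)\|^2 \le \alpha\|e(t)\|^2 + \beta_t$, where $\beta_t$ is controlled by $M^2\|x(t)\|^2 + B^2$. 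Integrating over the step and then telescoping over all $kb$ steps gives $\E\|e(t_{kb})\|^2$ in terms of $\lambda$, $k$, and the running second moment of the iterates.

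The second ingredient is therefore a uniform-in-time bound on the ideal Langevin second moment $\E\|y(t_k)\|^2$: this is exactly what the $(m,c)$-dissipativity Assumption~\ref{assmp:dissapative} with $\beta > 2/m$ buys us, via a standard drift/Lyapunov argument on $\|y(t)\|^2$ giving $\frac{d}{dt}\E\|y(t)\|^2 \le -2m\E\|y(t)\|^2 + 2c + 2d\beta^{-1}$, hence a bound $\E\|y(t_k)\|^2 \le \max(\kappa_0', (c+d\beta^{-1})/m)$ or similar, uniform in $k$; Assumption~\ref{assmp:init_density} controls the initial value. With the second moment of $y$ controlled, I would bootstrap from $\|x\|^2 \le 2\|e\|^2 + 2\|y\|^2$ to close the recursion for $\|e\|^2$ and pull out the leading scaling. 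The characteristic $k\lambda$ prefactor (squared under the root, matching a diffusive/accumulating error over total elapsed time $t_{kb} = k\lambda$, here $b$ blocks per cycle folded into constants) and the $\sqrt{\lambda}$ inside the bracket (a higher-order-in-step-size correction, the kind of term one gets from the interaction of the frozen Brownian pieces across steps, plausibly handled by a Girsanov change of measure rather than pure synchronous coupling — which is why Assumption~\ref{assmp:lipschitz} is flagged as needed "for a Girsanov change of measure") should emerge from collecting the per-step contributions; the constants $C_0, C_1, C_2$ are then read off.

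The main obstacle I anticipate is handling the mismatch between the ideal and perturbed processes \emph{across} block switches rather than within a single block: a pure synchronous coupling keeps the off-block coordinates of $e$ frozen, but the perturbed process's stationary conditional on a block depends on all the other (already-perturbed) coordinates, so the error genuinely accumulates and one cannot rely on contraction alone to absorb it. I expect the clean way around this is the route the paper hints at — a Girsanov/change-of-measure argument bounding the KL between the path laws of the ideal and perturbed diffusions on $[0, t_{kb}]$, whose Radon–Nikodym derivative involves $\int_0^{t_{kb}} \|g_\delta(x_s) - \nabla f(x_s)\|^2 ds$, controlled by Assumption~\ref{assmp:grad_var} plus the second-moment bound, and then converting KL to $W_2$ via the Talagrand inequality available under the LSI (Assumption~\ref{assmp:lsi}). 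Reconciling the two contributions (synchronous-coupling drift error, giving the $\sqrt{\lambda}$-free term, and the Girsanov/discretization error, giving the $\sqrt{\lambda}$ term) into the single bracketed expression, and tracking the dependence of every intermediate constant on $\beta$, $m$, $L$, $M$, $B$, $d$, $\gamma$, and $\kappa_0$ so that $C_0, C_1, C_2$ come out explicitly, is where most of the real work will sit.
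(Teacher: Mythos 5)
Your second route (Girsanov change of measure on path space, bounding $\KL{\mu_{t_{kb}}}{\nu_{t_{kb}}}$ by $\frac{\beta}{4}\sum_i\E\int_0^\lambda\Vert U_i\nabla f(y)-U_ig_\delta(y)\Vert^2\,ds$ and then controlling this with Assumption~\ref{assmp:grad_var} plus the dissipativity-based second-moment bound) is exactly the paper's argument, and you correctly identified why Assumption~\ref{assmp:lipschitz} is invoked. The genuine gap is in the final conversion from KL to $W_2$: you propose the Talagrand inequality "available under the LSI," but the Otto--Villani theorem gives $W_2(\mu,\nu)\lesssim\sqrt{\KL{\mu}{\nu}}$ only when the \emph{reference} measure $\nu$ satisfies the LSI (or $T_2$). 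Here the reference measure is $\nu_{t_{kb}}$, the law of the ideal block diffusion at a \emph{finite} time, for which no LSI constant is available --- only the stationary $\pi_\beta$ satisfies Assumption~\ref{assmp:lsi}. The paper instead uses the Bolley--Villani weighted transportation inequality (Lemma~\ref{lemma:weighted_tp_stoch}), which requires only exponential square-integrability of the reference measure; establishing that integrability for the block diffusion (Lemma~\ref{lemma:expint_stoch}, via It\^{o}'s lemma, dissipativity with $\beta>2/m$, and Assumption~\ref{assmp:init_density}) is a nontrivial piece of the proof your proposal omits entirely. This substitution is also what produces the $\KL{}{}^{1/2}+\KL{}{}^{1/4}$ structure and hence the $C_i+\sqrt{C_i}$ terms and the $k\lambda$ (rather than $\sqrt{k\lambda}$) prefactor in the stated bound; a plain Talagrand conversion would not reproduce the lemma's form. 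The Otto--Villani step is reserved for Theorem~\ref{theorem:w2_conv}, where the reference measure really is $\pi_\beta$.

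Your first route (synchronous coupling with a Gr\"onwall inequality on $\Vert x(t)-y(t)\Vert^2$) would not close under the stated assumptions: with only $L$-smoothness and no strong convexity, Young's inequality leaves a positive coefficient on $\Vert e(t)\Vert^2$, so Gr\"onwall yields a bound growing like $e^{\Theta(Lk\lambda)}$ rather than the polynomial $k\lambda$ in the lemma. Dissipativity controls second moments but does not give the pathwise contraction needed to tame that exponential, which is presumably why the paper abandons coupling in favor of the change-of-measure argument. Your instinct that the coupling route is the weaker of the two was right; the missing ingredient was the specific transportation inequality that replaces Talagrand when neither endpoint is the Gibbs measure.
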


From previous discussions, setting $\phi_i=1/b$, $\lambda_{i}=\lambda$ unifies the bounds for RBLD and CBLD. In this regime, we can prove the following statement as a simple consequence of the triangle inequality $W_2(\mu,\nu)\leq W_2(\mu,\eta)+W_2(\eta,\nu)$ and the Otto-Villani theorem
\begin{theorem}[Finite-Variation BLD $W_2$ Convergence]\label{theorem:w2_conv}
    \begin{equation}
    \begin{split}
        W_2(\mu_{t_{bk}},\pi_\beta)\leq& \left(\frac{2}{\gamma}\right)^{1/2} e^{-\gamma\beta^{-1}\lambda k}\sqrt{\KL{\mu_0}{\pi_\beta}}+\sqrt{C_0\biggl[(C_1 + \sqrt{C_1})+(C_2 + \sqrt{C_2})\sqrt{\lambda}\biggr]}k\lambda.
    \end{split}
    \end{equation}

\end{theorem}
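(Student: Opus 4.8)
\textbf{Proof proposal for Theorem~\ref{theorem:w2_conv}.}

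The plan is to introduce the ideal BLD process as an intermediary and split the distance via the triangle inequality for $W_2$. Let $\mu_{t_{bk}}$ denote the law of the non-ideal (perturbed) BLD iterate after $bk$ steps, let $\nu_{t_{bk}}$ denote the law of the ideal BLD iterate run with the \emph{same} block schedule, step duration $\lambda$, and initialization $\mu_0$, and recall $\pi_\beta\propto e^{-\beta f}$ is the target. Since $W_2$ is a genuine metric on probability measures with finite second moment (as noted after Theorem~\ref{thm:ld_conv}), we have
\begin{equation}
W_2(\mu_{t_{bk}},\pi_\beta)\leq W_2(\mu_{t_{bk}},\nu_{t_{bk}})+W_2(\nu_{t_{bk}},\pi_\beta).
\end{equation}
The first term is bounded directly by Lemma~\ref{lemma:w2_dist}, which under Assumptions~\ref{assmp:diff},~\ref{assmp:lsi},~\ref{assmp:grad_var},~\ref{assmp:dissapative},~\ref{assmp:init_density} (with $\beta>2/m$, $k\lambda>1$) gives exactly the additive term $\sqrt{C_0[(C_1+\sqrt{C_1})+(C_2+\sqrt{C_2})\sqrt\lambda]}\,k\lambda$ appearing in the statement.

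For the second term I would invoke the Otto--Villani / Talagrand transportation inequality implied by the LSI (Assumption~\ref{assmp:lsi}), namely $W_2(\nu_{t_{bk}},\pi_\beta)\leq(2/\gamma)^{1/2}\sqrt{\KL{\nu_{t_{bk}}}{\pi_\beta}}$, and then control the KL term using the ideal convergence theorems. In the unified regime $\phi_i=1/b$, $\lambda_i=\lambda$, both Theorem~\ref{thm:rbld_kl} and Theorem~\ref{theorem:CBLD_convergence} give $\KL{\nu_{t_{bk}}}{\pi_\beta}\leq e^{-2\gamma\beta^{-1}\lambda \cdot bk/b}\cdot\KL{\mu_0}{\pi_\beta}$ — wait, more carefully: Theorem~\ref{theorem:CBLD_convergence} after $kb$ steps gives $e^{-2\gamma\beta^{-1}\lambda_{\min}k}\KL{\mu_0}{\pi_\beta}$ with $\lambda_{\min}=\lambda$, while Theorem~\ref{thm:rbld_kl} after $bk$ steps gives $e^{-2\gamma\beta^{-1}\phi_{\min}\lambda_{\min}\cdot bk}\KL{\mu_0}{\pi_\beta}=e^{-2\gamma\beta^{-1}\lambda k}\KL{\mu_0}{\pi_\beta}$ since $\phi_{\min}=1/b$; the two agree. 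Taking the square root yields $\sqrt{\KL{\nu_{t_{bk}}}{\pi_\beta}}\leq e^{-\gamma\beta^{-1}\lambda k}\sqrt{\KL{\mu_0}{\pi_\beta}}$, so $W_2(\nu_{t_{bk}},\pi_\beta)\leq(2/\gamma)^{1/2}e^{-\gamma\beta^{-1}\lambda k}\sqrt{\KL{\mu_0}{\pi_\beta}}$. Adding the two bounds gives precisely the claimed inequality.

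Given that Lemma~\ref{lemma:w2_dist} and the ideal convergence theorems are already established, the theorem itself is essentially immediate — the only genuine subtlety is a bookkeeping one: making sure the ideal process $\nu_t$ used in Lemma~\ref{lemma:w2_dist} is coupled to the same block-selection realization and the same initialization as in Theorems~\ref{thm:rbld_kl}/\ref{theorem:CBLD_convergence}, so that the $W_2$-metric triangle inequality applies to the actual marginal laws (for RBLD one should either condition on the block sequence and note the bound is uniform in it, or take expectations appropriately). The real mathematical content — the Girsanov change-of-measure and second-moment control behind Lemma~\ref{lemma:w2_dist}, and the de Bruijn-inequality / chain-lemma arguments behind the ideal theorems — lives in the appendices; here the ``hard part'' is merely verifying that all the hypotheses of Lemma~\ref{lemma:w2_dist} are simultaneously compatible with the LSI hypothesis used for Otto--Villani, which they are by construction of the assumption list.
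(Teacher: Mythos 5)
Your proposal is correct and follows exactly the paper's own argument: the triangle inequality for $W_2$ through the ideal BLD law $\nu_{t_{bk}}$, Lemma~\ref{lemma:w2_dist} for the first term, and the Otto--Villani inequality combined with Theorems~\ref{thm:rbld_kl}/\ref{theorem:CBLD_convergence} (in the unified $\phi_i=1/b$, $\lambda_i=\lambda$ regime) for the second. Your bookkeeping check that the RBLD and CBLD exponents coincide after $bk$ steps is exactly the reconciliation the paper relies on.
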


Following ~\cite{raginsky_non-convex_2017}, if we choose $k\lambda=\frac{\beta}{\gamma}\log{\frac{2\sqrt{2\KL{\mu_0}{\pi_\beta}}}{\varepsilon\sqrt{\gamma}}}$ and set $\lambda\leq (\varepsilon \gamma)^{4}\left(\beta\log\left[{{2\sqrt{2\KL{\mu_0}{\pi_\beta}}}/{(\sqrt{\gamma}\varepsilon)}}\right])\right)^{-4}$, we have
    \begin{equation}
    \begin{split}\label{eqn:bias}
        W_2(\mu_{t_{bk}},\pi_\beta)\leq&\frac{\varepsilon}{2}+\sqrt{C_0}\biggl[\sqrt{C_1 + \sqrt{C_1}}\frac{\beta}{\gamma}\log{\frac{2\sqrt{2\KL{\mu_0}{\pi_\beta}}}{\varepsilon\sqrt{\gamma}}}+\varepsilon\sqrt{C_2 + \sqrt{C_2}}\biggr].
    \end{split}
    \end{equation}
    We thereby obtain a total bound on the Wasserstein error $\mathcal{O}(\log{\frac{1}{\varepsilon}}+\varepsilon)$ for arbitrary $\varepsilon > 0$. 
    \paragraph*{Observations:}Similar to discrete-time LMC, our Wasserstein bound has a non-zero lower bound with respect to $\varepsilon$: non-ideal devices introduce \textit{bias}. Unlike discrete LMC, the bias in Equation~\eqref{eqn:bias} does not result from a forward-flow discretization~\citep{wibisono_sampling_2018,chewi_analysis_2021}. Instead, the constants $C_0, C_1,C_2$ are solely due to finite analog variation. For $M=0$, $B=0$, we recover exponential, unbiased convergence in $W_2$. However, akin to LMC, practitioners can select the step size $\lambda$ and the injected noise $\beta$ to control the bias. Higher temperatures (lower $\beta$) result in a lower bias, as expected from the application of a noisy channel in measure space. Moreover, DX users/designers typically characterize $M$, $B$ during device calibration: simultaneously lowering the impact of analog non-ideality and allowing for a rough bound on the distribution bias~\citep[See Section C.2.a of ][]{melanson_thermodynamic_2023}.
    
    A Wasserstein bound suffices as a performance guarantee in sampling tasks such as Boltzmann machine inference~\citep{hinton2006fast} or statistical physics simulation~\citep{hamerly_experimental_2019,ng_efficient_2022,inaba_thermodynamic_2023}. For optimization, assuming quadratic function growth with $\beta\geq \frac{2}{m}$ and a dissapative gradient oracle (Assumption~\ref{assmp:dissapative}) allows the use of a continuity inequality ~\citep[Lemma 6 of ][]{raginsky_non-convex_2017} and second moment bound~\citep[Proposition 11 of][]{raginsky_non-convex_2017} to bound $\E_{\mu_{t_k}}[f(x)]-\E_{\pi_\beta}[f(x)]$ and $\E_{\pi_\beta}[f(x)]-\min_{x\in\R^d}f(x)$ respectively
    \begin{equation}\label{eqn:emp_exp_dist}
        \E_{\mu_{t_{bk}}}[f(x)]-\E_{\pi_\beta}[f(x)]\leq (M\sigma+B)W_2(\mu_{t_{bk}},\pi_\beta),
    \end{equation}
    \begin{equation}\label{eqn:gibbs_exp_dist}
        \E_{\pi_\beta}[f(x)]-\min_{x\in\R^d}f(x)\leq \frac{d}{2\beta}\log\left(\frac{eL}{m}\left(\frac{c\beta}{d}+1\right)\right)
    \end{equation}
    where $\sigma^2=\max\{\E_{\mu_{t_{bk}}}[x^2], \E_{\pi_\beta}[x^2]\}$ (given in Appendix~\ref{appdx:stochastic}). Combining Equations~\eqref{eqn:emp_exp_dist} and~\eqref{eqn:gibbs_exp_dist}, we obtain
    \begin{equation}\label{eqn:exp_conv}
    \begin{split}
        \E_{\mu_{t_{bk}}}[f(x)]-\min_{x\in\R^d}f(x)&\leq \frac{d}{2\beta}\log\left(\frac{eL}{m}\left(\frac{c\beta}{d}+1\right)\right) + (M\sigma+B)W_2(\mu_{t_{bk}},\pi_\beta)\\   
        &=\mathcal{O}(\frac{d}{\beta}\log {\beta}{d}+(M+B)(\varepsilon +\log\varepsilon^{-1})).
    \end{split}
    \end{equation}
    Controlling the first term requires increasing $\beta$ (lower injected noise) in tandem with problem dimension. Conversely, controlling the second term requires $\lambda kb\propto\beta$, $\lambda\propto \frac{1}{\beta^4}$, i.e., more iterations with lower step duration with increasing $\beta$. In digital algorithms, we are free to choose $\lambda$ arbitrarily small to meet given precision requirements (though the program convergence may be impracticably slow). Dynamical accelerators typically have a lower bound on $\lambda$ (e.g. a digital clock), translating into an effective upper bound on $\beta$. 
\section{Numerical Experiments}\label{sec:numerical}

As an illustrative example, we simulated CBLD and RBLD behavior for $d=50$ Gaussian sampling with $\beta=1$ and uniform $\lambda_i=\lambda$\footnote{Code and data can be found at~\url{https://github.com/ur-acal/BlockLangevin}.}. Gaussian distributions permit closed-form solutions for $\KL{\mathcal{N}(u_1, \Sigma_1)}{\mathcal{N}(u_2, \Sigma_2)}$, allowing for a quantitative estimate of convergence.
Moreover, several proposed use cases for DXs rely on Gaussian sampling, including matrix inversion~\citep{aifer_thermodynamic_2023} and uncertainty quantification~\citep{melanson_thermodynamic_2023}. Other works have also proposed using DXs to optimize strongly-convex functions of the form $f(x)=(x-u)^\top W(x-u)$~\citep{wuextending,song_ds-gl_2024}. As discussed in the preceding section, our bounds provide expected function gap guarantees from sampling $\pi=\mathcal{N}(u,2\beta^{-1} W^{-1})$, where optimization would occur in the $\beta\to\infty$ limit. Appendix~\ref{sec:experimental_methods} gives more experimental details.

\begin{wrapfigure}{r}{0.58\textwidth}
    \centering
    \vskip -5pt
\begin{subfigure}[]{0.49\linewidth}
    \includegraphics[width=\linewidth]{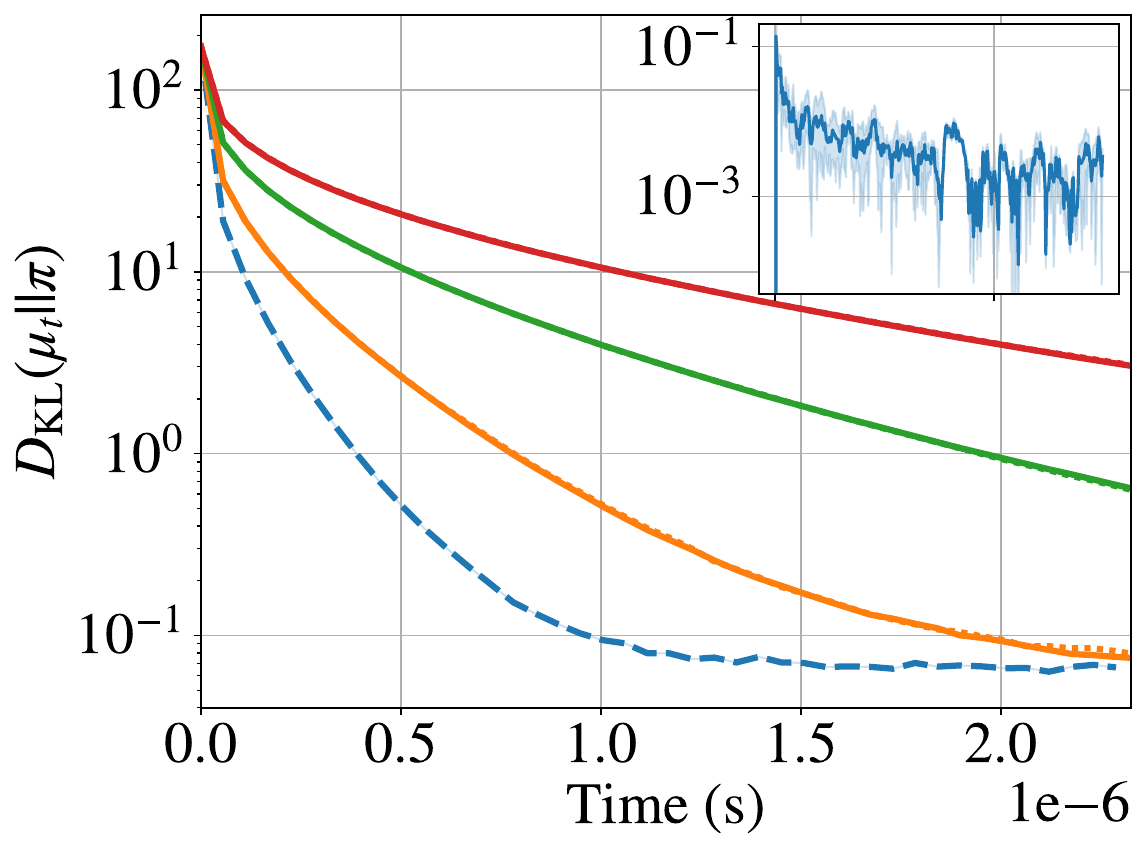}
    \vskip -5pt
    \subcaption{}
    \label{fig:gaussian_time}
\end{subfigure}
\hfill
\begin{subfigure}[]{0.49\linewidth}
    \includegraphics[width=\linewidth]{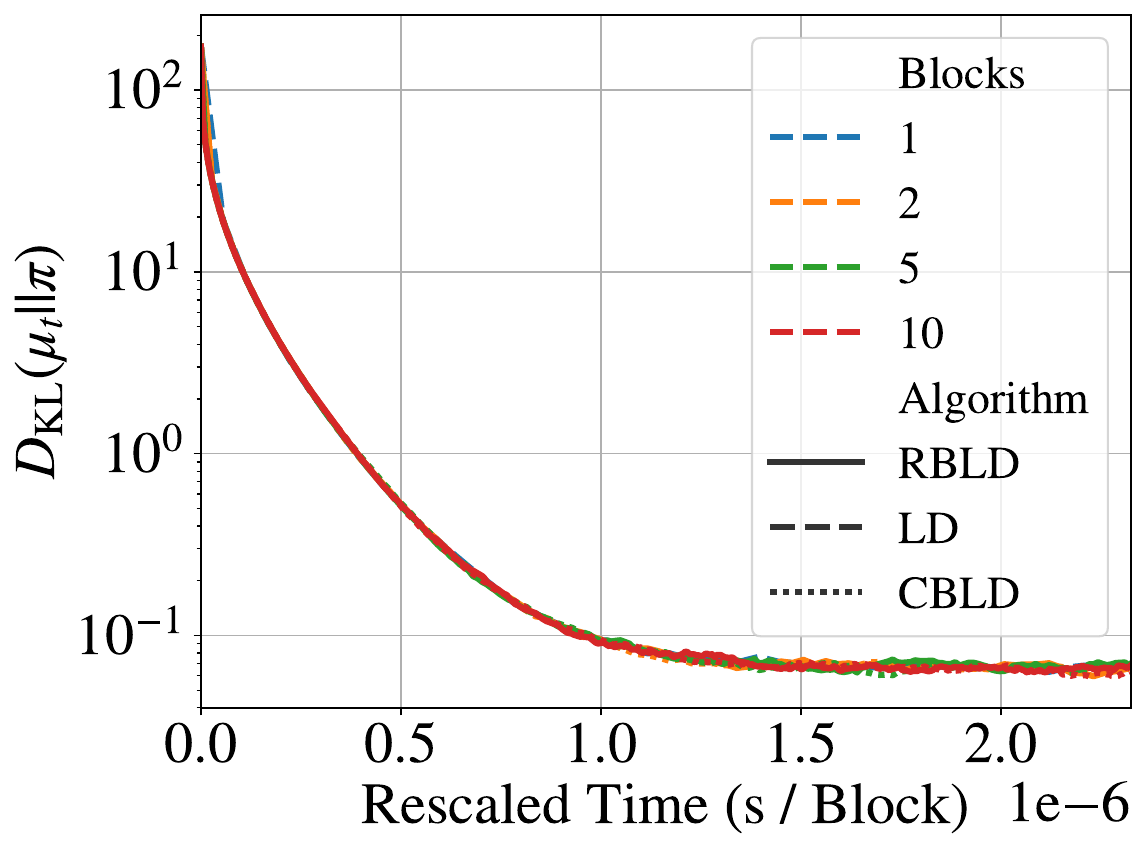}
    \vskip -5pt
    \subcaption{}
    \label{fig:gaussian_cycles}
\end{subfigure}
    \vskip -5pt
\caption{ Convergence in $\kl$ for varying block counts $b$ with for $b-$BCLD and $b-$RCLD \textbf{(a)} versus simulated time and \textbf{(b)} versus cycles $kb$. The inset plot in \textbf{(a)} shows the absolute difference between the RBLD and CBLD $\kl$ values averaged over each block count. }
    \vskip -5pt
\end{wrapfigure}

Fig.~\ref{fig:gaussian_time} shows the convergence in $\kl$ for the {estimated} distribution $\mu_{t}$ with block counts $b\in\{1,2,5,10\}$ versus simulated time. Note that we are only comparing the convergence \emph{relative to ideal LD} ($b=1$) rather than the exact rate or $\kl$ value (see Appendix~\ref{sec:experimental_methods}). The block methods converge slower than the full-gradient process, with the rate of convergence decreasing with more blocks. By comparing the exponents in Theorems~\ref{thm:ld_conv} ($-2\gamma\beta^{-1} t$) and~\ref{thm:rbld_kl} ($-2\gamma \beta^{-1}\lambda\phi_{\min}k$), we note that choosing $k=\frac{t}{\phi_{\min}\lambda}$ makes the two contractions equal. Similarly, the exponent in Theorem~\ref{theorem:CBLD_convergence} for a contraction in $b$ iterations $(-2\gamma\beta^{-1} \lambda k)$ suggests the choice $kb=\frac{t}{\lambda}$ iterations. Since $\lambda $ is the amount of time, spent per block, this suggests that equating the total time per block should result in roughly equivalent contractions.  Fig.~\ref{fig:gaussian_cycles} confirms this prediction by showing the same data after rescaling the x-axis by $b$ for each method, with the curves converging within sampling error. Block sampling therefore incurs an $O(b)$ slowdown in real time, similar to coordinate methods in optimization (without accounting for smoothness~\citep{wright_coordinate_2015}).
\pagebreak

\begin{wrapfigure}{o}{0.58\textwidth}
        \centering
\begin{subfigure}[t]{0.49\linewidth}
    \includegraphics[width=\linewidth]{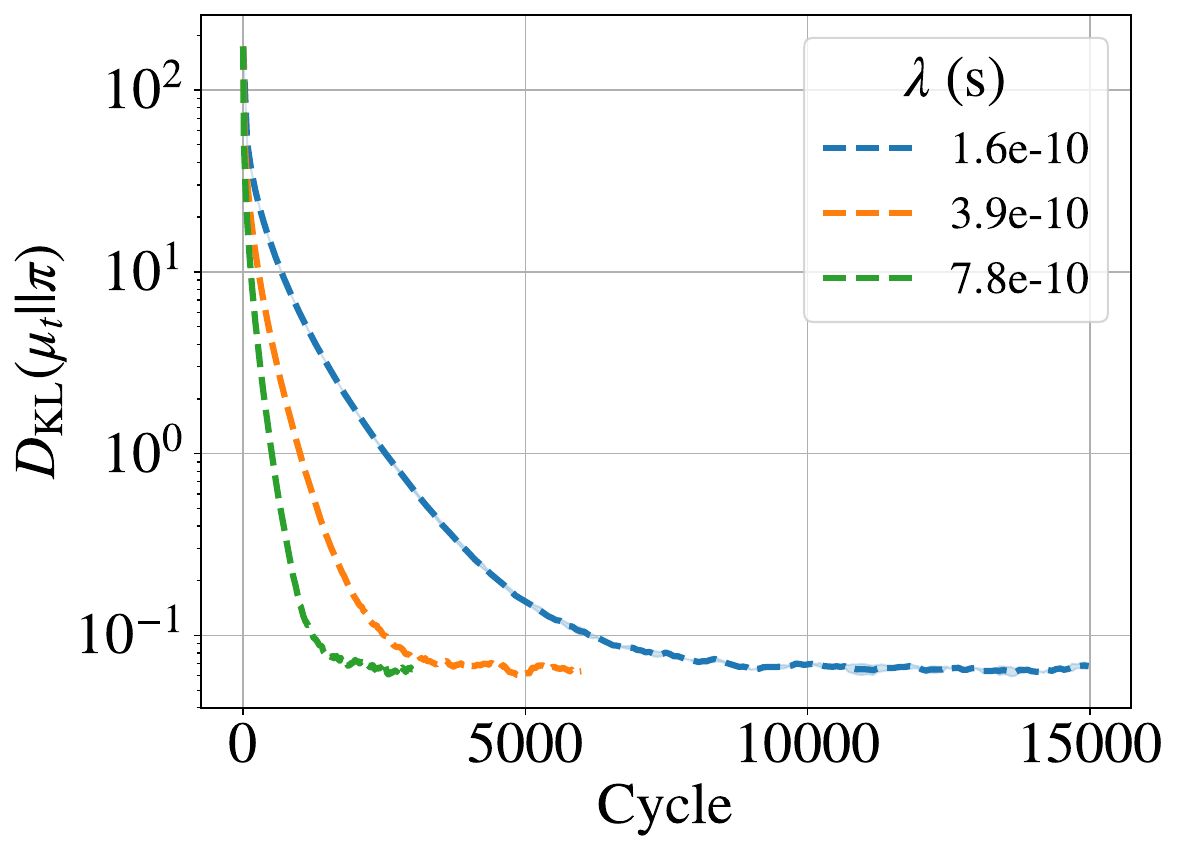}
    \subcaption{}
    \label{fig:gaussian_step}
\end{subfigure}
\begin{subfigure}[t]{0.49\linewidth}
    \includegraphics[width=\linewidth]{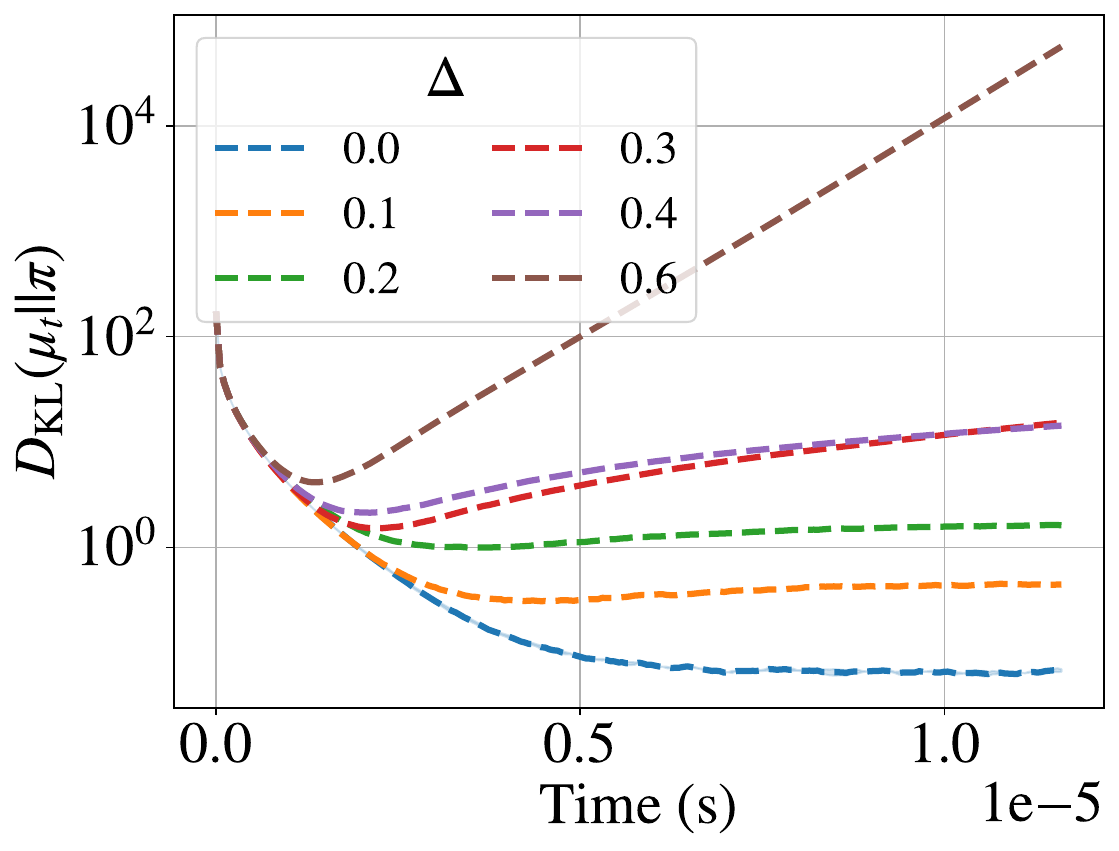}
    \subcaption{}
    \label{fig:gauss_pert}
\end{subfigure}
\caption{BCLD $\kl$ convergence \textbf{(a)} versus whole-problem cycles with varying block duration $\lambda$ and \textbf{(b)} versus simulated time with varying  multiplicative Gaussian perturbations. }
\label{fig:numerical}
\end{wrapfigure}

Our testing showed that all step sizes lead to the same convergence rate with respect to time (not shown). However, larger step sizes lead to larger decay w.r.t. whole problem \textit{cycles} ($b$ iterations), as shown in Fig.~\ref{fig:gaussian_step}. This reinforces the importance of \emph{per block} sampling time for continuous coordinate methods. Finally, we perturb the similarity matrix $\Sigma^{-1}$ with component-wise variation $\tilde\Sigma_{ij}=\Sigma_{ij}(1+\delta_{ij})$, $\delta_{ij}\sim\mathcal{N}(0,\Delta)$. Fig.~\ref{fig:gauss_pert} shows the impact on $\kl$ convergence with increasing perturbation strength. At $\Delta=0.6$, $\Sigma$ is no longer positive-definite, causing the iterate to diverge (not shown on plot). $\Delta\in\{0.1, 0.2, 0.3, 0.4, 0.5\}$ satisfied Assumption~\ref{assmp:dissapative} and were therefore stable (though biased), but $\Delta=0.6$ did not, in line with discussion in the preceding section.

\section{Conclusion}

\paragraph{Findings:}
In this work, we prove novel bounds for Large Neighborhood Local Search (LNLS) sampling algorithms leveraging continuous Langevin diffusion, providing valuable information to device designers, potential DX adopters, and future analyses. Specifically, we
\begin{enumerate}
    \item prove novel non-asymptotic convergence bounds for randomized and cyclic block selection strategies, finding that both methods produce identical convergence rates in KL-divergence.
    \item provide probabilistic convergence bounds for LNLS sampling using non-ideal analog devices, with biases expressed using practically measurable/estimable constants
    \item validate our theoretical results by numerical simulation, demonstrating \ding{172} the expected equivalence of cyclic and randomized strategies and \ding{173} the expected dependence of convergence rates on algorithm/problem parameters.
\end{enumerate}


\paragraph{Limitations:}
Assumptions~\ref{assmp:lsi} and~\ref{assmp:dissapative} provide useful bounds for many ML and optimization problems over continuous domains. However, DX applications include discrete choice problems and/or significantly non-convex potentials, such as mixed integer programming. Future bounds necessarily involve more general assumptions than the $\gamma$-LSI class considered here. Analog accelerators also typically use low-precision ($<8b$) DACs and ADCs for input/output~\citep{xiao_accuracy_2022}, making studies of quantizated convergence/expected function gap critical for real-world applications.

A technical limitation is the assumption of support over $\R^d$. DXs operate in bounded subspaces such as the unit circle~\citep{inagaki_coherent_2016} or hypercube~\citep{afoakwa_brim_2021}. Future work applying projected differential analysis~\citep{cherukuri_asymptotic_2016,bubeck_sampling_2018} is therefore needed.

\paragraph{Directions for Future Work:}
In this work, we focused on the "inference" stage where the coupling parameters are fixed. For training, current usages of DX systems update model weights in discrete time, either by the digital controller~\citep{song_ds-gl_2024} or by a discrete step in the analog domain~\citep{vengalam_supporting_2023}. It will be a boon to DX research to further develop theories for the training regime where model weights become the end goal for dynamics.

Finally, this work focuses on a simplified LNLS framework, relevant to the 
dynamics-accelerated LNLS popular in literature~\citep{sharma_increasing_2022, raymond_hybrid_2023, wuextending}. It leaves open the question whether additional digital steps, such as a Metropolis-Hastings filter or replica exchange, could improve the non-asymptotic accuracy or convergence rate.
\pagebreak
\section*{Acknowledgments}
This work was supported in part by NSF under Awards No. 2231036 and No. 2233378; and by
DARPA under contract No. FA8650-23-C-7312. Our thanks go to Prof. Jiaming Liang for several
fruitful discussions and helpful feedback on an early draft of this work.

\bibliographystyle{iclr2025_conference}
\bibliography{refs}

\vfill
\pagebreak
\appendix
\section*{Supplementary Materials}
Here we provide proofs and explanations of experimental methods. Additionally, we apply our analysis to bound $\kl$ for discrete-time variants of RBLD and CBLD (RBLMC and CBLMC). For clarity of notation, we omit the $\beta$ subscript when referring to the target distribution $\pi\triangleq\pi_\beta$.
\section{Experimental Methods}\label{sec:experimental_methods}
\subsection{Diffusion Simulation}
The purpose of our experiments is a ``proof-of-concept'', hence we focus on demonstrating \ding{172} the expected performance equivalence between RBLD and CBLD, \ding{173} the significance of stepsize choice, and \ding{174} the impact of stochastic perturbations.

We simulate Langevin SDEs using an Euler-Maruyama integration scheme with a time step size of \SI{1.6e-11}{} seconds. Block diffusions are simulated for a fixed number of steps, then the block is switched either cyclically or randomly, depending on the algorithm. All code is implemented in PyTorch and was run on a desktop system using an i9-13900k with 64 GB of RAM and an RTX 4090 GPU. Our simulator, plotting code, and data is publicly available at \url{https://github.com/ur-acal/BlockLangevin}.

We take the resistively coupled BRIM architecture from~\cite{afoakwa_brim_2021} with the Langevin perturbations proposed by~\cite{sharma_combining_2023} as our baseline DX. The BRIM architecture is more easily extensible to general classes of real-valued functions~\citep{sharma_combining_2023,song_ds-gl_2024,wuextending} than oscillator-based DXs~\citep{wang_oim_2019,inagaki_coherent_2016}, motivating the selection. 

We model the device using $\SI{310}{k Ohm}$ resistors and $\SI{50}{fF}$ capacitors, leading to an RC time constant of $\SI{1.55e-8}{s}$ and an effective step size of $\SI{1.55e-11}{s}$, which we use to plot total estimated DX time. These circuit parameters are comparable to those proposed in literature~\citep{afoakwa_brim_2021,zhang_qubrim_2022}, however different device parameters will simply rescale the x-axis.

\subsection{Target Potential}
As stated in the main text, we choose a Gaussian target measure to obtain a direct estimate of convergence rather than using proxy statistical observables, as done in~\cite{ding_langevin_2021}. The $d=50$ Gaussian used to produce Fig.~\ref{fig:numerical} was generated using the following procedure:
\begin{enumerate}
    \item Generate a $50\times 50$ matrix $\Sigma_\pi^{-1}$ with elements $\sim \textrm{Unif}[-5,5]$
    \item Make the matrix symmetric by setting $\Sigma_\pi^{-1}=\frac{1}{2}(\Sigma_\pi^{-1} + (\Sigma_\pi^{-1})^\top)$
    \item To make $\Sigma_\pi$ positive definite, set $\Sigma_\pi^{-1}=\Sigma^{-1}+1.2\lambda_{\min}I_{50}$
\end{enumerate}
The resulting matrix is symmetric and positive-definite, making it a valid similarity matrix. We then invert $\Sigma_\pi^{-1}$ to obtain the target covariance matrix $\Sigma_\pi$. We choose $[-5,5]$ as the distribution to test a larger range of perturbation strengths $\Delta\in[0.1,0.4]$, as the $W_2$ diverged much earlier ($\Delta<0.2$) with a uniform $[-1,1]$ distribution.

As our focus is sampling rather than optimization, we set $\beta=1$ for simplicity. We also assume the Gaussian mean is zero, making the target distribution
\begin{equation}
    \pi(x)\propto e^{-\frac{1}{2}x^\top\Sigma_\pi^{-1}x}.
\end{equation}
\subsection{Sampling Procedure}
We randomly initialize $N=10^4$ states and evolve them in parallel with equivalent block selections. Every 30 iterations (time $t$) we compute the empirical covariance matrix for time $\Sigma_t$ and the empirical mean $u_t$ to obtain the estimated $\mu_{t,Est}=\mathcal{N}(u_t, \Sigma_t)$. We then compute the similarity to the target Gaussian. For completeness we compute both $W_2$ and $\kl$ using 
\begin{equation}\label{eqn:w2_gauss}
    W^2_2(\mu_{t,Est}, \mathcal{N}(0,\Sigma_\pi))=||\overline x_t||^2+\mathrm{Tr}\left[\Sigma_t+\Sigma_\pi-2\sqrt{\sqrt{\Sigma_\pi}\Sigma_t\sqrt{\Sigma_\pi}}\right]
\end{equation}
for $W_2$ and a PyTorch library function for $\kl$, which computes 
\begin{equation}
    \KL{\mu_{t,Est}}{\mathcal{N}(0,\Sigma_\pi)}=\frac{1}{2}\left[\log\frac{\det \Sigma_\pi}{\det \Sigma_t}-d + \mathrm{Tr}[\Sigma_\pi^{-1}\Sigma_t] + u_t^T\Sigma_\pi^{-1}u_t\right].
\end{equation}

\section{Review of Langevin Dynamics}
In this section we review the traditional proof of Equation~\eqref{eqn:ld_kl_conv} as presented by~\cite{vempala_rapid_2019,chewi_log-concave_2024}. Proofs in succeeding sections follow similar processes, making a brief review useful for establishing context.

Recall that the LD SDE is given by
\begin{equation}\label{eqn:appdx_ld}
    dx = -\nabla f(x)dt + \sqrt{2\beta^{-1}}dW_t.
\end{equation}

The Fokker-Planck equation (FPE) for Equation~\eqref{eqn:appdx_ld} is given by~\citep{vempala_rapid_2019, jordan_variational_1998}
\begin{equation}
    \frac{\partial \mu_t}{\partial t}= \beta^{-1} \nabla^2\cdot \mu_t + \nabla\cdot [\mu_t\nabla f(x)]
\end{equation}
where $\mu_t$ is the law of $x(t)$. For convenience, we will abbreviate $\frac{\partial \mu_t}{\partial t}$ as $\partial_t \mu_t$.

Note that the right hand side of the FPE can be equivalently expressed as
\begin{equation}\label{eqn:log_fpe}
    \begin{split}
        \beta^{-1} \nabla^2\cdot \mu_t + \nabla\cdot [\mu_t\nabla f(x)]\\
        =\beta^{-1}\nabla\cdot [\nabla\mu_t + \beta\mu_t\nabla f(x)]\\
        =\beta^{-1}\nabla\cdot [\mu_t\nabla\ln\mu_t -\mu_t\nabla \ln\pi_\beta]\\
        =\beta^{-1}\nabla\cdot [\mu_t\nabla\ln\frac{\mu_t}{\pi_\beta}].
    \end{split}
\end{equation}

The time derivative of the KL-divergence is given by
\begin{equation}
    \begin{split}
        \partial_t\KL{\mu_t}{\pi_\beta}=\partial_t\int \mu_t(x) \ln\frac{\mu_t(x)}{\pi_\beta(x)}dx\\
        =\int \left[[\partial_t\mu_t(x)] \ln\frac{\mu_t(x)}{\pi_\beta(x)} + \mu_t(x) \partial_t\ln\frac{\mu_t(x)}{\pi_\beta(x)}\right] dx
    \end{split}
\end{equation}
The second term is equal to zero, since
\begin{equation}
    \begin{split}
       \int \mu_t(x) \partial_t\ln\frac{\mu_t(x)}{\pi_\beta(x)}dx = \int\partial_t \mu_t(x) dx=\partial_t\int \mu_t(x) dx=0.
    \end{split}
\end{equation}
Then, applying Equation~\eqref{eqn:log_fpe} and integrating by parts
\begin{equation}\label{eqn:ld_fi_relation}
\begin{split}
    \partial_t\KL{\mu_t}{\pi_\beta}=\int[\partial_t\mu_t(x)] \ln\frac{\mu_t(x)}{\pi_\beta(x)}dx\\
    =\int\beta^{-1}\nabla\cdot [\mu_t(x)\nabla\ln\frac{\mu_t(x)}{\pi_\beta(x)}] \ln\frac{\mu_t(x)}{\pi_\beta(x)}dx\\
    =-\int\beta^{-1}\mu_t\inner{\nabla\ln\frac{\mu_t(x)}{\pi_\beta(x)}}{\nabla\ln\frac{\mu_t(x)}{\pi_\beta(x)}dx}\\
    =-\int\beta^{-1}\mu_t\norm{\nabla\ln\frac{\mu_t(x)}{\pi_\beta(x)}}^2dx
    =-\beta^{-1}FI(\mu_t\Vert \pi_\beta)
\end{split}
\end{equation}
where $FI(\mu_t\Vert \pi_\beta)$ is the relative Fisher information of $\mu_t$ relative to $\pi_\beta$. 

If $\pi_\beta(x)\propto e^{-\beta f(x)}$ satisfies a log-Sobolev inequality then
\begin{equation}
    \KL{\mu_t}{\pi_\beta}\leq \frac{1}{2\gamma}FI(\mu_t\Vert \pi_\beta).
\end{equation}
Combining the LSI with Equation~\ref{eqn:ld_fi_relation}, we obtain
\begin{equation}
    \partial_t\KL{\mu_t}{\pi_\beta}=-\beta^{-1}FI(\mu_t\Vert \pi_\beta)\leq -2\gamma\beta^{-1}\KL{\mu_t}{\pi_\beta}
\end{equation}
which implies exponential convergence of $\KL{\mu_t}{\pi_\beta}$
\begin{equation}
    \KL{\mu_t}{\pi_\beta} =e^{-2\gamma\beta^{-1}t}\KL{\mu_0}{\pi_\beta}.
\end{equation}

\section{Randomized Block Langevin Diffusion (RBLD)}\label{sec:rbld_proof}
    In this section we provide proofs relating to Randomized Block Langevin Diffusion (RBLD, the focus of the main text) and a time-discretized version, Randomized Block Langevin Monte Carlo (RBLMC). RBLMC was previously introduced in~\cite{ding_random_2021} as a coordinate-wise scheme, however we examine block partitions. Moreover, our results using $\gamma$-LSI target measures are more general than the strongly log-concave convergence results given in that work. For simplicity, we use the shorthand $x_t\triangleq x(t)$ throughout.

Algorithm~\ref{alg:rbld} gives the structure of RBLD/RBLMC sampling, where $\phi=\{\phi_1,...,\phi_b\}$ is a discrete probability mass function over coordinate block indices.

\subsection{Continuous Time Iteration}
\begin{algorithm}
\caption{Randomized Block Langevin Dynamics (RBLD)}
\label{alg:rbld}
    \begin{algorithmic}[1]
        \Procedure{RBLD}{$x_0\in\textrm{dom}(f)$, Block Distribution $\phi$ over $\{B_1,...,B_b\}$, Step Size Set $\lambda\in\mathbb{R}^b_+$}
        \For{$k\geq 0$}
            \State Choose $i\sim\phi$ and set $t_{k+1}=t_k+\lambda_i$
            \State Sample:
            \begin{align}
                x_{t_{k+1}}&=x_{t_{k}}-\int_{t_k}^{t_{k+1}}U_i\nabla f(x)dt +\int_{t_k}^{t_{k+1}}\:U_i\sqrt{2\beta^{-1}}dW_t
            \end{align}
        \EndFor
        \EndProcedure
    \end{algorithmic}
\end{algorithm}
We first consider the case when each diffusion occurs in continuous time.
For a single iteration, we can formulate the evolution of the system by the following It\^{o} SDE:
\begin{equation}
    dx=-U_k\nabla \left(f(x)dt +\sqrt{2\beta^{-1}}dW_t\right)
\end{equation}

To prove continuous-time descent in KL-divergence, we combine standard Langevin gradient flow arguments with methodology inspired by Ref.~\cite{vempala_rapid_2019} when considering expectation terms.

\subsection{Fokker-Planck Equation}\label{ssec:fpe_rcld}
\newcommand{\filtermu}{\mu_{t|0}}

Let $\mu_t$ be the law of $x_t$, and let $\filtermu$ be the measure jointly conditioned \ding{172} on the state at time 0 and \ding{173} the choice of block $B_{k}$. Within a single step, $\filtermu$ will obey the Fokker-Planck continuity equation
\[\partial_t\filtermu=\mathrm{Tr}[U_k\beta^{-1}\nabla^2\filtermu]+\nabla\cdot(\filtermu U_k\nabla f(x_t)).\]

If we were tracking the diffusion over a single block, we would take expectation over the starting state $x_0$ while conditioning on the block index. However, as discussed in the main text, we take a ``meta-Eulerian'' perspective. Instead of tracking one block diffusion, our approach finds the average behavior of an ensemble of diffusion processes, each independently sampling their blocks according to $\phi$. We therefore take expectation over both $x_0$ and $B_{k}$ to derive the change in the ``ensemble'' measure $\mu_t$.

Therefore we have
\begin{equation}
    \begin{split}
\partial_t\mu_t&=Tr[\beta^{-1}U_{\phi}\nabla^2\mu_t]+\E[\nabla\cdot(\filtermu U_k\nabla f(x))].
    \end{split}
\end{equation}

Where we have defined $U_\phi\triangleq (\phi_1U_1,...,\phi_bU_b)\in\mathbb{R}^{d\times d}$.

Let $\nu$ be the joint law of $(x_0, B_{k})$. Note that
\begin{equation}
    \begin{split}
        \mu_t(x_t|x_0,  B_{k})\nu(x_0, B_{k})&=\mu_t(x_t)\nu(x_0,  B_{k}|x_t)\\
        &=\mu_t(x_t)\nu(x_0 | x_t,  B_{k})\nu( B_{k} | x) \\
        &=\mu_t(x_t)\nu(x_0 | x_t,  B_{k})\nu( B_{k}) \\
        &=\mu_t(x_t)\nu(x_0 | x_t,  B_{k})\phi_k .
    \end{split}
\end{equation}
Then we can express the second term as
\begin{equation}
    \begin{split}
        \E[\nabla\cdot(\filtermu U_k\nabla  f(x_t))]&=\nabla\cdot(\sum_{i=1}^b\int \mu_t(x_t|x_0,i)U_k\nabla  f(x_t)\nu(x_0,i)dx_0)\\
        &=\nabla\cdot(\sum_{i=1}^b\phi_i\int \mu_t(x_t)U_i\nabla  f(x_t)\nu(x_0|x_t, i)dx_0)\\
        &=\nabla\cdot(\mu_t(x_t)U_\phi\nabla f(x_t)) 
    \end{split}
\end{equation}
since
\[\sum_{i=1}^b\phi_i U_i\nabla  f(x_t)=U_\phi\nabla f(x_t).\]
Therefore, the FPE of the ``meta-Eulerian'' RBLD process is
\begin{equation}
\label{eqn:rbld_fpe_2}
\partial_t\mu_t=\mathrm{Tr}[\beta^{-1}U_\phi\nabla^2\mu_t]+\nabla\cdot(\mu_t U_\phi\nabla f(x_t)).
\end{equation}
Note that the we can use the identity $\nabla f(x)=\beta^{-1}\nabla \log \pi$ to re-express the FPE as
\begin{equation}
\label{eqn:rbld_fpe}
\partial_t\mu_t=\diverg\left(\beta^{-1}\mu_t U_\phi\nabla \log\frac{\mu_t}{\pi}\right).
\end{equation}
\subsection{KL-Divergence Contraction}\label{ssec:debrujin_rcld}
\begin{lemma}
    \[\KL{\mu_t}{\pi}\leq \KL{\mu_0}{\pi}e^{-2\beta^{-1}\gamma \lambda_{\min}\phi_{\min}}.\]
\end{lemma}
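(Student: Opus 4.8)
The plan is to reproduce the classical Langevin KL-contraction argument reviewed above, but driven by the meta-Eulerian Fokker--Planck equation~\eqref{eqn:rbld_fpe} in place of the ordinary FPE. First I would differentiate $\KL{\mu_t}{\pi}$ in time and note, exactly as in the review, that the term $\int\mu_t\,\partial_t\log\tfrac{\mu_t}{\pi}\,dx=\partial_t\int\mu_t\,dx=0$ drops out, leaving $\partial_t\KL{\mu_t}{\pi}=\int[\partial_t\mu_t]\log\tfrac{\mu_t}{\pi}\,dx$. Substituting~\eqref{eqn:rbld_fpe} and integrating by parts (with the boundary term vanishing) then gives
\begin{equation}
    \partial_t\KL{\mu_t}{\pi}=-\beta^{-1}\int\mu_t\,\inner{\nabla\log\tfrac{\mu_t}{\pi}}{U_\phi\nabla\log\tfrac{\mu_t}{\pi}}\,dx.
\end{equation}

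The one genuinely new step is to lower-bound the quadratic form by $U_\phi$. Because the blocks $B_1,\dots,B_b$ partition $\{1,\dots,d\}$, the matrix $U_\phi=\sum_{i=1}^b\phi_i U_i$ is diagonal, its $j$-th entry being $\phi_{i(j)}$ where $i(j)$ is the unique block containing coordinate $j$; hence $U_\phi\succeq\phi_{\min}I$ and $\inner{v}{U_\phi v}\geq\phi_{\min}\norm{v}^2$ for all $v$. Inserting this pointwise yields the de Bruijn inequality $\partial_t\KL{\mu_t}{\pi}\leq-\phi_{\min}\beta^{-1}\operatorname{FI}(\mu_t\Vert\pi)$. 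Applying the LSI of Assumption~\ref{assmp:lsi} in the form $\operatorname{FI}(\mu_t\Vert\pi)\geq 2\gamma\KL{\mu_t}{\pi}$ gives $\partial_t\KL{\mu_t}{\pi}\leq-2\gamma\beta^{-1}\phi_{\min}\KL{\mu_t}{\pi}$, and Grönwall's inequality then delivers $\KL{\mu_t}{\pi}\leq e^{-2\gamma\beta^{-1}\phi_{\min}t}\KL{\mu_0}{\pi}$; since~\eqref{eqn:rbld_fpe} governs the ensemble only while every member is still evolving its originally chosen block, I evaluate this at the end of the shortest such window, $t=\lambda_{\min}$, to obtain the claimed bound.

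I expect the main obstacle to be the analytic (rather than algebraic) content: justifying the interchange of $\partial_t$ with the spatial integral and the integration by parts without a boundary contribution, which needs sufficient regularity and decay of $\mu_t$ and $\nabla\log\tfrac{\mu_t}{\pi}$. Following~\cite{vempala_rapid_2019}, I would handle this by establishing the identity first along a regularized flow (or under a priori moment/smoothness bounds, using that the heat-type FPE instantaneously smooths $\mu_t$ for $t>0$) and passing to the limit. A minor additional point to state carefully is why the per-step factor carries $\lambda_{\min}$ rather than a generic time $t$, namely the desynchronization of step lengths $\lambda_i$ across the ensemble after $t=\lambda_{\min}$; iterating the single-step estimate just proved is exactly what produces Theorem~\ref{thm:rbld_kl}.
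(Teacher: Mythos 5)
Your proposal is correct and follows essentially the same route as the paper: differentiate the KL divergence, substitute the meta-Eulerian FPE, integrate by parts, lower-bound the quadratic form via $U_\phi\succeq\phi_{\min}I$ to get the de Bruijn inequality, apply the LSI, and integrate over the (shortest) step window. The paper's only cosmetic difference is that it integrates each block's diffusion to its own $\lambda_i$ and then passes to $\lambda_{\min}$ (or an expectation over $i$), rather than stopping the whole ensemble at $t=\lambda_{\min}$, but the resulting bound is identical.
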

\begin{proof}
    
The proof follows conventional analyses of Langevin diffusion processes, e.g., see ~\cite{vempala_rapid_2019, chewi_log-concave_2024,chewi_analysis_2021}. However, we complete the proof anew for completeness, as well as to show the differences with baseline LD. 

With the time evolution of the measure, we can now express the time evolution of the KL-divergence
\begin{equation}
\begin{split}
    \partial_t\KL{\mu_t}{\pi}&=\partial_t\int\mu_t(x)\log\frac{\mu_t(x)}{\pi(x)}dx\\
    &=\int\partial_t[\mu_t(x)\log\frac{\mu_t(x)}{\pi(x)}]dx\\
    &=-\int\partial_t\mu_t(x)\log\frac{\mu_t(x)}{\pi(x)}dx + \overbrace{\int\partial_t\mu_t(x)dx}^{=0}\\
\end{split}
\end{equation}
where the second term is equal to zero since \[\int\partial_t\mu_t(x)dx=\partial_t\int\mu_t(x)dx=\partial_t[1]=0.\]
Using Eqn.~\eqref{eqn:rbld_fpe}, we then have
\begin{equation}
\begin{split}
    \partial_t\KL{\mu_t}{\pi}
    &=\int\{\diverg\left(\beta^{-1}\mu_t U_\phi\nabla \log\frac{\mu_t}{\pi}\right)\}\log\frac{\mu_t(x)}{\pi(x)}dx.\\
\end{split}
\end{equation}
Through integration by parts, we obtain
\begin{equation}
\begin{split}
    \partial_t\KL{\mu_t}{\pi}&=-\beta^{-1}\int\inner{U_\phi\mu_t\nabla\log\frac{\mu_t}{\pi}}{\nabla\log\frac{\mu_t(x)}{\pi(x)}}dx\\
    &=-\beta^{-1}\E_{\mu_t}\left[\inner{U_\phi\nabla\log\frac{\mu_t}{\pi}}{\nabla\log\frac{\mu_t(x)}{\pi(x)}}\right].\\
\end{split}
\end{equation}
$U_\phi$ is positive-definite with minimum eigenvalue $\phi_{\min}$, therefore
\begin{equation}
    \begin{split}
        \partial_t\KL{\mu_t}{\pi}=&-\beta^{-1}\E_{\mu_t}\left[\inner{U_\phi\nabla\log\frac{\mu_t}{\pi}}{\nabla\log\frac{\mu_t(x)}{\pi(x)}}\right]
        \leq-\beta^{-1}\phi_{\min}\E_{\mu_t}\brackets{\left|\left|\nabla\log\frac{\mu_t}{\pi}\right|\right|^2}\\
        =&-\beta^{-1}\phi_{\min}FI(\mu_t\Vert \pi)\leq -2\beta^{-1}\gamma\phi_{\min}\KL{\mu_t}{\pi}
    \end{split}
\end{equation}
where the last inequality utilizes the $\gamma$-LSI. Here we highlight a principle difference between LD and RBLD analysis. In LD, we have the ``de Brujin \textit{identity}''
\begin{equation}
    \partial_t\KL{\mu_t}{\pi}=-2\beta^{-1}\gamma FI(\mu_t\Vert \pi).
\end{equation}
However, for RBLD we have a ``de Brujin \textit{in}equality''
\begin{equation}
    \partial_t\KL{\mu_t}{\pi}\leq-\beta^{-1}\gamma\phi_{\min}FI(\mu_t\Vert \pi).
\end{equation}

We now integrate up to $\lambda_{k}$. Since this step size depends on the choice of $k$, we take expectation of $\KL{\mu_{\lambda_{i}}}{\pi}$ where $t_k=\sum^{k}_{i=1}\lambda_{k}$
\begin{equation}
    \E[\KL{\mu_k}{\pi}]\leq\E[e^{-2\gamma\beta^{-1}\phi_{\min}\lambda_i}]\KL{\mu_{k-1}}{\pi}
\end{equation}
or deterministically
\begin{equation}
    \KL{\mu_k}{\pi}\leq e^{-2\gamma\beta^{-1}\phi_{\min}\lambda_{\min}}\KL{\mu_{k-1}}{\pi}.
\end{equation}
Expanding the inequality $k$ times yields the result. 
\end{proof}

\subsection{RCLMC: Euler-Maruyama Discretization}\label{sec:rblmc_proof}
We now extend our analysis to discrete-time Randomized Block Langevin Monte Carlo (RBLMC), shown in Algorithm.~\ref{alg:rblmc}.
\begin{algorithm}
\caption{Randomized Block Langevin Monte Carlo (RBLMC)}
\label{alg:rblmc}
    \begin{algorithmic}[1]
        \Procedure{RBLMC}{$x_0\in\textrm{dom}(f)$, Block Distribution $\phi$ over $\{B_1,...,B_b\}$, Step Size Set $\lambda\in\mathbb{R}^b_+$}
        \For{$k\geq 0$}
            \State Choose $i\sim\phi$, sample $\xi^k\sim\mathcal{N}(0, I^d)$
            \State Set:
            \begin{align}
                x^{k+1}&=x^{k}-\lambda_iU_i\nabla f(x^{k})+U_i\sqrt{2\lambda_i}\xi^k
            \end{align}
        \EndFor
        \EndProcedure
    \end{algorithmic}
\end{algorithm}
While the continuous-time diffusion can be implemented on dynamical hardware, digital applications require an error bound in the discrete-setting. The following derivation closely follows the methods of ~\cite{vempala_rapid_2019} by modeling the divergence of the discrete scheme from a continuous-time interpolation. To simplify constant terms, we take $\beta=1$ for this section.

We now consider the SDE
\begin{equation}
    dx=U_k[-\nabla f(x_0)dt+\sqrt{2}dW_t]
\end{equation}
where $x_0$ is the initial state. The SDE has the solution
\begin{equation}
    x_t=x_0+U_k[-\nabla f(x_0)t+\sqrt{2t}\xi_t]
\end{equation}
for $t\in[0, \lambda_n]$ and $\xi_t\sim\mathcal{N}(0,I^{d_i})$. Conditioned on the initial state $x_0$ and the choice of $i$, we have the FPE
\begin{equation}
    \begin{split}
\partial_t\mu_{t|k,x_0}=\nabla^2\cdot\mu_{t|k,x_0}+\nabla\cdot{\mu_{t|k,x_0}U_i\nabla  f(x_0)}.
    \end{split}
\end{equation}
Taking expectation over both sides (as previously) yields
\begin{equation}
    \begin{split}
\partial_t\mu=\mathrm{Tr}[U_\phi\nabla^2\mu_{t|k,x_0}]+\nabla\cdot{\E[\mu_{t|k,x_0}U_i\nabla  f(x_0)]}.
    \end{split}
\end{equation}
Again noting that the choice of block and the initial state $x_0$ are independent, we can express the expectation as
\begin{equation}
    \begin{split}
        \E[\mu_{t|k,x_0}U_k\nabla  f(x_0)]=&\sum_{i=1}^b\phi_i\int\mu(x_t|i,x_0)\nu(x_0)U_i\nabla  f(x_0)dx_0.
    \end{split}
\end{equation}
Note that while $x_0$ and $\phi_i$ are independent random variables, they are not independent when conditioned on $x_t$. We then have
\begin{equation}
    \begin{split}
        \phi_i\mu(x_t|i,x_0)\nu(x_0)&=\mu(x_t|i,x_0)\nu(x_0,i)\\
        &=\mu(x_t)\nu(x_0,i|x_t)\\
        &=\mu(x_t)\nu(x_0|x_t,i)\phi_{i|x_t}\\
        &=\mu(x_t)\nu(x_0|x_t,i)\phi_{i}.
    \end{split}
\end{equation}

Then
\begin{equation}
    \begin{split}
        \sum_{i=1}^b\phi_i\int\mu(x_t|i,x_0)\nu(x_0)U_i\nabla  f(x_0)dx_0&=\sum_{i=1}^b\phi_i\int\mu(x_t)\nu(x_0|x_t,i)U_i\nabla  f(x_0)dx_0\\
        &=\mu(x_t)\int\nu(x_0|x_t)U_\phi\nabla f(x_0)dx_0\\
        &=\mu_tU_\phi\E[\nabla f(x_0)].
    \end{split}
\end{equation}
We then have the following FPE
\begin{equation}
\begin{split}
    \partial_t\mu_t=\mathrm{Tr}[U_\phi\nabla^2\mu_t] + \nabla\cdot[\mu_tU_\phi\E[\nabla f(x_0)]]\\
    =\nabla\cdot[U_\phi\nabla\mu_t+\mu_tU_\phi\E[\nabla f(x_0)]].
\end{split}
\end{equation}

Combining our previous argument with the analysis of \cite{vempala_rapid_2019}, we have
\begin{equation}
 \begin{split}
    \partial_t\KL{\mu_t}{\pi} &=\partial_t\int\mu_t(x)\log\frac{\mu_t(x)}{\pi(x)}dx\\
    &=\int\partial_t\mu_t(x)\log\frac{\mu_t(x)}{\pi(x)}dx\\
    &=\int\nabla\cdot[U_\phi\nabla\mu_t+\mu_tU_\phi\E[\nabla f(x_0)]]\log\frac{\mu_t(x)}{\pi(x)}dx\\
    &=-\int\inner{U_\phi\nabla\mu_t+\mu_tU_\phi\E[\nabla f(x_0)]]}{\nabla\log\frac{\mu_t(x)}{\pi(x)}}dx\\
    &=-\int\inner{U_\phi\mu_t\nabla\log\mu_t+U_\phi\mu_t\nabla\log\pi-\mu_t\nabla\log\pi+\mu_tU_\phi\E[\nabla f(x_0)]]}{\nabla\log\frac{\mu_t(x)}{\pi(x)}}dx\\
    &=-\int\inner{U_\phi\mu_t\nabla\log\frac{\mu_t}{\pi}+\mu_tU_\phi\E[\nabla f(x_0)-\nabla f(x_t)]]}{\nabla\log\frac{\mu_t(x)}{\pi(x)}}dx\\
    &= -\E[\Vert U_\phi^{1/2}\nabla\log\frac{\mu_t(x)}{\pi(x)}\Vert ^2]+\E[\inner{U_\phi^{1/2}\E[\nabla f(x_t)-\nabla f(x_0)]}{U_\phi^{1/2}\nabla\log\frac{\mu_t(x)}{\pi(x)}}].\\
 \end{split}   
\end{equation}

where we have used the fact that $U_\phi$ is a diagonal matrix with non-negative entries, so  $U_\phi=U_\phi^{1/2}U_\phi^{1/2}=(U_\phi^{1/2})^TU_\phi^{1/2}$.
Then we have (by Cauchy-Schwartz and Young's)
\begin{equation}
    \begin{split}
        \E[\inner{U_\phi^{1/2}\E[\nabla f(x_t)-\nabla f(x_0)]}{U_\phi^{1/2}\nabla\log\frac{\mu_t(x)}{\pi(x)}}]&\leq \E[\Vert U_\phi^{1/2}\E[\nabla f(x_t)-\nabla f(x_0)]\Vert ^2]+\frac{1}{4}\E\Vert U_\phi^{1/2}\nabla\log\frac{\mu_t(x)}{\pi(x)}\Vert ^2\\
       &=\E[\Vert U_\phi^{1/2}[\nabla f(x_t)-\nabla f(x_0)]\Vert ^2]+\frac{1}{4}\E\Vert U_\phi^{1/2}\nabla\log\frac{\mu_t(x)}{\pi(x)}\Vert ^2.\\
    \end{split}
\end{equation}
We can decompose the first term as
\begin{equation}
\begin{split}
    \E[\Vert U_\phi^{1/2}[\nabla f(x_t)-\nabla f(x_0)]\Vert ^2]&=\sum_{i=1}^b\phi_i\Vert U_k\nabla  f(x_t)-U_k\nabla  f(x_0)\Vert ^2.\\
\end{split}
\end{equation}

In line with the presentation in the draft \cite{chewi_log-concave_2024} we apply Lemma 16 from \cite{chewi_analysis_2021}, which only requires smoothness and $L^2$ integrability in the marginal potential:
\begin{lemma}[Lemma 16 of \cite{chewi_analysis_2021}]
\label{lemma:chewi_grad}
    Assume probability measure $\pi\propto e^{-f(x)}\in\mathcal{P}^2(\R^d)$ has $L$-smooth potential $f$. Then for any probability measure $\mu$
    \begin{equation}
        \E_\mu[\Vert \nabla f\Vert ^2]\leq FI(\mu\Vert \pi) + 2dL.
    \end{equation}
\end{lemma}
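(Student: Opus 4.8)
The plan is to unfold the definition of the relative Fisher information, discard a nonnegative term, and estimate what remains by a single integration by parts combined with $L$-smoothness.

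First I would dispose of the degenerate case: if $\mu$ is not absolutely continuous with respect to Lebesgue measure (equivalently, with respect to $\pi$, whose density is everywhere positive), then $FI(\mu\Vert\pi)=+\infty$ and there is nothing to prove. So assume $\mu$ has a density, which — by a routine mollification-and-truncation approximation deferred to the end — I may take smooth and rapidly decaying. Using $\nabla\log\pi=-\nabla f$, the definition gives
\[FI(\mu\Vert\pi)=\E_\mu\!\left[\norm{\nabla\log\tfrac{\mu}{\pi}}^2\right]=\E_\mu\!\left[\norm{\nabla\log\mu+\nabla f}^2\right].\]
Next I would expand the square and immediately drop the first, nonnegative term:
\begin{align*}
FI(\mu\Vert\pi)&=\E_\mu\!\left[\norm{\nabla\log\mu}^2\right]+2\,\E_\mu\!\left[\inner{\nabla\log\mu}{\nabla f}\right]+\E_\mu\!\left[\norm{\nabla f}^2\right]\\
&\ge 2\,\E_\mu\!\left[\inner{\nabla\log\mu}{\nabla f}\right]+\E_\mu\!\left[\norm{\nabla f}^2\right].
\end{align*}
For the cross term, use $\mu\,\nabla\log\mu=\nabla\mu$ and integrate by parts:
\[\E_\mu\!\left[\inner{\nabla\log\mu}{\nabla f}\right]=\int\inner{\nabla\mu}{\nabla f}\,dx=-\int\mu\,\Delta f\,dx=-\,\E_\mu[\Delta f],\]
the boundary contribution vanishing by the decay of $\mu$. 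Finally, $L$-smoothness means $\nabla^2 f\preceq L I_d$, hence $\Delta f=\Tr(\nabla^2 f)\le dL$ pointwise, so $-\,\E_\mu[\Delta f]\ge -dL$. Substituting back yields $FI(\mu\Vert\pi)\ge \E_\mu[\norm{\nabla f}^2]-2dL$, which rearranges to the claim.

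The only real obstacle is justifying the integration by parts: the boundary term at infinity must vanish, which is not automatic for an arbitrary density $\mu$ (and $\E_\mu[\norm{\nabla f}^2]$ need not be finite a priori). I would resolve this by first establishing the inequality for $\mu$ with a smooth density cut off to compact support, then passing to the limit — mollifying $\mu$, multiplying by smooth cutoffs $\chi_R$, controlling the extra terms using only $f\in C^1$ and $\nabla^2 f\preceq L I_d$, and invoking monotone/dominated convergence. Alternatively, one may simply assume $\mu$ has finite Fisher information and finite $\E_\mu[\norm{\nabla f}^2]$, under which all manipulations above are directly valid; the remaining content is a one-line expansion plus the trivial trace bound $\Tr(\nabla^2 f)\le dL$.
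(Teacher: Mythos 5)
Your proposal is correct. The paper does not prove this lemma itself — it imports it by citation from Chewi et al. — and your argument is essentially the standard one from that source: expanding $\E_\mu[\Vert\nabla\log\mu+\nabla f\Vert^2]$, discarding the nonnegative square, integrating the cross term by parts to $-\E_\mu[\Delta f]\geq -dL$ is algebraically equivalent to the cited proof's Cauchy--Schwarz-plus-Young's-inequality route (Young's inequality being exactly the nonnegativity of the dropped square), and your closing remarks correctly flag the only real technical point, namely justifying the integration by parts for general $\mu$ by truncation/mollification.
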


By the smoothness of $f$, we have:
\begin{equation}
    \begin{split}
        \E\Vert U_k\nabla f(x_t)-U_k\nabla f(x_0)\Vert ^2 &\leq 2L_i^2\E\Vert x_t-x_0\Vert ^2=2L_i^2\E\Vert U_kt\nabla f(x_0) + U_k \sqrt{2}W_t\Vert ^2\\
        \leq& 2L_i^2t^2\E\Vert U_k\nabla f(x_0)+U_k\nabla f(x_t)-U_k\nabla f(x_t)\Vert ^2 + \E[2d_iL_i^2t]\\
        \leq& 2L_i^2t^2\E\Vert U_k\nabla f(x_0)-U_k\nabla f(x_t)\Vert ^2+2L_i^2\E\Vert U_k\nabla f(x_t)\Vert ^2 + \E[2d_iL_i^2t].\\
    \end{split}
\end{equation}
Suppose $t\leq \lambda_i\leq \frac{1}{2L_i}$, then
\begin{equation}
    \begin{split}
        \E\Vert U_k\nabla f(x_t)-U_k\nabla f(x_0)\Vert ^2 &\leq \frac{1}{2}\E\Vert U_k\nabla f(x_0)-U_k\nabla f(x_t)\Vert ^2+2L_i^2\E\Vert U_k\nabla f(x_t)\Vert ^2 + \E[2d_iL_i^2t].\\
    \end{split}
\end{equation}
Hence
\begin{equation}
    \begin{split}
        \E\Vert U_k\nabla f(x_t)-U_k\nabla f(x_0)\Vert ^2 &\leq 4L_i^2\E\Vert U_k\nabla f(x_t)\Vert ^2 + \E[4d_iL_i^2t].\\
    \end{split}
\end{equation}
Plugging in Lemma~\ref{lemma:chewi_grad} yields
\begin{equation}
    \begin{split}
        \E\Vert U_k\nabla f(x_t)-U_k\nabla f(x_0)\Vert ^2 &\leq 4L_i^2FI(\mu\Vert \pi) + \E[8td_iL_i^3 + 4d_iL_i^2t].\\
    \end{split}
\end{equation}

Assume $\lambda_i\leq\frac{\sqrt{\phi_{\min}}}{4L_i}$. Then
\begin{equation}
    \begin{split}
\E[\sum_{i=1}^b\phi_i [4t^2L_i^2FI(\mu_{B_i}\Vert \pi_{B_i}) + 8dL_i^3t^2+4d_iL_i^2t]]&\leq \E[\sum_{i=1}^b\frac{\phi_{\min}\phi_{i}}{4} FI(\mu_{B_i}\Vert \pi_{B_i}) + \sum_{i=1}^b\phi_i[8dL_i^3t^2+4d_iL_i^2t]] \\
&\leq FI(\mu_t\Vert \pi)\E[ \sum_{i=1}^b\phi_i \frac{\phi_{\min}}{4}+ \sum_{i=1}^b\phi_i[8d_iL_i^3t^2+4d_iL_i^2t]] \\
&=\frac{\phi_{\min}}{4}FI(\mu_t\Vert \pi)+ \E[\sum_{i=1}^b\phi_i[8d_iL_i^3t^2+4d_iL_i^2t]] \\
&\leq\frac{\phi_{\min}}{4}FI(\mu_t\Vert \pi)+ \E[6d_iL_i^2t]. \\
    \end{split}
\end{equation}
We then have
\begin{equation}
    \partial_t\KL{\mu_t}{\pi} \leq -\frac{\phi_{\min}}{2}FI(\mu_t\Vert \pi)+6\E[d_iL_i^2]t\leq -\phi_{\min}\gamma \KL{\mu_t}{\pi}+6\E[d_iL_i^2t].
\end{equation}

We start by multiplying both sides by $e^{-\phi_{\min}\gamma t}$ and integrating from $t=0$ to $\lambda_{i}$
\begin{equation}
    \begin{split}
        KL(\mu_{\lambda_{i}}\Vert \pi)\leq e^{-\phi_{\min}\lambda_{i}}\KL{\mu_0}{\pi}+3\E[d_iL_i^2\lambda_{i}^2].
    \end{split}
\end{equation}
Taking expectation over $i$ then gives the result
\begin{equation}
\label{rclmc:descent}
    \begin{split}
        \E_i[KL(\mu_{\lambda_{i}}\Vert \pi)]\leq \E_i[e^{-\gamma\phi_{\min}\lambda_{i}}]\KL{\mu_0}{\pi}+3\E[d_iL_i^2\lambda_i^2].
    \end{split}
\end{equation}

Iterating ~\ref{rclmc:descent} gives
\begin{equation}
\begin{split}
    \E[KL(\mu^k\Vert \pi)]\leq \E[e^{-\gamma\phi_{\min}\lambda_{\min}}]^k\KL{\mu_0}{\pi}+3\E[d_iL_i^2\lambda_{i}^2]\sum_{i=0}^k\E[e^{-\gamma\phi_{\min}\lambda_{\min}}]^i\\
    \leq e^{-\gamma\phi_{\min}\lambda_{i}k}\KL{\mu_0}{\pi}+\frac{4}{\gamma\phi_{\min}\lambda_{\min}}\E[d_iL_i^2\lambda_{i}^2],
\end{split}
\end{equation}
where we first bound using the minimum step size, then apply the power series bound
\[\sum_{i=0}^k\E[e^{-\gamma\phi_{\min}\lambda_{i}}]^i\leq\sum_{i=0}^ke^{-\gamma\phi_{\min}\lambda_{\min}}\leq\frac{1}{1-e^{-\gamma\phi_{\min}\lambda_{\min}}}\]
and then apply $\frac{1}{1-e^{-a}}\leq \frac{4}{3a}$ to obtain
\[\frac{1}{1-e^{-\gamma\phi_{\min}\lambda_{\min}}}\leq \frac{4}{3\gamma\phi_{\min}\lambda_{\min}}.\]

\section{Cyclic Block Langevin Diffusion}\label{sec:cyclic_appdx}
    In this section we provide proofs relating to Cyclic Block Langevin Diffusion (CBLD, the focus of the main text) and a time-discretized version, Cyclic Block Langevin Monte Carlo (CBLMC).
    
    The CBLD sampling algorithm is shown in Algorithm~\ref{alg:cbld}:
    \begin{algorithm}
    \caption{Cyclic Block Langevin Diffusion (CBLD)}
    \label{alg:cbld}
        \begin{algorithmic}[1]
            \Procedure{CBLD}{$x_0\in\textrm{dom}(f)$, Block Permutation $\sigma=\{B_{1},...,B_b\}$, Step Sizes $\lambda\in\mathbb{R}^b_+$}
            \For{$k\geq 0$}
                \State Define $\tau_0=t_{kb}$
                \For{$n=1$ to $b$}
                    \State Choose $i=\sigma_n$ and set $\tau_n=\tau_{n-1}+\lambda_i$
                    \State Sample:
            \begin{align}
                x_{\tau_{n}}&=x_{\tau_{n-1}}-\int_{\tau_{n-1}}^{\tau_n}U_i\nabla f(x)dt +\int_{\tau_{n-1}}^{\tau_n}U_i\sqrt{2\beta^{-1} \lambda_i}dW_t
            \end{align}
                \EndFor
                \State Set: $x_{t_{(k+1)b}}=x_{\tau_b}$
                \State \hspace{21px}$t_{(k+1)b}=\tau_b$
            \EndFor
            \EndProcedure
        \end{algorithmic}
    \end{algorithm}

A crucial identity used in our analysis is the ``chain lemma'' for KL-divergence. For any two distributions $\mu$
\begin{equation}
\KL{\mu_t}{\pi}=\E[\kl(\mu_{t|B}\Vert \pi_{|B})] + \kl(\mu_{t,B} | \pi_{B})
\end{equation}
where $B$ is a subspace of $\R^d$, 
$\kl(\mu_{t|B}\Vert \pi_{|B})$ 
is the KL-divergence of $\mu_t$ and $\pi$ conditioned on an element of $B$, and 
$\kl(\mu_{t,B} | \pi_{B})$ is the KL-divergence of 
$\mu_t$ and $\pi$ marginalized over $\R\setminus B$. 
We also state two trivial lemmas for any $\gamma$-LSI distribution $\nu$. We first state an equivalent definition of Assumption~\ref{assmp:lsi}.
\begin{definition}[Alternative LSI]
    $\pi\propto\exp[-\beta f(x)]$ satisfies a log-Sobolev inequality (LSI) with $C_{LSI}=\frac{1}{\gamma}$ if for all smooth $g$:
    \begin{equation}
        \E_\pi[g^2\log g^2]-\E_{\pi}[g^2]\log \E_{\pi}[g^2] \leq\frac{1}{2\gamma}\E_{\pi}[\Vert \nabla g\Vert ^2]
    \end{equation}
\end{definition}
where the equivalence with the previous statement follows by choosing $g^2(x)=\frac{\mu(x)}{\pi(x)}$.
\begin{lemma}
\label{lemma:lsi_marginal}
    Suppose $A,B$ are disjoint subspaces of $\R^d$ with $A\cup B=\R^d$. Then the $A$ marginal $\nu_{A}$ also satisfies $\gamma$-LSI.
\end{lemma}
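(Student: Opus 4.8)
The plan is to deduce the log-Sobolev inequality for the marginal $\nu_A$ directly from the inequality for $\nu$, simply by restricting attention to test functions that do not see the $B$-coordinates. I will use the functional form of the LSI stated just above (the ``Alternative LSI''); recall that $\R^d$ splits as the product of the coordinate blocks $A$ and $B$, and write $x=(x_A,x_B)$ accordingly. First I would fix a smooth (say, locally Lipschitz, square-integrable against $\nu_A$) function $g$ on $A$ and lift it to $\tilde g:\R^d\to\R$ by $\tilde g(x)=g(x_A)$. Applying the $\gamma$-LSI for $\nu$ to $\tilde g$ gives
\begin{equation}
    \E_\nu[\tilde g^2\log\tilde g^2]-\E_\nu[\tilde g^2]\log\E_\nu[\tilde g^2]\leq\frac{1}{2\gamma}\E_\nu[\norm{\nabla\tilde g}^2].
\end{equation}

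Since $\tilde g$ depends on $x$ only through $x_A$, the defining property of the marginal (i.e. $\nu_A$ is the push-forward of $\nu$ under $x\mapsto x_A$) immediately yields $\E_\nu[\tilde g^2\log\tilde g^2]=\E_{\nu_A}[g^2\log g^2]$ and $\E_\nu[\tilde g^2]=\E_{\nu_A}[g^2]$, so the left-hand side above is exactly the entropy functional of $g$ under $\nu_A$. For the gradient term, observe that $\tilde g$ is constant in the $B$-directions, so $\nabla\tilde g(x)=(\nabla g(x_A),0)$ and hence $\norm{\nabla\tilde g(x)}^2=\norm{\nabla g(x_A)}^2$ pointwise; the push-forward property again gives $\E_\nu[\norm{\nabla\tilde g}^2]=\E_{\nu_A}[\norm{\nabla g}^2]$. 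Substituting both identities into the displayed inequality produces
\begin{equation}
    \E_{\nu_A}[g^2\log g^2]-\E_{\nu_A}[g^2]\log\E_{\nu_A}[g^2]\leq\frac{1}{2\gamma}\E_{\nu_A}[\norm{\nabla g}^2],
\end{equation}
which is precisely the $\gamma$-LSI for $\nu_A$; translating back through the substitution $g^2=d\mu/d\nu_A$ recovers the form used in Assumption~\ref{assmp:lsi}. As $g$ was arbitrary in the admissible class, $\nu_A$ satisfies an LSI with the same constant $1/\gamma$.

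I do not expect any genuine obstacle here — as the authors note, this is essentially the standard fact that LSI is stable under marginalization (equivalently, under the $1$-Lipschitz coordinate projection $x\mapsto x_A$), with no degradation of the constant. The only points deserving a line of care are specifying the class of admissible test functions so that the lift $\tilde g$ remains admissible, and, if one wishes to state the conclusion directly for all densities $\mu$ on $A$ with finite second moment, a routine approximation argument to pass from smooth $g$ to the general case.
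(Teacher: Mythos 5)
Your proposal is correct and follows essentially the same route as the paper: both apply the functional (``Alternative'') LSI for $\nu$ to a test function depending only on the $A$-coordinates, then observe that the entropy terms collapse to expectations under $\nu_A$ (the paper via the tower property $\E_\nu=\E_{\nu_A}\E_{\nu_{B|A}}$, you via the push-forward identity — the same fact) and that the gradient has no $B$-components, so the constant is preserved. Your write-up is in fact slightly cleaner in that it carries the $\tfrac{1}{2\gamma}$ factor explicitly, which the paper's displayed inequalities drop.
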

\begin{proof}
    By the LSI, for any smooth $g:\R^d\to\R$
    \[\E_\nu\left[g^2\log g^2\right]-\E_\nu\left[g^2\right]\log\E_\nu\left[ g^2\right]\leq\E_\nu\left[\Vert \nabla g\Vert ^2\right].\]
    For $g:A\to\R$, we can re-express the terms as
    \[\E_{\nu_{B|A}}\E_{\nu_{A}}[\left[g^2\log g^2\right]-\E_{\nu_{B|A}}\left(\E_{\nu_{A}}\left[g^2\right]\log\E_\nu\left[ g^2\right]\right)\leq\E_{\nu_{B|A}}\E_{\nu_{A}}\left[\Vert \nabla g\Vert ^2\right].\]
    Since $\E_{\nu_{B|A}}[g(z)]=g(z)$ for all $z\in A$, we simplify to
    \[\E_{\nu_{A}}\left[g^2\log g^2\right]-\E_{\nu_{A}}\left[g^2\right]\log\E_\nu\left[ g^2\right]\leq\E_{\nu_{A}}\left[\Vert \nabla g\Vert ^2\right].\]
\end{proof}

Recall that the sub-step dynamics are described by the SDE
\begin{equation}
\label{eqn:sde_cbld}
    dx=U_n\brackets{-\nabla f(x)dt + \sqrt{2\beta}dW^{n}_t}.
\end{equation}
where $dW^{n}_t$ denotes $d_{n}$-dimensional Brownian noise.
We can then derive the coordinate Fokker-Planck equation: 
\begin{lemma}
\label{lemma:cond_fpe}
    Let $\mu_{t|x_0}$ be the law of $x$ at time $t\in[0,\lambda_n]$ described by the SDE in Equation~\eqref{eqn:sde_cbld}, where $\mu_{t|x_0}$ is conditioned on the starting state $x_0$. Then $\partial_t\mu_{t,\overline{B}_n|x_0}=0$ and \[\partial_t\mu_{t,B_n|\overline{B}_n,x_0}=\beta^{-1}\nabla^2\cdot\mu_{B_n|\overline{B}_n,x_0} + \diverg(\mu_{B_n|\overline{B}_n,x_0}\nabla f(x_t))\]
    is the Fokker-Planck equation for the subspace diffusion.
\end{lemma}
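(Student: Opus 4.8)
The plan is to derive the (degenerate) Fokker--Planck equation for the full joint law $\mu_{t|x_0}$ on $\R^d$ by a standard It\^o/test-function argument, and then extract the $\overline B_n$-marginal and the $B_n$-conditional from it using the block structure of the projector $U_n$. This mirrors the derivation of Eq.~\eqref{eqn:rbld_fpe_2} for a single fixed block, so most of the work is bookkeeping.

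\emph{Step 1 (full FPE).} For a test function $\phi\in C_c^\infty(\R^d)$, It\^o's formula applied to $\phi(x_t)$ along Eq.~\eqref{eqn:sde_cbld} gives, after taking expectations (the martingale term vanishes),
\[\partial_t\E[\phi(x_t)]=\E\left[-\inner{U_n\nabla f(x_t)}{\nabla\phi(x_t)}+\beta^{-1}\Tr\!\left(U_n\nabla^2\phi(x_t)\right)\right].\]
In weak form, after integration by parts, this is
\[\partial_t\mu_{t|x_0}=\beta^{-1}\Tr[U_n\nabla^2\mu_{t|x_0}]+\diverg\!\left(\mu_{t|x_0}U_n\nabla f(x_t)\right).\]
The structural fact to use is that $U_n$ kills every $\overline B_n$-coordinate: $U_n\nabla$ is a gradient in the $B_n$-variables only, and $\Tr(U_n\nabla^2\,\cdot\,)$ a Laplacian in the $B_n$-variables only. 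Equivalently, the $\overline B_n$-components obey $dx_{\overline B_n}=0$, so $x_{\overline B_n}(t)=(x_0)_{\overline B_n}$ for all $t\in[0,\lambda_n]$ almost surely.

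\emph{Step 2 (the two claims).} Since $x_{\overline B_n}$ is frozen, $\mu_{t,\overline B_n|x_0}$ is the point mass at $(x_0)_{\overline B_n}$ for all $t$, hence $\partial_t\mu_{t,\overline B_n|x_0}=0$; equivalently, testing the Step~1 FPE against $\phi=\phi(x_{\overline B_n})$ makes every term a total $B_n$-derivative, which integrates to zero over the $B_n$-variables. For the conditional, note that conditioning on the (constant) value of $\overline B_n$ reduces the dynamics to the non-degenerate $d_n$-dimensional Langevin SDE $dx_{B_n}=-\nabla_{B_n}f(x_{B_n},(x_0)_{\overline B_n})\,dt+\sqrt{2\beta^{-1}}\,dW^n_t$, whose law $\mu_{t,B_n|\overline B_n,x_0}$ satisfies the classical Langevin FPE in the $B_n$-coordinates --- exactly the displayed equation, with $\nabla$, $\nabla^2\cdot$, $\diverg$ understood to act on $B_n$ only (i.e. $U_n\nabla f$ identified with $\nabla_{B_n}f$ extended by zero). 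The same conclusion drops out of Step~1 by writing $\mu_{t|x_0}=\mu_{t,B_n|\overline B_n,x_0}\otimes\mu_{t,\overline B_n|x_0}$, using $t$-independence of the second factor, and testing against products $\phi(x_{B_n})\psi(x_{\overline B_n})$.

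\emph{Main obstacle.} The one genuine subtlety is the degeneracy of the diffusion: $U_n$ is singular, so $\mu_{t|x_0}$ is supported on the affine slice $\{x_{\overline B_n}=(x_0)_{\overline B_n}\}$ and is not absolutely continuous with respect to Lebesgue measure on $\R^d$; the FPE in Step~1 must therefore be read distributionally, or --- more cleanly --- one should pass to the $B_n$-subspace, where the diffusion is non-degenerate and classical Fokker--Planck theory applies verbatim. Beyond that, the argument is the routine computation already carried out for Eq.~\eqref{eqn:rbld_fpe_2}.
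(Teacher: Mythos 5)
Your proposal is correct and follows the same overall strategy as the paper (It\^o's formula to get the degenerate full-space FPE, then exploit the block structure of $U_n$), but your handling of the claim $\partial_t\mu_{t,\overline B_n|x_0}=0$ differs in a way worth noting. The paper writes $\mu_{t|x_0}=\mu_{t,B_n|\overline B_n,x_0}\,\mu_{t,\overline B_n|x_0}$, applies the product rule to the full FPE, cancels the bracketed term using the conditional FPE, and then divides by $\mu_{t,B_n|\overline B_n,x_0}$ after asserting that $\mu_{t|x_0}>0$ on $\R^d$. You instead observe directly that $dx_{\overline B_n}=0$ under Eq.~\eqref{eqn:sde_cbld}, so the $\overline B_n$-coordinates are frozen at $(x_0)_{\overline B_n}$ and their conditional law is a time-independent point mass. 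Your route is both shorter and more robust: as you correctly flag, conditioned on the full starting state $x_0$ the law $\mu_{t|x_0}$ is supported on the affine slice $\{x_{\overline B_n}=(x_0)_{\overline B_n}\}$ and is not a positive Lebesgue density on $\R^d$, so the paper's positivity step is not literally valid as stated (it holds once one averages over $x_0$ or reads the factorization on the slice, which is presumably the intended meaning). Your suggestion to pass to the non-degenerate $d_n$-dimensional SDE on $B_n$, where classical Fokker--Planck theory applies verbatim, is exactly the clean way to make both claims rigorous, and your weak-form/test-function phrasing of the degenerate FPE is the right distributional reading of the paper's Step corresponding to Eq.~\eqref{eqn:rbld_fpe_2}.
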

\begin{proof}
The second claim is trivially shown using It\^{o}'s Lemma. Note that since $\mu_{t,B_n|\overline{B}_n,x_0}$ is only supported on $B_n$:
\begin{enumerate}
    \item $\textrm{Tr}[\beta^{-1}U_n\nabla^2\mu_{t,B_n|\overline{B}_n,x_0}]=\beta^{-1}\nabla^2\cdot\mu_{t,B_n|\overline{B}_n,x_0}$
    \item $\diverg{\mu_{t,B_n|\overline{B}_n,x_0}\nabla f(x_t)}=\diverg{\mu_{t,B_n|\overline{B}_n,x_0}\nabla f(x_t)}$
\end{enumerate}
Then we have 
\[\partial_t\mu_{t,B_n|\overline{B}_n,x_0}=\beta^{-1}\nabla^2\cdot\mu_{B_n|\overline{B}_n,x_0} + \diverg(\mu_{B_n|\overline{B}_n,x_0}\nabla f(x_t)).\]
We now use this to prove the first claim.

Consider the law of $x$ in sub-step $n$ conditioned on the initial state $x_0$ given by $\mu_{t|x_0}$. Note that $\mu_{t|x_0}=\mu_{t,B_n|\overline{B}_n,x_0}\mu_{t,\overline{B}_n|x_0}$. 

By the Fokker-Planck equation associated with the SDE and the product rule, we have:
\begin{equation}
    \begin{split}
        \partial_t\mu_{t|x_0}=&\beta^{-1}\textrm{Tr}[U_n^T\nabla^2 \mu_{t|x_0}] + \diverg(\mu_{t|x_0}U_n\nabla f(x))\\
        \mu_{t,\overline{B}_n|x_0}\partial_t\mu_{t,B_n|\overline{B}_n,x_0}+\mu_{t,B_n|\overline{B}_n,x_0}\partial_t\mu_{t,\overline{B}_n|x_0}=&\beta^{-1}\textrm{Tr}[U_n^T\nabla^2 \mu_{t,B_n|\overline{B}_n,x_0}\mu_{t,\overline{B}_n|x_0}] \\
        &+ \diverg(\mu_{t,B_n|\overline{B}_n,x_0}\mu_{t,\overline{B}_n|x_0}U_n\nabla f(x)).\\
    \end{split}
\end{equation}
Note that
\begin{align}
    \beta^{-1}\textrm{Tr}[U_n^T\nabla^2 \mu_{t,B_n|\overline{B}_n,x_0}\mu_{t,\overline{B}_n|x_0}]&=\beta^{-1}\mu_{t,\overline{B}_n|x_0}\nabla^2\cdot \mu_{t,B_n|\overline{B}_n,x_0}\\
    \text{and }\\\diverg(\mu_{t,B_n|\overline{B}_n,x_0}\mu_{t,\overline{B}_n|x_0}U_n\nabla f(x))&=\mu_{t,\overline{B}_n|x_0}\diverg(\mu_{t,B_n|\overline{B}_n,x_0}\nabla f(x)).
\end{align}
We then have
\begin{align}
    \mu_{t,\overline{B}_n|x_0}(\overbrace{\partial_t\mu_{t,B_n|\overline{B}_n,x_0}-\beta^{-1}\nabla^2\cdot \mu_{t,B_n|\overline{B}_n,x_0} - \diverg(\mu_{t,B_n|\overline{B}_n,x_0}\nabla f(x))}^{\text{\large\ding{172}}})=\mu_{t,B_n|\overline{B}_n,x_0}\partial_t\mu_{t,\overline{B}_n|x_0}.
\end{align}
We assume that $\mu_{t}$ is supported on $\R^d$, therefore $\mu_{t|x_0}=\mu_{t,\overline{B}_n|x_0}\mu_{t,B_n|\overline{B}_n,x_0}>0$. 

As previously discussed, It\^{o}'s lemma implies \ding{172} is 0. For equality to hold, then, $\partial_t\mu_{t,\overline{B}_n}=0$. 
\end{proof}

We prove the following technical lemma for later use in the descent bound:
\begin{lemma}
\label{lemma:kl_cond}
    Suppose $A,B$ are disjoint subspaces of $\R^d$. Then we have
    \[\KL{\mu_{A}}{\pi_{A}}\leq \KL{\mu_{A|B}}{\pi_{A|B}}.\]
\end{lemma}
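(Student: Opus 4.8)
The plan is to obtain this purely from the \emph{chain rule} for KL divergence together with non-negativity of KL, with joint convexity of $(\rho,\sigma)\mapsto\KL{\rho}{\sigma}$ as the underlying mechanism; no machinery beyond the elementary identities already used for the "chain lemma" at the start of this appendix is needed.

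First I would write out the two chain-rule decompositions of the joint divergence on $A\cup B$, splitting off the $B$-marginal and the $A$-marginal respectively:
\[
\KL{\mu_{A\cup B}}{\pi_{A\cup B}}=\KL{\mu_B}{\pi_B}+\E_{y\sim\mu_B}\!\big[\KL{\mu_{A\mid B=y}}{\pi_{A\mid B=y}}\big]
\]
and
\[
\KL{\mu_{A\cup B}}{\pi_{A\cup B}}=\KL{\mu_A}{\pi_A}+\E_{y\sim\mu_A}\!\big[\KL{\mu_{B\mid A=y}}{\pi_{B\mid A=y}}\big],
\]
each verified exactly as for the chain lemma, by writing $\log\frac{d\mu}{d\pi}$ on $A\cup B$ as the sum of a marginal log-density and a conditional log-density and taking the expectation under $\mu$. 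From the second identity, dropping the non-negative conditional term gives the data-processing bound $\KL{\mu_A}{\pi_A}\le\KL{\mu_{A\cup B}}{\pi_{A\cup B}}$ (marginalization cannot increase KL divergence), and the first identity re-expresses the right-hand side through the conditional divergence $\KL{\mu_{A\mid B}}{\pi_{A\mid B}}$, yielding the stated inequality.

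A shorter, self-contained route avoids conditioning altogether: since $\mu_A(x)=\int_{B}\mu(x,y)\,dy$ and $\pi_A(x)=\int_{B}\pi(x,y)\,dy$, the log-sum inequality applied pointwise in $x\in A$ gives $\mu_A(x)\log\frac{\mu_A(x)}{\pi_A(x)}\le\int_{B}\mu(x,y)\log\frac{\mu(x,y)}{\pi(x,y)}\,dy$, and integrating over $x$ returns $\KL{\mu_A}{\pi_A}\le\KL{\mu_{A\cup B}}{\pi_{A\cup B}}$ directly. I do not expect a genuine obstacle here: the claim is a standard monotonicity fact for KL divergence, and the only points requiring care are (i) matching the paper's marginalization/conditioning notation to the chain-rule form above, and (ii) ensuring the relevant densities exist so that the log-densities are well defined, which is guaranteed by the standing assumption that all measures are supported on $\R^d$ (as already invoked in Lemma~\ref{lemma:cond_fpe}).
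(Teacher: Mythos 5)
There is a genuine gap at the last step of your first route, and your ``shorter'' second route lands at the same dead end. Both arguments correctly establish the data-processing bound $\KL{\mu_A}{\pi_A}\le\KL{\mu_{A\cup B}}{\pi_{A\cup B}}$, but the quantity the lemma compares against is the \emph{conditional} divergence $\E_{\mu_B}\bigl[\KL{\mu_{A\mid B}}{\pi_{A\mid B}}\bigr]$, and your own first chain-rule identity shows that this equals $\KL{\mu_{A\cup B}}{\pi_{A\cup B}}-\KL{\mu_B}{\pi_B}$, i.e.\ it sits \emph{below} the joint divergence. So after your two steps you hold two quantities, $\KL{\mu_A}{\pi_A}$ and $\E_{\mu_B}\bigl[\KL{\mu_{A\mid B}}{\pi_{A\mid B}}\bigr]$, each dominated by the joint KL, and no ordering between them follows; ``re-expressing the right-hand side through the conditional divergence'' points the inequality the wrong way. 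To close the argument along your lines you would need the superadditivity statement $\KL{\mu_A}{\pi_A}+\KL{\mu_B}{\pi_B}\le\KL{\mu_{A\cup B}}{\pi_{A\cup B}}$, which holds when $\pi$ factorizes over $A$ and $B$ but fails for correlated $\pi$: take $\mu$ and $\pi$ with the same conditional kernel of $A$ given $B$ but different $B$-marginals (e.g.\ the laws of $(Y+\epsilon Z,\,Y)$ with $Y\sim\mathcal{N}(0,1)$ versus $Y\sim\mathcal{N}(m,1)$, $Z\sim\mathcal{N}(0,1)$ independent); then $\E_{\mu_B}\bigl[\KL{\mu_{A\mid B}}{\pi_{A\mid B}}\bigr]=0$ while $\KL{\mu_A}{\pi_A}>0$. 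This shows the claim is not a consequence of the chain rule plus non-negativity, so the gap cannot be patched within your framework.

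The paper proves the lemma by a different mechanism, and it is exactly the ingredient your proposal is missing: it writes the marginals as mixtures of the conditionals, $\mu_A=\E_B[\mu_{A\mid B}]$ and $\pi_A=\E_B[\pi_{A\mid B}]$, and applies joint convexity of $(\rho,\sigma)\mapsto\KL{\rho}{\sigma}$ (Jensen) to these mixtures, rather than any chain-rule bookkeeping. You invoke joint convexity only as ``the underlying mechanism'' but never actually use it. Note also that the convexity step itself requires both mixtures to be taken with the \emph{same} weights, whereas $\mu_A$ mixes the conditionals against $\mu_B$ and $\pi_A$ against $\pi_B$; the counterexample above is precisely the regime where these weights differ. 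If you redo the proof along the paper's lines, you should state explicitly the additional structure (agreement of the $B$-marginals, or product-form $\pi$) under which the Jensen step is legitimate.
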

\begin{proof}
    Note that for all $x\in A$ 
    \[\mu_A(x)=\int_B\mu_{A,B}(x,y)dy=\int_B\mu_{A}(x|y)\mu_B(y)dy=\E_{y\in B}[\mu_{A}(x|y)]\triangleq\E_{B}[\mu_{A|B}].\]

    By the convexity of the KL-divergence and Jensen's Inequality
    \[\KL{\mu_{A}}{\pi_{A}}= KL(\E_{B}[\mu_{A|B}]\Vert \E_{B}[\pi_{A|B}])\leq \E_{B}[\KL{\mu_{A|B}}{\pi_{A|B}}]\triangleq \KL{\mu_{A|B}}{\pi_{A|B}}.\]
\end{proof}
Lemma~\ref{lemma:kl_cond} can be considered a restatement of the ``data processing inequality''. Removing the conditioning on subspace $B$ effectively reduces the available information, akin to a noisy channel, decreasing the divergence between distributions.

\begin{lemma}
\label{lemma:single_step_bcld}
    \[\KL{\mu_{n}}{\pi}\leq e^{-2\gamma\beta^{-1} \lambda_n}\brackets{\KL{\mu_{n-1}}{\pi}} + (1-e^{-2\gamma\beta^{-1} \lambda_n})\KL{\mu_{n-1,{\overline{B}_1}}}{\pi_{\overline{B}_1}}\]
\end{lemma}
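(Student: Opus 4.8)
The plan is to track the law $\mu_t$ throughout the sub-step that updates the current block $B_n$ (so that $\mu_0=\mu_{n-1}$ and $\mu_{\lambda_n}=\mu_n$), split $\KL{\mu_t}{\pi}$ with the chain lemma into a conditional part and a marginal part over the frozen coordinates $\overline{B}_n$, show the marginal part is constant while the conditional part contracts exponentially, and then solve the resulting scalar differential inequality by Gr\"onwall. Writing $K\triangleq\KL{\mu_{n-1,\overline{B}_n}}{\pi_{\overline{B}_n}}$, the chain lemma gives $\KL{\mu_t}{\pi}=\E_{\mu_{t,\overline{B}_n}}\brackets{\KL{\mu_{t,B_n|\overline{B}_n}}{\pi_{B_n|\overline{B}_n}}}+\KL{\mu_{t,\overline{B}_n}}{\pi_{\overline{B}_n}}$, and the first thing I would record is that Lemma~\ref{lemma:cond_fpe} (the $\partial_t\mu_{t,\overline{B}_n}=0$ claim, or equivalently integrating the sub-step FPE over the $B_n$ coordinates) makes the second summand equal to the constant $K$ for every $t\in[0,\lambda_n]$.

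For the first summand I would differentiate along the flow. During the sub-step the FPE for $\mu_t$ reads $\partial_t\mu_t=\diverg\left(\beta^{-1}\mu_t U_n\nabla\log\frac{\mu_t}{\pi}\right)$ (Lemma~\ref{lemma:cond_fpe} together with $\nabla f=-\beta^{-1}\nabla\log\pi$), so the same integration-by-parts computation used in the review of Langevin dynamics gives $\partial_t\KL{\mu_t}{\pi}=-\beta^{-1}\E_{\mu_t}\brackets{\inner{U_n\nabla\log\frac{\mu_t}{\pi}}{\nabla\log\frac{\mu_t}{\pi}}}=-\beta^{-1}\E_{\mu_t}\brackets{\norm{\nabla_{B_n}\log\frac{\mu_t}{\pi}}^2}$, where $\nabla_{B_n}$ is the gradient in the $B_n$ coordinates only. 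The key observation is that $\log\frac{\mu_t}{\pi}=\log\frac{\mu_{t,B_n|\overline{B}_n}}{\pi_{B_n|\overline{B}_n}}+\log\frac{\mu_{t,\overline{B}_n}}{\pi_{\overline{B}_n}}$ and the second term is independent of the $B_n$ coordinates, so $\nabla_{B_n}\log\frac{\mu_t}{\pi}=\nabla_{B_n}\log\frac{\mu_{t,B_n|\overline{B}_n}}{\pi_{B_n|\overline{B}_n}}$; disintegrating $\mu_t=\mu_{t,\overline{B}_n}\otimes\mu_{t,B_n|\overline{B}_n}$ then turns the right-hand side into $-\beta^{-1}\E_{\mu_{t,\overline{B}_n}}\brackets{\operatorname{FI}(\mu_{t,B_n|\overline{B}_n}\Vert\pi_{B_n|\overline{B}_n})}$.

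Next I would invoke a log-Sobolev inequality for each conditional target $\pi_{B_n|\overline{B}_n=y}\propto e^{-\beta f(\cdot,y)}$ with the same constant $\gamma$, so that $\operatorname{FI}(\mu_{t,B_n|\overline{B}_n=y}\Vert\pi_{B_n|\overline{B}_n=y})\ge 2\gamma\,\KL{\mu_{t,B_n|\overline{B}_n=y}}{\pi_{B_n|\overline{B}_n=y}}$ for a.e.\ $y$; averaging against $\mu_{t,\overline{B}_n}$ and using the chain lemma once more (the conditional part equals $\KL{\mu_t}{\pi}-K$) yields $\partial_t\KL{\mu_t}{\pi}\le -2\gamma\beta^{-1}\bigl(\KL{\mu_t}{\pi}-K\bigr)$. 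Setting $\psi(t)=\KL{\mu_t}{\pi}-K$ this is $\psi'\le-2\gamma\beta^{-1}\psi$, so Gr\"onwall gives $\psi(\lambda_n)\le e^{-2\gamma\beta^{-1}\lambda_n}\psi(0)$, i.e.\ $\KL{\mu_n}{\pi}\le e^{-2\gamma\beta^{-1}\lambda_n}\KL{\mu_{n-1}}{\pi}+(1-e^{-2\gamma\beta^{-1}\lambda_n})K$, which is the stated bound.

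I expect the main obstacle to be the conditional log-Sobolev step: that the slices $\pi_{B_n|\overline{B}_n=y}$ inherit the LSI constant of the joint $\pi_\beta$. Unlike the marginal case (Lemma~\ref{lemma:lsi_marginal}), this does not follow from the joint LSI alone, so it must either be added as a structural hypothesis on $f$ or verified in the regime of interest (it holds whenever $f$ is strongly convex, and for the Gaussian targets of Section~\ref{sec:numerical}, where each conditional covariance is dominated by the joint covariance). The remaining steps are routine but should be stated with care: the integration-by-parts / de~Bruijn identity needs the usual decay-at-infinity and $L^2$ integrability, and the interchange of $\partial_t$, the $\overline{B}_n$-expectation, and the $B_n$-Fisher integral needs Tonelli together with the disintegration of $\mu_t$.
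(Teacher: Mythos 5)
Your argument is correct and is essentially the paper's own proof: both rest on the invariance of the frozen marginal (Lemma~\ref{lemma:cond_fpe}), an exponential contraction of the expected conditional divergence $\E\brackets{\KL{\mu_{t,B_n|\overline{B}_n}}{\pi_{B_n|\overline{B}_n}}}$ at rate $2\gamma\beta^{-1}$, and the chain lemma to trade the conditional term for $\KL{\mu_t}{\pi}-\KL{\mu_{t,\overline{B}_n}}{\pi_{\overline{B}_n}}$; your explicit de~Bruijn-plus-Gr\"onwall derivation is exactly what the paper compresses into the ``standard arguments'' behind its sub-step descent equation before doing the same endpoint algebra. The caveat you flag is genuine and worth keeping: the contraction needs each conditional slice $\pi_{B_n|\overline{B}_n=y}$ to satisfy a $\gamma$-LSI, which does not follow from the joint LSI of Assumption~\ref{assmp:lsi} (the paper's Lemma~\ref{lemma:lsi_marginal} covers only marginals), so this is an implicit structural hypothesis in the paper's proof as well, not an error introduced by your write-up.
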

\begin{proof}

Using Lemma~\ref{lemma:cond_fpe}, we can show by standard arguments~\cite{vempala_rapid_2019,chewi_analysis_2021} that within sub-step $n$:
\begin{equation}
    \label{eqn:cond_descent}
    \KL{\mu_{t,\overline{B}_1}}{\pi_{\overline{B}_1}}\leq
    e^{-2\gamma \beta^{-1}t}\KL{\mu_{0,\overline{B}_1}}{\pi_{\overline{B}_1}}
\end{equation}
Using ~\eqref{eqn:cond_descent} and the chain rule for KL-divergence
\begin{equation}
    \begin{split}
\KL{\mu_n}{\pi}&=\E\brackets{\KL{\mu_{n,B_1|\overline{B}_1}}{\pi_{B_1|\overline{B}_1}}} + \KL{\mu_{n-1,\overline{B}_1}}{\pi_{\overline{B}_1}}\\
    &\leq e^{-2\gamma \lambda_n\beta^{-1}}\E\brackets{\KL{\mu_{n-1,B_1|\overline{B}_1}}{\pi_{B_1|\overline{B}_1}}} + \KL{\mu_{n-1,\overline{B}_1}}{\pi_{\overline{B}_1}}\\
    &= e^{-2\gamma \lambda_n\beta^{-1}}\brackets{\KL{\mu_{n-1,B_1|\overline{B}_1} }{\pi_{B_1|\overline{B}_1}} - \KL{\mu_{n-1,{\overline{B}_1}}}{\pi_{\overline{B}_1})}} + \KL{\mu_{n-1,{\overline{B}_1}}}{\pi_{\overline{B}_1}}\\
    &= e^{-2\gamma \lambda_n\beta^{-1}}\KL{\mu_{n-1} }{\pi} + (1-e^{-2\gamma \lambda_n\beta^{-1}})\KL{\mu_{n-1,{\overline{B}_1}}}{\pi_{\overline{B}_1}}.\\
    \end{split}
\end{equation}
\end{proof}
An immediate consequence of Lemma~\ref{lemma:single_step_bcld} is that the KL-divergence is non-increasing, as stated in the following Corollary.
\begin{corollary}
    For all $i\in \{1,...,b\}$, $\KL{\mu_i}{\pi}\leq \KL{\mu_0}{\pi}$
\end{corollary}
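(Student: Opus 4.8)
The plan is to derive single-step monotonicity $\KL{\mu_n}{\pi}\le\KL{\mu_{n-1}}{\pi}$ from Lemma~\ref{lemma:single_step_bcld} and then iterate it. The key point is that the bound in Lemma~\ref{lemma:single_step_bcld} is a genuine convex combination: since $\gamma>0$ (the LSI constant is $1/\gamma$), $\beta>0$, and $\lambda_n>0$, the coefficient $\theta_n\triangleq e^{-2\gamma\beta^{-1}\lambda_n}$ lies in $(0,1)$, so the lemma reads $\KL{\mu_n}{\pi}\le\theta_n\KL{\mu_{n-1}}{\pi}+(1-\theta_n)\KL{\mu_{n-1,\overline{B}_n}}{\pi_{\overline{B}_n}}$.

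First I would bound the marginal term by the full divergence. Applying the chain lemma for KL-divergence with the split $(B_n,\overline{B}_n)$ to $\mu_{n-1}$ and $\pi$ gives $\KL{\mu_{n-1}}{\pi}=\E\brackets{\KL{\mu_{n-1,B_n|\overline{B}_n}}{\pi_{B_n|\overline{B}_n}}}+\KL{\mu_{n-1,\overline{B}_n}}{\pi_{\overline{B}_n}}$, and since the conditional expectation term is non-negative, $\KL{\mu_{n-1,\overline{B}_n}}{\pi_{\overline{B}_n}}\le\KL{\mu_{n-1}}{\pi}$. (Equivalently, this is exactly the data-processing content of Lemma~\ref{lemma:kl_cond} with $A=\overline{B}_n$.)

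Substituting this into the convex combination yields $\KL{\mu_n}{\pi}\le\theta_n\KL{\mu_{n-1}}{\pi}+(1-\theta_n)\KL{\mu_{n-1}}{\pi}=\KL{\mu_{n-1}}{\pi}$, so each sub-step is non-increasing in KL-divergence. Composing this inequality over sub-steps $1$ through $i$ (i.e.\ a trivial induction) gives $\KL{\mu_i}{\pi}\le\KL{\mu_0}{\pi}$, as claimed. There is no real obstacle in this argument; the only point requiring care is the direction of the marginalization inequality — that a marginal divergence never exceeds the joint divergence — which is precisely what Lemma~\ref{lemma:kl_cond} (or the chain lemma together with non-negativity of KL) supplies.
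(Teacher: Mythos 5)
Your proof is correct and is precisely the argument the paper intends when it calls the corollary an ``immediate consequence'' of Lemma~\ref{lemma:single_step_bcld}: the bound there is a convex combination with coefficient $e^{-2\gamma\beta^{-1}\lambda_n}\in(0,1)$, the marginal term is dominated by the full divergence via the chain lemma and non-negativity of the conditional term, and iterating gives monotonicity. The only tiny imprecision is the parenthetical identifying the marginalization inequality with Lemma~\ref{lemma:kl_cond} (that lemma bounds a marginal by an expected \emph{conditional} divergence, not directly by the joint), but your primary chain-lemma argument stands on its own.
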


\subsection{Proof of Lemma~\ref{lemma:cyclic_kl_general}}\label{sec:cyclic_proof}
\begin{proof}
    We prove the claim by induction on $b$. The claim is immediately evident for $b=1$ as a consequence of ~\ref{lemma:single_step_bcld} with $C_{\max}=C_1$, since $\overline{B}_1=\emptyset$. 

    Now we assume the inductive hypothesis for some $b-1\geq 1$ and prove the claim for $b\geq 2$ blocks.

    We start by applying Lemma~\ref{lemma:single_step_bcld} twice to obtain terms relating to step $b-2$, obtaining
 \begin{equation*}
        \begin{split}
            \KL{\mu_{b}}{\pi}\leq& C_b\KL{\mu_{b-1}}{\pi} + (1-C_b)\KL{\mu_{b-1, \overline{B}_b}}{\pi_{\overline{B}_b}} + D_b\\
            \text{Second descent expansion: }\leq& C_bC_{b-1}\KL{\mu_{b-2}}{\pi} + (1-C_b)\KL{\mu_{b-1, \overline{B}_b}}{\pi_{\overline{B}_b}} \\
            &+C_b(1-C_{b-1})\KL{\mu_{b-2, \overline{B}_{b-1}}}{\pi_{\overline{B}_{b-1}}} + D_b + C_{b}D_{b-1}\\.
            \end{split}
        \end{equation*}
        From here, we note that $\KL{\mu_{b-1, \overline{B}_b}}{\pi_{\overline{B}_b}}$ satisfies the theorem conditions, since all blocks in  $\overline{B}_b$ have been sampled. We can therefore apply the inductive hypothesis and obtain
 \begin{equation*}
        \begin{split}
            \KL{\mu_{b}}{\pi}\leq&C_bC_{b-1}\KL{\mu_{b-2}}{\pi} +
            C_{\max}(1-C_b)\KL{\mu_{0, \overline{B}_b}}{\pi_{\overline{B}_b}} + (1-C_b)\sum_{i=1}^{b-1}D_i \\
            &+C_b(1-C_{b-1})\KL{\mu_{b-2, \overline{B}_{b-1}}}{\pi_{\overline{B}_{b-1}}} + D_b + C_{b}D_{b-1}.\\
            \end{split}
        \end{equation*}
        Using Lemma ~\ref{lemma:kl_cond}, we can upper bound
        \[\KL{\mu_{b-2, \overline{B}_{b-1}}}{\pi_{\overline{B}_{b-1}}}\leq \KL{\mu_{b-2, \overline{B}_{b-1}|B_{b-1}}}{\pi_{\overline{B}_{b-1}|B_{b-1}}}\] 
        and then apply the chain lemma
        \[\KL{\mu_{b-2, \overline{B}_{b-1}|B_{b-1}}}{\pi_{\overline{B}_{b-1}|B_{b-1}}}=\KL{\mu_{b-2}}{\pi}-\KL{\mu_{b-2,B_{b-1}}}{\pi_{B_{b-1}}}\] to obtain
 \begin{equation*}
        \begin{split}
            \KL{\mu_{b}}{\pi}\leq&C_bC_{b-1}\KL{\mu_{b-2}}{\pi}\\
            & +C_b(1-C_{b-1})\KL{\mu_{b-2}}{\pi}-C_b(1-C_{b-1})\KL{\mu_{b-2,B_{b-1}}}{\pi_{B_{b-1}}} \\
            &+ (1-C_b)\sum_{i=1}^{b-1}D_i +C_{\max}(1-C_b)\KL{\mu_{0, \overline{B}_b}}{\pi_{\overline{B}_b}} + D_b + C_{b}D_{b-1}.\\
            \end{split}
        \end{equation*}
        We can define $\overline{B}_{b, b-1}\triangleq\overline{B}_b\cap\overline{B}_{b-1}$ (all variable blocks except the last two) and apply the chain lemma
        \[\KL{\mu_{0, \overline{B}_b}}{\pi_{\overline{B}_b}}=\KL{\mu_{0, \overline{B}_{b,b-1|B_{b-1}}}}{\pi_{\overline{B}_{b,b-1}|B_{b-1}}}+\KL{\mu_{0, B_{b-1}}}{\pi_{B_{b-1}}}.\]
        to obtain the bound
 \begin{equation*}
        \begin{split}
            \KL{\mu_{b}}{\pi}\leq&C_bC_{b-1}\KL{\mu_{b-2}}{\pi}\\
            & +C_b(1-C_{b-1})\KL{\mu_{b-2}}{\pi}-C_b(1-C_{b-1})\KL{\mu_{b-2,B_{b-1}}}{\pi_{B_{b-1}}} \\
            &+ (1-C_b)\sum_{i=1}^{b-1}D_i \\
            &+C_{\max}(1-C_b)\KL{\mu_{0, \overline{B}_{b,b-1|B_{b-1}}}}{\pi_{\overline{B}_{b,b-1}|B_{b-1}}} +C_{\max}(1-C_b)\KL{\mu_{0, B_{b-1}}}{\pi_{B_{b-1}}}\\
            &+ D_b + C_{b}D_{b-1}.\\
        \end{split}
    \end{equation*}
    We can regroup the terms and cancel $C_BD_{b-1}-C_BD_{b-1}=0$ yields
 \begin{equation*}
        \begin{split}
           \KL{\mu_{b}}{\pi}\leq&C_b\KL{\mu_{b-2}}{\pi}\\
            &-C_b(C_{\max}\KL{\mu_{0, \overline{B}_{b,b-1|B_{b-1}}}}{\pi_{\overline{B}_{b,b-1}|B_{b-1}}} + \sum_{i=1}^{b-2}D_i)\\
            & -C_b(1-C_{b-1})\KL{\mu_{b-2,B_{b-1}}}{\pi_{B_{b-1}}} \\
            &+C_{\max}\KL{\mu_{0, \overline{B}_{b,b-1|B_{b-1}}}}{\pi_{\overline{B}_{b,b-1}|B_{b-1}}} +C_{\max}(1-C_b)\KL{\mu_{0, B_{b-1}}}{\pi_{B_{b-1}}}\\
            &+ D_b + \sum_{i=1}^{b-1}D_i.\\
        \end{split}
    \end{equation*}
    By applying the inductive hypothesis in reverse, we can show
    \[-C_{b}(C_{\max}\KL{\mu_{0,\overline{B}_{b-1}|B_{b-1}}}{\pi_{\overline{B}_{b,b-1}|B_{b-1}}}+\sum_{i=1}^{b,b-1}D_i)\leq -C_{b}\KL{\mu_{b-1,\overline{B}_{b,b-1}|B_{b-1}}}{\pi_{\overline{B}_{b,b-1}|B_{b-1}}}.\]
    Substituting this into the second line, we have 
    \begin{equation}
        \begin{split}
           \KL{\mu_{b}}{\pi}\leq&C_b\KL{\mu_{b-2}}{\pi}\\
            &-C_b\KL{\mu_{b-2, \overline{B}_{b,b-1|B_{b-1}}}}{\pi_{\overline{B}_{b,b-1}|B_{b-1}}}-C_b(1-C_{b-1})\KL{\mu_{b-2,B_{b-1}}}{\pi_{B_{b-1}}} \\
            & +C_{\max}\KL{\mu_{0, \overline{B}_{b,b-1|B_{b-1}}}}{\pi_{\overline{B}_{b,b-1}|B_{b-1}}} +C_{\max}(1-C_b)\KL{\mu_{0, B_{b-1}}}{\pi_{B_{b-1}}}\\
            &+ \sum_{i=1}^bD_i.\\
        \end{split}
    \end{equation}

    We can once again expand the terms
    \begin{equation*}
        \begin{split}
            \KL{\mu_{b-2, \overline{B}_{b,b-1|B_{b-1}}}}{\pi_{\overline{B}_{b,b-1}|B_{b-1}}}\\
            C_{\max}\KL{\mu_{0, \overline{B}_{b,b-1|B_{b-1}}}}{\pi_{\overline{B}_{b,b-1}|B_{b-1}}}.
        \end{split}
    \end{equation*}
    Using the chain lemma and canceling the single block terms gives
    
    \begin{equation}
        \begin{split}
            \KL{\mu_{b}}{\pi}\leq&C_b\KL{\mu_{b-2}}{\pi}\\
            &-C_b\KL{\mu_{b-2,\overline{B}_b}}{\pi_{\overline{B}_b}}+C_bC_{b-1}\KL{\mu_{b-2,B_{b-1}}}{\pi_{B_{b-1}}} \\
            & +C_{\max}\KL{\mu_{0,\overline{B}_b}}{\pi_{\overline{B}_b}} -C_bC_{\max}\KL{\mu_{0, B_{b-1}}}{\pi_{B_{b-1}}}\\
            &+ \sum_{i=1}^bD_i.\\
        \end{split}
    \end{equation}
    Since $C_{\max}\geq C_{b-1}$ by definition and $\kl$ is non-increasing with respect to inner loop steps $b$, we can disregard \begin{equation}
    \begin{split}
        &C_bC_{b-1}\KL{\mu_{b-2,B_{b-1}}}{\pi_{B_{b-1}}}-C_bC_{\max}\KL{\mu_{0,B_{b-1}}}{\pi_{B_{b-1}}}\\
        &\leq C_bC_{b-1}\KL{\mu_{b-2,B_{b-1}}}{\pi_{B_{b-1}}}-C_bC_{\max}\KL{\mu_{b-2,B_{b-1}}}{\pi_{B_{b-1}}}
        \leq 0.
    \end{split}
    \end{equation}

    We now add zero to the right hand side via 
    \[0=C_b \KL{\mu_{b-2,B_b|\overline{B}_b}}{\pi_{B_b|\overline{B}_b}}-C_b\KL{\mu_{b-2,B_b|\overline{B}_b}}{\pi_{B_b|\overline{B}_b}}.\]
    We then have the three terms
    \begin{align}
        C_b\KL{\mu_{b-2}}{\pi} + \sum_{i=1}^bD_i\label{eqn:pos_mub},\\
        -C_b\KL{\mu_{b-2,\overline{B}_b}}{\pi_{\overline{B}_b}}-C_b\KL{\mu_{b-2,B_b|\overline{B}_b}}{\pi_{B_b|\overline{B}_b}}\label{eqn:neg_mub2_cl},\\
        C_{\max}\KL{\mu_{0,\overline{B}_b}}{\pi_{\overline{B}_b}}+C_b\KL{\mu_{b-2,B_b|\overline{B}_b}}{\pi_{B_b|\overline{B}_b}}\label{eqn:pos_mu0_cl}.
    \end{align}
    Note that the previous time steps left $\mu_{b-2,B_b|\overline{B}_b}$ invariant, hence $\KL{\mu_{b-2,B_b|\overline{B}_b}}{\pi_{B_b|\overline{B}_b}}=\KL{\mu_{0,B_b|\overline{B}_b}}{\pi_{B_b|\overline{B}_b}}$. Then by applying the chain lemma to ~\eqref{eqn:pos_mu0_cl} and ~\eqref{eqn:neg_mub2_cl}, we obtain
    \begin{equation}
        \begin{split}
            \KL{\mu_{b}}{\pi}\leq&C_b\KL{\mu_{b-2}}{\pi}-C_b\KL{\mu_{b-2}}{\pi}  +C_{\max}\KL{\mu_{0}}{\pi} +\sum_{i=1}^bD_i \\
            =&C_{\max}\KL{\mu_{0}}{\pi} +\sum_{i=1}^bD_i \\
        \end{split}
    \end{equation}
    which completes the proof.
\end{proof}

\subsection{CBLMC: Euler-Maruyama Discretization}
As discussed in the main text, Lemma ~\ref{lemma:cyclic_kl_general} can be used to trivially bound both the continuous ($C_i=e^{-2\gamma\lambda_i\beta^{-1}}$, $D_i=0$) and discrete ($C_i=e^{-\gamma\lambda_i\beta^{-1}}$, $D_i=3d_iL_i^2\lambda_i^2$) cases. As with RBLMC in the previous section, we take $\beta=1$ to simplify constant terms. Discrete-time CBLMC is shown in Algorithm~\ref{alg:cblmc}

    \begin{algorithm}
    \caption{Cyclic Block Langevin Monte Carlo (CBLMC)}
    \label{alg:cblmc}
        \begin{algorithmic}[1]
            \Procedure{CBLD}{$x_0\in\textrm{dom}(f)$, Block Permutation $\sigma=\{B_{1},...,B_b\}$, Step Sizes $\lambda\in\mathbb{R}^b_+$}
            \For{$k\geq 0$}
                \State Set $x^{k+1}_0=x^k$
                \For{$n=1$ to $b$}
                    \State Choose $i=\sigma_n$, sample $\xi_k\sim\mathcal{N}(0, I^d)$
            \begin{align}
                x^{k+1}_{n+1}&=x^{k+1}_{n}-\lambda_iU_i\nabla f(x^{k+1}_{n})+U_i\sqrt{2\lambda_i}\xi_k
            \end{align}
                \EndFor
                \State Set $x^{k+1}=x^{k+1}_{b+1}$
            \EndFor
            \EndProcedure
        \end{algorithmic}
    \end{algorithm}

We recall the convergence results for LMC from~\cite{vempala_rapid_2019} (adjusted using Lemma 16 of~\cite{chewi_analysis_2021} as in our analysis of RBLMC).
\begin{theorem}[LD Convergence (Theorem 2 of~\cite{vempala_rapid_2019})]
Let $\pi$ be a distribution satisfying an LSI with constant $\gamma$ with $L$-smooth potential. Assume that the LMC step size $\lambda$ is chosen such that $\lambda \in \left(0, \frac{\gamma}{4L^2}\right]$. Then after a single step of LMC, the distribution $\mu_{k+1}$ satisfies
\begin{equation}
    \KL{\mu_{k+1}}{\pi}\leq e^{-\gamma \lambda}\KL{\mu_{k}}{\pi} + 3 L^2d\lambda^2
\end{equation}
\label{thm:vempala_result}
\end{theorem}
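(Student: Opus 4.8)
The plan is to treat this as the one-block, full-gradient specialization of the RBLMC analysis carried out above (take $U_i = I^d$, $b = 1$, and $\beta = 1$ as in that section), so that essentially all of the machinery is already in place. First I would realize one step of LMC as the continuous interpolation SDE $dx_t = -\nabla f(x_0)\,dt + \sqrt{2}\,dW_t$ on $t \in [0,\lambda]$, so that $x_\lambda$ has law $\mu_{k+1}$ when $x_0 \sim \mu_k$, and $x_t = x_0 - t\nabla f(x_0) + \sqrt{2}\,W_t$. Conditioning on $x_0$ gives a Fokker--Planck equation with frozen drift; taking expectation over $x_0$ exactly as in the derivation of the RBLMC FPE yields $\partial_t \mu_t = \diverg\!\big(\nabla \mu_t + \mu_t\,\E[\nabla f(x_0)\mid x_t]\big)$, where the conditional expectation is under the joint law of $(x_0, x_t)$.

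Next I would differentiate the KL divergence along this flow, integrate by parts, and use $\nabla\log\mu_t = \nabla\log\frac{\mu_t}{\pi} - \nabla f(x_t)$ to get $\partial_t \KL{\mu_t}{\pi} = -\operatorname{FI}(\mu_t \Vert \pi) - \E\,\inner{\nabla\log\frac{\mu_t}{\pi}}{\E[\nabla f(x_0)\mid x_t] - \nabla f(x_t)}$. Cauchy--Schwarz over the joint law, Jensen to drop the inner conditional expectation, and Young's inequality with a parameter tuned to absorb a quarter of the Fisher information then give $\partial_t \KL{\mu_t}{\pi} \le -\tfrac{3}{4}\operatorname{FI}(\mu_t\Vert\pi) + \E\Vert\nabla f(x_0) - \nabla f(x_t)\Vert^2$.

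The technical heart --- and the step I expect to be the main obstacle --- is bounding the discretization error $\E\Vert\nabla f(x_0) - \nabla f(x_t)\Vert^2$ back in terms of $\operatorname{FI}(\mu_t\Vert\pi)$ with a coefficient small enough to be absorbed. $L$-smoothness gives $\E\Vert\nabla f(x_0) - \nabla f(x_t)\Vert^2 \le L^2 \E\Vert x_t - x_0\Vert^2$; expanding $x_t - x_0$, inserting $\pm\nabla f(x_t)$, and using $\E\Vert W_t\Vert^2 = td$, one moves the self-referential $\E\Vert\nabla f(x_t) - \nabla f(x_0)\Vert^2$ term to the left (legitimate since $\lambda \le \gamma/(4L^2)$ keeps $Lt$ small), leaving a bound of the form $c_1 L^2 t^2\,\E_{\mu_t}\Vert\nabla f\Vert^2 + c_2 L^2 d t$. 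This is precisely where Lemma~\ref{lemma:chewi_grad} (Lemma 16 of \cite{chewi_analysis_2021}) replaces the cruder moment estimate used in \cite{vempala_rapid_2019}: $\E_{\mu_t}\Vert\nabla f\Vert^2 \le \operatorname{FI}(\mu_t\Vert\pi) + 2dL$. The step-size constraint then makes $c_1 L^2 t^2 \le \tfrac14$, so that $\partial_t \KL{\mu_t}{\pi} \le -\tfrac12\operatorname{FI}(\mu_t\Vert\pi) + c\, L^2 d t$ for an absolute constant $c$.

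Finally I would invoke the $\gamma$-LSI in the form $\operatorname{FI}(\mu_t\Vert\pi) \ge 2\gamma\,\KL{\mu_t}{\pi}$ to obtain $\partial_t\KL{\mu_t}{\pi} \le -\gamma\,\KL{\mu_t}{\pi} + c\,L^2 d t$, multiply by the integrating factor $e^{\gamma t}$, and integrate over $[0,\lambda]$ using $\int_0^\lambda t\,e^{\gamma t}\,dt \le e^{\gamma\lambda}\lambda^2/2$; tracking constants gives $\KL{\mu_{k+1}}{\pi} \le e^{-\gamma\lambda}\KL{\mu_k}{\pi} + 3L^2 d\lambda^2$, as claimed. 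The only genuinely delicate bookkeeping is lining up the Young parameter with the $\lambda \le \gamma/(4L^2)$ budget so that the leftover Fisher-information coefficient is exactly what the LSI step consumes.
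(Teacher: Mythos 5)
Your proposal is correct and is essentially the paper's own route: the paper states this result as a citation of Theorem 2 of \cite{vempala_rapid_2019} adjusted via Lemma 16 of \cite{chewi_analysis_2021}, and its RBLMC analysis in Appendix~\ref{sec:rblmc_proof} carries out exactly your interpolation argument (frozen-drift Fokker--Planck, Young's inequality sacrificing a quarter of the Fisher information, self-referential gradient-difference bound absorbed under the step-size constraint, then the LSI and an integrating factor), of which your proof is the $b=1$, $U_i=I^d$ specialization. The constant bookkeeping you describe checks out and yields the stated $3L^2 d\lambda^2$ term.
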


For CBLMC, we additionally assume that the potential is $L$-smooth (Assumption~\ref{assmp:lipschitz}). From ~\cite{beck_convergence_2013}, this implies each block has a separate smoothness constant $L_i\leq L$. From applying Theorem~\ref{thm:vempala_result}, each block step has the descent

    \begin{equation}
        \begin{split}
            \KL{\mu^{kb}}{\pi}\leq&e^{-\gamma \lambda_i}\KL{\mu^{b-1}}{\pi} + (1-e^{-\gamma \lambda_i})\KL{\mu^{b-1}_{\overline{B_i}}}{\pi_{\overline{B_i}}} + 3L_i^2d_i\lambda_i^2. \\
        \end{split}
    \end{equation}
    When iterated for $kb$ cycles, we obtain the bound

    \begin{equation}
        \begin{split}
            \KL{\mu^{kb}}{\pi}\leq&e^{-\gamma kb\lambda_{\min}kb}\KL{\mu^{0}}{\pi} + \frac{4}{\gamma\lambda_{\min}}\sum_{i=1}L_i^2d_i\lambda_i^2. \\
        \end{split}
    \end{equation}
    where the constant terms are derived analogously to the RBLMC proof in the preceding section.

\section{Proof of Theorem ~\ref{theorem:w2_conv}}\label{appdx:stochastic}

We begin by recalling the following Lemmas from literature:
\begin{lemma}[Uniform $L^2$ bound on Langevin Diffusion (Lemma 3 of ~\cite{raginsky_non-convex_2017})]
\label{lemma:l2bound_stoch}
Let $f:\R^d\to\R$ be a differentiable function satisfying Assumption~\ref{assmp:dissapative}. For a random variable $x(t)=x(0)-\int_0^t\nabla f(x(s))ds + \int_0^t dW_s$, we have the bound
\begin{equation}
    \E[\Vert x(t)\Vert ^2]\leq \E[\Vert x(0)\Vert ^2]e^{-mt}+\frac{d/\beta+c}{m}(1-e^{-2mt}).
\end{equation}
\end{lemma}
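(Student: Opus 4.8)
The plan is to apply It\^{o}'s formula to the Lyapunov function $V(x)=\norm{x}^2$ along the diffusion, take expectations to obtain a scalar linear differential inequality for $u(t)\triangleq\E\norm{x(t)}^2$, close that inequality using the dissipativity hypothesis, and then integrate via Gr\"onwall's lemma. I will read the stated dynamics with the noise scaling $\sqrt{2\beta^{-1}}\,dW_t$ so that the constant $d/\beta$ in the claimed bound arises naturally.

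First I would write $dx(t)=-\nabla f(x(t))\,dt+\sqrt{2\beta^{-1}}\,dW_t$ and compute, by It\^{o}'s formula,
\[ d\norm{x(t)}^2 = \bigl(-2\inner{x(t)}{\nabla f(x(t))}+2\beta^{-1}d\bigr)dt + 2\sqrt{2\beta^{-1}}\,\inner{x(t)}{dW_t}, \]
where the $2\beta^{-1}d\,dt$ term is the trace of the (constant) diffusion matrix over $\R^d$.

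Next I would take expectations. The stochastic integral is only an a priori \emph{local} martingale, so to drop it I would localize along the stopping times $\tau_n=\inf\{t\ge 0:\norm{x(t)}\ge n\}$, on which the integrand $\inner{x(t)}{dW_t}$ is bounded, write the resulting identity on $[0,t\wedge\tau_n]$, and pass to the limit $n\to\infty$ using Fatou's lemma together with a crude a priori finiteness of the second moment. This yields $u'(t) = -2\,\E\inner{x(t)}{\nabla f(x(t))}+2\beta^{-1}d$, and Assumption~\ref{assmp:dissapative} (i.e. $\inner{x}{\nabla f(x)}\ge m\norm{x}^2-c$) converts this into the differential inequality $u'(t)\le -2m\,u(t)+2(c+\beta^{-1}d)$.

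Finally I would multiply by the integrating factor $e^{2mt}$, integrate from $0$ to $t$, and rearrange to get $u(t)\le u(0)e^{-2mt}+\tfrac{c+\beta^{-1}d}{m}(1-e^{-2mt})$; since $m>0$ and $t\ge 0$ we have $e^{-2mt}\le e^{-mt}$, which relaxes the first term to the form stated in the lemma. The computation itself is routine; the only genuinely delicate point is the localization/integrability argument needed to guarantee that the martingale term contributes nothing to the expectation and that the limit passes through, so that is where I would be most careful.
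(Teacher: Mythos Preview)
Your proposal is correct and follows the standard It\^{o}--dissipativity--Gr\"onwall route that is exactly how Raginsky et~al.\ prove their Lemma~3. Note that the present paper does not supply its own proof of this lemma at all: it is merely recalled from the literature in Appendix~\ref{appdx:stochastic} under ``We begin by recalling the following Lemmas from literature,'' so there is nothing further to compare against; your reading of the noise scale as $\sqrt{2\beta^{-1}}\,dW_t$ (to match the $d/\beta$ in the bound) and your care about localizing the martingale term are both appropriate.
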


\begin{lemma}[Wasserstein bound from Relative Entropy (Corollary 2.3 of ~\cite{bolley2005weighted})]
\label{lemma:weighted_tp_stoch}
    Let $\mu$, $\nu$ be two probability measures on some measurable space $X$ equipped with measurable distance $\mathscr{D}$, and let $\phi:X\to\R^+$ be a non-negative measurable function. Assume that $\exists x_0\in X$, $\alpha>0$ such that $\int e^{\alpha \mathscr{D}(x_0,x)^p}d\nu(x)$ is finite. Then
    \begin{equation}
        W_2\leq C\left[\KL{\mu}{\nu}^{1/2}+\left(\frac{\KL{\mu}{\nu}}{2}\right)^{1/4}\right]
    \end{equation}
    where
    \begin{equation}
        C\triangleq 2\inf_{x_0\in X}\left(\frac{1}{\alpha}(\frac{3}{2}+\log\int e^{\alpha \mathscr{D}(x_0,x)^p}d\nu(x))\right)^{1/p}.
    \end{equation}
\end{lemma}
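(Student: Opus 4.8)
The statement is the weighted transportation (Talagrand-type) inequality of Bolley and Villani, and the plan is to reproduce their two-step argument. \textbf{Step A} is a \emph{weighted Csisz\'ar--Kullback--Pinsker} (CKP) inequality bounding a $\phi$-weighted total variation by relative entropy; \textbf{Step B} is a coupling argument turning weighted total variation into the transport cost. Throughout I may assume $\mu\ll\nu$, since otherwise $\KL{\mu}{\nu}=\infty$ and the claim is vacuous; I write $h=d\mu/d\nu$, so that $\KL{\mu}{\nu}=\int h\log h\,d\nu$ and $\int\phi\,d|\mu-\nu|=\int\phi\,|h-1|\,d\nu$.

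For Step A I would control $\int\phi\,|h-1|\,d\nu$ using two building blocks. The first is the classical Pinsker inequality $\norm{\mu-\nu}_{\mathrm{TV}}\le(\KL{\mu}{\nu}/2)^{1/2}$, which I would derive from the pointwise estimate $3(h-1)^2\le(2h+4)(h\log h-h+1)$ followed by Cauchy--Schwarz against $\int\tfrac{2h+4}{3}\,d\nu=2$. The second is the Gibbs variational principle (entropy duality) $\int g\,d\mu\le\KL{\mu}{\nu}+\log\int e^{g}\,d\nu$, applied with $g=\phi^2$ and combined with H\"older's inequality $\int e^{s\phi^2}d\nu\le(\int e^{\phi^2}d\nu)^{s}$ to insert a free scale; this converts the $\mu$-moment $\int\phi^2\,d\mu$ into the $\nu$-exponential moment $\log\int e^{\phi^2}d\nu$. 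Splitting $\phi|h-1|=|h-1|^{1/2}\cdot(\phi|h-1|^{1/2})$ and applying Cauchy--Schwarz lets Pinsker act on the first factor --- this is the source of the fourth-root term $(\KL{\mu}{\nu}/2)^{1/4}$ --- while entropy duality controls the second. Assembling the bulk and tail contributions yields the weighted CKP bound $\int\phi\,d|\mu-\nu|\le(\tfrac32+\log\int e^{\phi^2}d\nu)^{1/2}\,[\,\KL{\mu}{\nu}^{1/2}+(\KL{\mu}{\nu}/2)^{1/4}\,]$, the constant $\tfrac32$ emerging from the Pinsker/entropy combination.

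For Step B I would pass from weighted total variation to $W_2$ by an explicit coupling. Decompose $\mu=(\mu\wedge\nu)+\mu_s$ and $\nu=(\mu\wedge\nu)+\nu_s$, where $\mu_s,\nu_s$ are the mutually singular excess parts, each of mass $\theta=\tfrac12\norm{\mu-\nu}_{\mathrm{TV}}$. Transporting the common mass $\mu\wedge\nu$ along the diagonal costs nothing, and coupling the excess by the normalized product $\theta^{-1}\mu_s\otimes\nu_s$ together with $\mathscr{D}(x,y)^p\le2^{p-1}(\mathscr{D}(x_0,x)^p+\mathscr{D}(x_0,y)^p)$ gives $W_p(\mu,\nu)^p\le2^{p-1}\int\mathscr{D}(x_0,\cdot)^p\,d|\mu-\nu|$. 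Finally I would specialize the weight to $\phi=\sqrt{\alpha}\,\mathscr{D}(x_0,\cdot)^{p/2}$, so that $\phi^2=\alpha\,\mathscr{D}(x_0,\cdot)^p$ matches the integrability hypothesis $\int e^{\alpha\mathscr{D}(x_0,x)^p}\,d\nu<\infty$; inserting the Step~A bound, taking the $p$-th root, and minimizing over $\alpha>0$ and $x_0\in X$ produces the stated constant $C=2\inf_{x_0}(\tfrac1\alpha(\tfrac32+\log\int e^{\alpha\mathscr{D}(x_0,x)^p}d\nu))^{1/p}$.

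The main obstacle is Step A: one must orchestrate the Cauchy--Schwarz split, Pinsker, and entropy duality so that \emph{exactly} the two-term combination $\KL{\mu}{\nu}^{1/2}+(\KL{\mu}{\nu}/2)^{1/4}$ survives with the sharp constant $\tfrac32$ and only a $\nu$-exponential moment remains --- a careless grouping instead leaves spurious higher powers of $\KL{\mu}{\nu}$ or an irreducible $\mu$-moment. The rest is bookkeeping: the triangle-inequality constant $2^{p-1}$, the integrability conditions validating the Gibbs and H\"older steps, and the passage to the infimum over $x_0$.
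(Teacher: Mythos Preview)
The paper does not prove this lemma at all: it is stated in Appendix~\ref{appdx:stochastic} under the heading ``We begin by recalling the following Lemmas from literature'' and is simply cited as Corollary~2.3 of Bolley--Villani (2005), then used as a black box in the proof of Lemma~\ref{lemma:w2_dist}. So there is no ``paper's own proof'' to compare against.

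Your proposal is a faithful outline of the original Bolley--Villani argument: the weighted Csisz\'ar--Kullback--Pinsker inequality (Pinsker plus entropy duality / Gibbs variational principle, with the Cauchy--Schwarz split producing the characteristic $\sqrt{\kl}+({\kl}/2)^{1/4}$ combination), followed by the standard $(\mu\wedge\nu)$-plus-excess coupling to convert weighted total variation into a transport cost. That is exactly how the cited reference proceeds. One small point: in the statement as written, the infimum in $C$ is only over $x_0$, with $\alpha$ fixed by the hypothesis, so your ``minimizing over $\alpha>0$'' slightly overstates what is claimed (though of course one \emph{can} optimize over $\alpha$ afterward). Also note the paper's statement writes $W_2$ on the left but uses a generic exponent $p$ on the right; the Bolley--Villani result is for $W_p$, and in the subsequent application the paper specializes to $p=2$, so this is a cosmetic mismatch in the restatement rather than anything you need to address.
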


In addition, we adapt the following Lemma from ~\cite{raginsky_non-convex_2017}
\begin{lemma}[Exponential $L^2$ Integrability of Block Langevin Diffusion]
\label{lemma:expint_stoch}
    Let $f:\R^d\to\R$ be a differentiable function satisfying Assumption~\ref{assmp:dissapative}, and let $x^k(t)=x(0)-\int_0^tU_k\nabla f(x(s))ds + \int_0^t U_kdW_s$ be a random variable in $\R^d$ across some number of iterations $k$, where $\sum_{i=1}^b U_i=I_d$. Suppose the initial state $x_0$ is drawn from some $\mu_0$ satisfying Assumption~\ref{assmp:init_density} and $\beta > 2/m$. Then on iteration $k$
    \begin{equation}
        \log E\left[e^{\Vert x^k_\lambda\Vert ^2}\right]\leq \kappa_0+2(c+\frac{d_{\max}}{\beta})k\lambda.
    \end{equation}
\end{lemma}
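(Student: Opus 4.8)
The plan is to adapt the exponential‑integrability argument of~\cite{raginsky_non-convex_2017} to the block setting: apply It\^o's formula to the Lyapunov function $V(x)=e^{\norm{x}^2}$ over a single block step, extract a one‑step inequality, and then iterate it across the $k$ steps. Throughout, I work conditionally on the coordinates $\overline{B}_i$, which stay frozen during a step in which block $B_i$ is evolved, so that the only randomness to control is that of the active block diffusion.

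First I would fix one block step of duration $\lambda_i$, during which $dx_t=-U_i\nabla f(x_t)\,dt+U_i\sqrt{2\beta^{-1}}\,dW_t$ so that only the coordinates in $B_i$ move. Using $\nabla V=2Vx$, $\nabla^2 V=V(4xx^\top+2I)$, and the quadratic covariation $d\langle x\rangle_t=2\beta^{-1}U_i\,dt$, It\^o's formula gives
\begin{equation}
    dV(x_t)=V(x_t)\Big[-2\inner{x_t}{U_i\nabla f(x_t)}+4\beta^{-1}\norm{U_ix_t}^2+2\beta^{-1}d_i\Big]\,dt+\text{(local martingale)},
\end{equation}
where $d_i=\dim B_i$ and $\inner{x_t}{U_i\nabla f(x_t)}=\inner{U_ix_t}{\nabla f(x_t)}$ is the inner product over the active block.

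Second, I would control the drift. Invoking the dissipativity of $f$ (Assumption~\ref{assmp:dissapative}) in the form appropriate to the frozen‑complement step gives $-2\inner{x_t}{U_i\nabla f(x_t)}\le -2m\norm{U_ix_t}^2+2c$, so the bracketed factor is at most $(4\beta^{-1}-2m)\norm{U_ix_t}^2+2c+2\beta^{-1}d_i$. The hypothesis $\beta>2/m$ forces $4\beta^{-1}-2m<0$, so the quadratic term may be discarded and the drift is bounded by $V(x_t)\,(2c+2\beta^{-1}d_i)$. A standard localization argument removes the local‑martingale term; taking expectations and applying Gr\"onwall's inequality over the step then yields
\begin{equation}
    \log\E\!\left[V(x_{t_i})\right]\le\log\E\!\left[V(x_{t_{i-1}})\right]+(2c+2\beta^{-1}d_i)\lambda_i .
\end{equation}
Iterating this one‑step bound over all $k$ block steps, bounding $c_i\le c$, $d_i\le d_{\max}$, and $\lambda_i\le\lambda$, and using Assumption~\ref{assmp:init_density} to identify $\log\E[V(x_0)]=\kappa_0$, gives $\log\E[e^{\norm{x^k_\lambda}^2}]\le\kappa_0+2(c+d_{\max}/\beta)k\lambda$, the claimed bound.

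I expect the main obstacle to be the quadratic term $4\beta^{-1}\norm{U_ix_t}^2$ in the It\^o drift: for the Lyapunov function $e^{\norm{x}^2}$ this term can drive the expectation to $+\infty$ in finite time (as already happens for $\E[e^{\norm{W_t}^2}]$ once $t$ is large enough), so a genuine cancellation against the drift is required — this is precisely where dissipativity and the threshold $\beta>2/m$ are essential, and one must take care that dissipativity is used in a form compatible with the block having its complement held fixed. The remaining difficulty is purely technical: $V$ is unbounded, so the It\^o correction term is only a priori a local martingale and must be handled by localization together with monotone convergence before Gr\"onwall can be applied; this closes only because the $\beta>2/m$ condition keeps the one‑step bound finite.
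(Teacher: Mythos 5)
Your proposal is correct and follows essentially the same route as the paper's proof: Itô's formula applied to $e^{\Vert x\Vert^2}$, the block drift term $-2\inner{x}{U_i\nabla f(x)}+4\beta^{-1}\Vert U_ix\Vert^2$ bounded by $2c$ via dissipativity and $\beta>2/m$, then Gr\"onwall and iteration over the $k$ steps. Your explicit attention to localizing the local martingale is a point the paper glosses over (it simply cites the zero-mean martingale property), but the argument is otherwise identical.
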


\begin{proof}
    Define $G(x^k_t)\triangleq e^{\Vert x^k_t\Vert ^2}$. By It\^{o}'s lemma, on iteration $k$ of $BLD$ we have
    \begin{equation}
    \begin{split}
        dG(x^k_t) =& -2\inner{x^k_t}{U_k\nabla f(x^k_t)}e^{\Vert x^k_t\Vert ^2}dt + \frac{2\beta^{-1}}{2}\mathrm{Tr}\left[U_k^2(2e^{\Vert x^k_t\Vert ^2}I+4xx^Te^{\Vert x^k_t\Vert ^2})\right]dt\\
        & + 2\sqrt{2\beta}\inner{x^k_t}{U_k}e^{\Vert x^k_t\Vert ^2}dW_t\\
        =&-2\inner{x^k_t}{U_k\nabla f(x^k_t)}G(t)dt \\&+ 2d_k\beta^{-1}G(x^k_t)dt + 4\Vert U_kx^k_t\Vert ^2\beta^{-1}G(x^k_t)dt + 2\sqrt{2\beta}\inner{x^k_t}{U_k}G(x^k_t)dW_t.
    \end{split}
    \end{equation}
    Integrating and summing across $k$ steps, we obtain
    \begin{equation}
        \begin{split}
            G(x^k_\lambda) =& G(x^0) + 2\sum_{i=1}^k \Biggl[\int_0^\lambda \left[-\inner{x^k_t}{U_k\nabla f(x^k_t)} + \beta^{-1}\Vert U_kx^k_t\Vert ^2)\right]G(x^k_t)dt \\&+ \int_0^\lambda  d_k\beta^{-1}G(x^k_t)dt + \int_0^\lambda\sqrt{2\beta}\inner{x^k_t}{U_k}G(x^k_t)dW_t\Biggr].
        \end{split}
    \end{equation}
    Applying the dissapativity condition and assuming $\beta > 2/m$, we can bound the first integrand as
    \begin{equation}
        \begin{split}
            -\inner{x^k_t}{U_k\nabla f(x^k_t)} + 2\beta^{-1}\Vert U_kx^k_t\Vert ^2\leq (2\beta^{-1}-m)\sum_{j\in B_i}(x^k_{t,j})^2 + c\leq c
        \end{split}
    \end{equation}
    which results in
    \begin{equation}
        G(x^k_\lambda) = G(x^0) + \sum_{i=1}^k 2(c +d_k\beta^{-1})\int^{\lambda}_0G(x^k_t)dt  +\int_0^\lambda2\sqrt{2\beta}G(x^k_t)\inner{x^k_t}{U_kdW_t}.
    \end{equation}
    As stated in~\cite{raginsky_non-convex_2017}, each It\^{o} integral $\int_0^\lambda2\sqrt{2\beta}G(x^k_t)\inner{x^k_t}{U_kdW_t}$ is a zero-mean Martingale. Taking expectations over both sides and applying Assumption~\ref{assmp:init_density} yields
    \begin{equation}
    \begin{split}
        \E[G(x^k_\lambda)] = \E[G(x^0)] + \sum_{i=1}^k 2(c +d_k\beta^{-1})\int^{\lambda}_0\E[G(x^k_t)]dt\\
        \leq e^{\kappa_0} + 2(c +d_{\max}\beta^{-1})\int^{k\lambda}_0\E[G(x^k_t)]dt.\\
    \end{split}
    \end{equation}
    where the integrability of $\E[G(x^k_t)]$ across block steps follows from the continuity of $x^k_t$ across each block step $k$. By Gr\"{o}nwall's inequality, we have
    \begin{equation}
        \begin{split}
            \E[G(x^k_\lambda)]\leq e^{\kappa_0}e^{2(c +d_{\max}\beta^{-1})k\lambda}\\
            \log\E[G(x^k_\lambda)]\leq \kappa_0 + 2(c +d_{\max}\beta^{-1})k\lambda\\
        \end{split}
    \end{equation}
\end{proof}

Theorem~\ref{theorem:w2_conv} follows as a consequence of Lemma~\ref{lemma:w2_dist} by applying the Otto-Villani theorem coupled with the triangle inequality for $W_2$ as stated in the main text.
\subsection{Proof of Lemma~\ref{lemma:w2_dist}}
\begin{proof}
Let $\mu_t^k$ and $\nu_t^k$ be the laws of SGBLD and BLD at times $t$ and iteration $k$ respectively with iterates $x^k(s)$, $y^k(s)$. We assume that each process selects the same variable blocks at each iteration, i.e. $B^k_x = B^k_{y}$.

Using the Girsanov formula, we can express the Radon-Nikodym derivative $\frac{d\nu_t^k}{d\mu_t^k}$ as
\begin{equation}
\begin{split}
    \frac{d\nu_t^k}{d\mu_t^k}=&\exp\biggl[\frac{\beta}{2}\int^t_0\inner{U_k\nabla f(y^k(s)) - U_kg_z(y^k(t))}{-U_k\nabla f(y^k(s))ds+\sqrt{2\beta^{-1}}U_kdW_s}\\&+\frac{\beta}{4}\int^t_0\inner{U_k\nabla f(y^k(s)) - U_kg_z(y^k(s))}{U_k\nabla f(y^k(s)) + U_kg_z(y^k(s))} dt\biggr]\\
    =&\exp\left[\sqrt{\frac{\beta}{2}}\int_0^t\inner{U_k\nabla f(y^k(s))-U_kg_z(y^k(s))}{dW_s}-\frac{\beta}{4}\int^t_0\Vert U_k\nabla f(y^k(s))-U_kg_z(y^k(s))\Vert ^2ds\right].
\end{split}
\end{equation}
Setting $t=\lambda_k$, we can express $\KL{\mu^k_t}{\nu^k_t}$ as
\begin{equation}
    \KL{\mu^k_t}{\nu^k}=-\int d\mu^k_t\log \frac{d\nu_t^k}{d\mu_t^k}=\sum_{i=1}^k\E\left[\frac{\beta}{4}\int^{\lambda}_0\Vert U_k\nabla f(y^k(s))-U_kg_z(y^k(s))\Vert ^2ds\right].
\end{equation}
since $\int_0^t\inner{U_k\nabla f(y^k(s))-U_kg_z(y^k(s))}{dW_s}$ is a 0-mean Martingale.

Using Assumption~\ref{assmp:grad_var} and Lemma~\ref{lemma:l2bound_stoch}, we obtain
\begin{equation}
\begin{split}
    \KL{\mu^k_t}{\nu^k_t}&=\sum_{i=1}^k\E\left[\frac{\beta}{4}\int^{\lambda}_0\Vert U_k\nabla f(y^i(s))-U_kg_z(y^i(s))\Vert ^2ds\right]\\
    &\leq\sum_{i=1}^k\left[\frac{\beta}{4}\int^{\lambda}_0M^2\E\Vert y^i(s)\Vert ^2+B^2ds\right]\\
    &\leq\sum_{i=1}^k\left[\frac{\beta}{4}\int^{\lambda}_{0}M^2(e^{-ms}\E\Vert y^i(0)\Vert ^2+\frac{d_i/\beta+c}{m}(1-e^{-ms}))+B^2ds\right].\\
\end{split}
\end{equation}
where we have applied Lemma~\ref{lemma:l2bound_stoch} in the last line. Integrating, we obtain
\begin{equation}
    \KL{\mu^k_t}{\nu^k_t}\leq\sum_{i=1}^k\frac{\beta}{4}\E\left[\frac{M^2}{m}(1-e^{-m\lambda})\E\Vert y^i(0)\Vert ^2+\frac{M^2(d_i/\beta+c)}{m^2}(mt+e^{-m\lambda}-1))+B^2\lambda\right].
\end{equation}

Expanding $e^{-m\lambda}$ and leveraging that  $m\lambda\geq 1-e^{-m\lambda}\geq m\lambda-\frac{m^2\lambda^2}{2}$
\begin{equation}
    \KL{\mu^k_\lambda}{\nu^k_\lambda}\leq\sum_{i=1}^k\frac{\beta M^2\lambda}{4}\E\Vert y^i(0)\Vert ^2+\frac{M^2\lambda^2(d_i+c\beta)}{4}+\frac{\beta B^2t}{4}.
\end{equation}

By repeatedly expanding Lemma~\ref{lemma:l2bound_stoch}, we obtain
\begin{equation}
\begin{split}
    \KL{\mu^k_t}{\nu^k_t}&\leq\sum_{i=0}^{k-1}\frac{M^2\beta\lambda}{4}\kappa_0+e^{-m(i-1)\lambda}\frac{M^2\lambda^2(d_i+\beta c)}{8}+\frac{M^2\lambda^2(d_i+\beta c)}{8}+\frac{\beta B^2\lambda k}{4}\\
    &\leq\frac{M^2\beta\lambda k}{4}\kappa_0+\frac{M^2\lambda^2(d_{\max}+\beta c)k}{4}+\frac{\beta B^2\lambda k}{4}\\
    &\triangleq (C_1 + C_2\lambda)\lambda k.
\end{split}
\end{equation}
where we have defined for convenience
\begin{equation}
    C_1\triangleq \frac{M^2\beta\kappa_0}{4} + \frac{\beta B^2}{4}
\end{equation}
and
\begin{equation}
    C_2\triangleq \frac{M^2( d_{\max}+\beta c)}{4}
\end{equation}

By Lemma~\ref{lemma:weighted_tp_stoch}, we can bound $W_2^2(\mu^k_t,\nu^k_t)$ as
    \begin{equation}
    \begin{split}
        W_2^2(\mu^k_t,\nu^k_t)\leq& 4C^2\left[\KL{\mu}{\nu}^{1/2}+\left(\frac{\KL{\mu}{\nu}}{2}\right)^{1/4}\right]^2.\\
    \end{split}
    \end{equation}

Setting $\alpha=1$, $d(x)=\Vert x\Vert ^{1/2}$, and $p=1/2$, we obtain from Lemma 11
\begin{equation}
    4C^2\leq (12+4\kappa_0+8(2c+\frac{d_{\max}}{\beta})k\lambda).
\end{equation}
    Note that for any $a\geq 0$, we have $(\sqrt{a}+(\frac{a}{2})^{1/4})^2\leq 2a+2\sqrt{a}$, since
    \begin{equation}
        \begin{split}
            (\sqrt{a}+(\frac{a}{2})^{1/4})^2=a+2^{3/4}a^{3/4}+\frac{a^{1/2}}{2^{1/2}}=a+(2^{1/4}a^{1/4})(2^{1/2}a^{1/2})+\frac{a^{1/2}}{2^{1/2}}.
        \end{split}
    \end{equation}
By Young's inequality, $(2^{1/4}a^{1/4})(2^{1/2}a^{1/2})\leq \frac{\sqrt{a}}{2^{1/2}}+a$, hence
    \begin{equation}
        \begin{split}
            (\sqrt{a}+(\frac{a}{2})^{1/4})^2=a+(2^{1/4}a^{1/4})(2^{1/2}a^{1/2})+\frac{a^{1/2}}{2^{1/2}}\leq 2a+\frac{2\sqrt{a}}{\sqrt{2}}\leq 2a+2\sqrt{a}.
        \end{split}
    \end{equation}

plugging in Lemma~\ref{lemma:expint_stoch}, and assuming $k\lambda \geq 1$, $k> \lambda$ we have
\begin{equation}\label{eqn:w2_stoch_bound}
    \begin{split}
    W_2^2(\mu^k_t&,\nu^k_t)\leq 2C^2\left[\KL{\mu}{\nu}+\sqrt{\KL{\mu}{\nu}}\right]
    \end{split}
\end{equation}
\begin{equation}
    \begin{split}
    \leq& (12+8(\kappa_0+(2c+d_{\max}/\beta)))\biggl[(C_1 + C_2\lambda)k\lambda +\sqrt{(C_1 + C_2\lambda)}k\lambda\biggr](k\lambda)\\
    \leq& (12+8(\kappa_0+(2c+d_{\max}/\beta)))\biggl[( C_2+\sqrt{C_2})\sqrt{\lambda k} +(C_1 + \sqrt{C_1)}\biggr](k\lambda)^2\\
    =& C_0^2\biggl[( C_2+\sqrt{C_2})\sqrt{\lambda k} +(C_1 + \sqrt{C_1)}\biggr](k\lambda)^2.\\
    \end{split}
\end{equation}
We thereby obtaining Lemma~\ref{lemma:w2_dist} with:
\begin{equation}
    \begin{split}
        C_0&\triangleq (12+8(\kappa_0+(2c+d_{\max}/\beta))),\\
        C_1&\triangleq \frac{M^2\beta\kappa_0}{4} + \frac{\beta B^2}{4},\\
    C_2&\triangleq \frac{M^2( d_{\max}+\beta c)}{4}.\\
    \end{split}
\end{equation}

\end{proof}
\subsection{Constants in Expected Function Gap Bounds}
We start by recalling the Lemma from~\cite{polyanskiy_wasserstein_2016}:
\begin{lemma}[Wasserstein Continuity for Quadratic-Growth Potentials]
Let $\mu$, $\pi$ be probability distributions with finite second moments and let $f:\R^d\to\R^+$ be a continuously differentiable function satisfying $\Vert \nabla f(x)\Vert ^2\leq c_1\Vert x\Vert ^2 + c_2$. Then we have
\begin{equation}
    \left |\int f(x)d\mu(x)-\int f(x)d\pi(x)\right|\leq (c_1\sigma+c_2)W_2(\mu, \pi) 
\end{equation}
where $\sigma=\sqrt{\max[\E_\mu[\Vert x\Vert ^2],\; \E_{\pi}[\Vert x\Vert ^2]]}$.

\end{lemma}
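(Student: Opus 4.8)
The plan is to express the functional gap $\int f\,d\mu-\int f\,d\pi$ as an expectation over an optimal transport plan and to control it pointwise by the fundamental theorem of calculus along straight-line interpolants. Since $\mu$ and $\pi$ have finite second moments on $\R^d$, a coupling $\nu\in\mathcal{C}(\mu,\pi)$ attaining $W_2(\mu,\pi)$ exists; I would fix such a $\nu$, so that $\E_\nu[\norm{x-y}^2]^{1/2}=W_2(\mu,\pi)$. Using that $f$ is $C^1$, for each pair $(x,y)$ and with $z_t\triangleq(1-t)y+tx$ I write
\begin{equation}
f(x)-f(y)=\int_0^1\inner{\nabla f(z_t)}{x-y}\,dt,
\end{equation}
so that $\lvert f(x)-f(y)\rvert\le\bigl(\int_0^1\norm{\nabla f(z_t)}\,dt\bigr)\norm{x-y}$ by Cauchy--Schwarz in $\R^d$.

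Next I would integrate against $\nu$ and split the resulting expectation with the Cauchy--Schwarz inequality in $L^2(\nu)$:
\begin{equation}
\left\lvert\int f\,d\mu-\int f\,d\pi\right\rvert=\bigl\lvert\E_\nu[f(x)-f(y)]\bigr\rvert\le\left\lVert\int_0^1\norm{\nabla f(z_t)}\,dt\right\rVert_{L^2(\nu)}\,\E_\nu[\norm{x-y}^2]^{1/2}.
\end{equation}
The second factor is exactly $W_2(\mu,\pi)$ by the choice of $\nu$. To handle the first factor I would apply Minkowski's integral inequality to move the $L^2(\nu)$ norm inside the $t$-integral, reducing the problem to a uniform-in-$t$ bound on $\bigl\lVert\norm{\nabla f(z_t)}\bigr\rVert_{L^2(\nu)}$.

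The core estimate is the interpolation of the second moment along the transport geodesic. Using convexity of $z\mapsto\norm{z}^2$ I would bound $\norm{z_t}^2\le(1-t)\norm{y}^2+t\norm{x}^2$ pointwise, hence
\begin{equation}
\E_\nu[\norm{z_t}^2]\le(1-t)\,\E_\pi[\norm{y}^2]+t\,\E_\mu[\norm{x}^2]\le\sigma^2\qquad\text{for every }t\in[0,1],
\end{equation}
where $\sigma^2=\max\{\E_\mu[\norm{x}^2],\E_\pi[\norm{x}^2]\}$. Applying the growth hypothesis (in the operative form $\norm{\nabla f(z)}\le c_1\norm{z}+c_2$) together with the triangle inequality in $L^2(\nu)$ then yields $\bigl\lVert\norm{\nabla f(z_t)}\bigr\rVert_{L^2(\nu)}\le c_1\sigma+c_2$ uniformly in $t$, and combining with the transport factor gives the claimed bound.

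The main obstacle I anticipate is the uniform control of the gradient along the \emph{entire} interpolation path rather than just at the endpoints: this requires both the geodesic second-moment bound above and a careful interchange of the $t$-integral with the $L^2(\nu)$ norm via Minkowski's integral inequality, which in turn relies on the finiteness of the second moments and the growth bound to guarantee joint integrability of the integrand over $[0,1]\times(\R^d\times\R^d)$. Once these are in place, the constant $c_1\sigma+c_2$ emerges directly from the growth condition without further loss, and the absolute value on the left follows by running the same argument with $-f$.
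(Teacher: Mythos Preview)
The paper does not give its own proof of this lemma; it is stated there only as a recalled result from \cite{polyanskiy_wasserstein_2016} (and used in the form of Lemma~6 of \cite{raginsky_non-convex_2017}). Your argument is correct and is in fact the standard proof in those references: pass to an optimal $W_2$ coupling, write $f(x)-f(y)$ via the fundamental theorem of calculus along the segment, apply Cauchy--Schwarz in $\R^d$ and then in $L^2(\nu)$, and control the gradient along the interpolant using the convexity bound $\E_\nu[\norm{z_t}^2]\le\sigma^2$. You were also right to work with the linear growth hypothesis $\norm{\nabla f(z)}\le c_1\norm{z}+c_2$; the squared form written in the paper's statement is a transcription slip, since with $\norm{\nabla f}^2\le c_1\norm{x}^2+c_2$ the conclusion would carry constants $\sqrt{c_1}\sigma+\sqrt{c_2}$ rather than $c_1\sigma+c_2$.
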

~\cite{raginsky_non-convex_2017} bound the constant $\sigma^2=\max{\E_{\mu^k}[x^2], \E_{\pi}[x^2]}$ using an unbiased oracle. As discussed in the main text, DXs have fixed device variation from analog errors, precluding unbiased estimation. However, DX errors take the form of perturbations in the underlying function, i.e. the target function characteristics are intact. For instance, DXs with quadratic potential targets~\citep{aifer_thermodynamic_2023,song_ds-gl_2024} are still optmizing/sampling quadratic functions. Accordingly, Assumptions~\ref{assmp:lipschitz} and ~\ref{assmp:dissapative}, that the DX gradient retains both Lipschitz continuity and dissipativity, are reasonable. Assuming $g_\delta$ is $(\mathfrak{m},\mathfrak{c})$-dissipative, we have from Lemma 3 of ~\cite{raginsky_non-convex_2017}:
\begin{equation}
    \Vert x(t)\Vert ^2\leq \kappa_0+\frac{\mathfrak{c}+d/\beta}{\mathfrak{m}}.
\end{equation}
Then
\begin{equation}
    \sigma^2=\kappa_0+\max\left[ \frac{c+d/\beta}{m}, \frac{\mathfrak{c}+d/\beta}{\mathfrak{m}}\right].
\end{equation}

\end{document}